\definecolor{gray1}{rgb}{0.8,0.8,0.8}
\definecolor{gray2}{rgb}{0.95,0.95,0.95}
\newtheorem{Th}{Theorem}[subsection]
\newtheorem{Lem}[Th]{Lemma}
\newcommand{\RE}{\,{\rm Re}}
\newcommand{\argmin}{\mathop{\rm argmin}}
\newcommand{\Shrink}{\mathop{\rm Shrink}}
\newcommand{\CST}{\mathop{\rm CST}}
\newcommand{\cC}{{\mathcal C}}
\newcommand{\cF}{{\mathcal F}}
\newcommand{\cL}{{\mathcal L}}
\newcommand{\cH}{{\mathcal H}}
\newcommand{\cT}{{\mathcal T}}
\newcommand{\Bh}{{\mathbf h}}
\newcommand{\Bb}{{\mathbf b}}
\newcommand{\Bf}{{\mathbf f}}
\newcommand{\Bu}{{\mathbf u}}
\newcommand{\Br}{{\mathbf r}}
\newcommand{\Bv}{{\mathbf v}}
\newcommand{\Bp}{{\mathbf p}}
\newcommand{\Bw}{{\mathbf w}}
\newcommand{\Bg}{{\mathbf g}}
\newcommand{\Bq}{{\mathbf q}}
\newcommand{\Beps}{{\boldsymbol \epsilon}}
\newcommand{\Bome}{{\boldsymbol \omega}}
\newcommand{\Bk}{{\boldsymbol k}}
\newcommand{\Bx}{{\boldsymbol x}}
\newcommand{\BDm}{{\mathbf{D_1}}}            
\newcommand{\BDmT}{{\mathbf{D}_{\mathbf 1}^{\text T}}}
\newcommand{\BDn}{{\mathbf{D_2}}}
\newcommand{\BDnT}{{\mathbf{D}_{\mathbf 2}^{\text T}}}
\newcommand{\Bz}{{\boldsymbol z}}
\newtheorem{thm}{Theorem}[section]
\newtheorem{prop}[thm]{Proposition}
\begin{document}

\title{Multiphase Segmentation For Simultaneously Homogeneous and Textural Images}

\author{Duy Hoang Thai \thanks{Statistical and Applied Mathematical Science Institute (SAMSI), USA.} \thanks{Corresponding Author:  Email: dhthai@samsi.info}
\ and 
Lucas Mentch\footnotemark[1]
}
\maketitle

 \begin{abstract}
 Segmentation remains an important problem in image processing.  For homogeneous (piecewise smooth) images, a number of important models have been developed and refined over the past several decades.  However, these models often fail when applied to the substantially larger class of natural images that simultaneously contain regions of both texture and homogeneity.  This work introduces a bi-level constrained minimization model for simultaneous multiphase segmentation of images containing both homogeneous and textural regions.  
 We develop novel norms defined in different functional Banach spaces for the segmentation which results in a non-convex minimization.
 Finally, we develop a generalized notion of segmentation delving into approximation theory and demonstrating that a more refined decomposition of these images results in multiple meaningful components.  Both theoretical results and demonstrations on natural images are provided.
 \end{abstract}

\hspace{4.5mm} \emph{Keywords:  Image Decomposition, Variational Calculus, Image Denoising,} \\
\hspace*{25.25mm} \emph{Feature Extraction, Image Segmentation}

\section{Introduction}
\label{sec:Introduction}

Image segmentation remains at the forefront of issues in computer vision and image processing and an abundance of approaches have been developed to solve a wide range of problems; see, for example, \cite{DigitalImageProcessing2002,Szeliski2011}.  The goal of image segmentation is to decompose the image domain into a montage of meaningful components.  This has lead to breakthroughs in a number of research areas such as medical imaging \cite{IntroductiontoBiomedicalImaging2003}, astronomical imaging \cite{AstronomicalImageandDataAnalysis2006}, and biometric recognition \cite{IntroductiontoBiometrics2011,ThaiHuckemannGottschlich2016,ThaiGottschlich2016G3PD}.

Segmentation methods can be broadly characterized by the class of target images for which they are intended, either homogeneous (piecewise smooth) or textural.  Many such methods have been suggested, including approaches based on the intensity of pixels \cite{Otsu1979, SahooWilkinsYeager1997, AlbuquerqueEsquefMelloAlbuquerque2004}
and others based on curve evolution\cite{ChanVese2001,BressonEsedogluVandergheynstThiranOsher2007,ChanEsedogluNikolova2012,LieLysakerTai2006}.  For homogeneous images in particular, the classical approach to segmentation is based on active contours \cite{KassWitkinTerzopoulos1988}.  Given an image $f$ on a bounded domain $\Omega \subset \mathbb{R}^2$, contours are driven to object boundaries by internal and external forces in the functional
\begin{equation} \label{eq:KassWitkinTerzopoulos}
 \inf_{C(s)} \left\{ \alpha \int_{0}^1 \abs{C'(s)}^2 ds + \beta \int_{0}^1 \abs{C''(s)} ds - \lambda \int_{0}^1 \abs{\nabla f(C(s))}^2 ds \right\}
\end{equation}
with a curve $C(s) : [0, 1] \rightarrow \mathbb R^2$
and positive parameters $\alpha \,, \beta$ and $\lambda$.

Under the classical model $f(\Bx) = u(\Bx) + \epsilon(\Bx)$ with $\Bx \in \Omega$, Mumford and Shah \cite{MumfordShah1989} proposed a solution by minimizing the energy functional
\begin{equation} \label{eq:PiecewiseSmoothMS}
 \inf_{u, C} \left\{ \int_\Omega \big( f(\Bx) - u(\Bx) \big)^2 d\Bx ~+~ \nu \int_{\Omega \backslash C} \abs{\nabla u(\Bx)}^2 d\Bx + \mu \abs{C}  \right\}.
\end{equation}
However, this piecewise smooth Mumford-Shah model is NP-hard due to the Hausdorff 1-dimensional measure
$\cH^1(\mathbb R^2)$. 
A simplified version for image segmentation when $f$ is assumed to be piecewise constant can be written as 
\begin{equation} \label{eq:PiecewiseConstantMS:indicator}
 \inf_{[c_n]_{n=1}^N, [\Omega_n]_{n=1}^N} \left\{ 
 \sum_{n=1}^N \int_{\Omega} \big( f(\Bx) - c_n \big)^2 \mathbf 1_{\Omega_n}(\Bx) d\Bx ~+~ 
 \frac{\mu}{2} \sum_{n=1}^N \int_\Omega \abs{\nabla \mathbf 1_{\Omega_n}(\Bx)} d \Bx \right\} \,.
\end{equation}
which closely resembles the Potts model \cite{Potts1952} developed decades earlier.
Rudin et al.\ \cite{RudinOsherFatemi1992} proposed an alternative, more computationally efficient version of the model in (\ref{eq:PiecewiseSmoothMS}) that preserves sharp edges in the restored image.  These advantages led to numerous extensions including examination in different functional spaces, 
\cite{AujolGilboaChanOsher2006, AujolAubertFeraudChambolle2005, AujolGilboa2006, BuadesLeMorelVese2010, VeseOsher2003, AubertVese1997}, versions involving higher-order derivatives \cite{ChanMarquinaMulet2000, LysakerLundervoldTai, RahmanTaiOsher2007, HahnWuTai2012}, 
mean curvature \cite{ZhuChan2012}, Euler's elastica \cite{TaiHahnChung2011, ZhuTaiChan2013}, total variation of the first and second order derivatives \cite{PapafitsorosSchoenlieb2014}, and higher-order PDEs for diffusion solved by directional operator splitting schemes \cite{CalatroniDueringSchoenlieb2014}.  Various techniques have been proposed for solving the convex optimization including Chambolle's projection \cite{Chambolle2004}, the splitting Bregman method \cite{GoldsteinOsher2009}, and
iterative shrinkage/thresholding (IST) algorithms \cite{DaubechiesDefriseMol2004, BeckTeboulle2009, DiasFigueiredo2007}. 
In 2010, Wu et al.\ \cite{WuTai2010} proved the equivalence between the augmented Lagrangian method 
(ALM), dual methods, and the splitting Bregman method. 

Letting $p_n(\Bx)$ denote the indicator function $\mathbf 1_{\Omega_n}(\Bx)$ with $\Bx \in \Omega$, (\ref{eq:PiecewiseConstantMS:indicator}) can be rewritten as the non-convex constrained minimization
\begin{equation} \label{eq:PiecewiseConstantMS:indicator:nonconvex}		
 \min_{\vec{c}, \vec{p}} \left\{
 \frac{\mu}{2} \sum_{n=1}^N \norm{\nabla p_n}_{L_1} + \sum_{n=1}^N \Big\langle (f - c_n)^2 \,, p_n \Big\rangle_{L_2} 
 \text{s.t.} 
 \sum_{n=1}^N p_n(\Bx) = 1 \,, p_n(\Bx) \in \{0, 1\}
 \right\}
\end{equation}
where $\vec{c} = [c_n]_{n=1}^N \,, \vec{p} = [p_n]_{n=1}^N$.  
Note that when $N=2$, this becomes the celebrated Chan and Vese model \cite{ChanVese2001}.
Brown et al.\ \cite{BrownChanBresson2010, BrownChanBresson2012} provide a convex relaxation of (\ref{eq:PiecewiseConstantMS:indicator:nonconvex})
by relaxing the binary set to $p_n(\Bx) \in [0, 1]$ and 
Bae et al. \cite{BaeYuanTai2010} solve this relaxed version via a smoothed primal-dual method; 
see \cite{GuWangTai2012, BaeLellmannTai2005, GuWangXiongChengHuangZhou2013, GuXiongWangChengHuangZhou2014} for details.
The advantage of the multiphase segmentation is illustrated in Figure \ref{fig:MultiphaseOver2phase} using the smoothed primal-dual method in \cite{BaeYuanTai2010} to recover an object under a spectrum of illumination.

\begin{figure}
\begin{center}
 \includegraphics[width=1\textwidth]{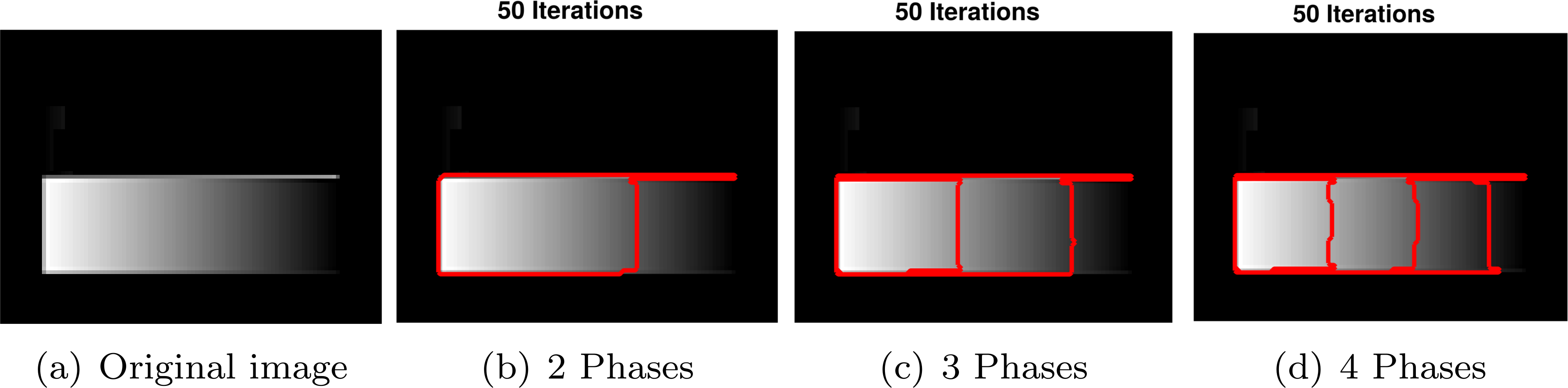}
 
 \caption{The original image (a) is segmented by \cite{BaeYuanTai2010} with two phase (b), three phase (c) and four phase  
               segmentation (with the same notation for parameters $n = 4 \,, s = 0.001 \,, \delta = 0.1$).
               Using multiple phases allows us to recover portions of the image with different illumination (gradient change).
             }
 \label{fig:MultiphaseOver2phase}
\end{center}
\end{figure}


Texture segmentation and analysis remains a challenging problem due to its oscillatory nature.  
Proposed methods include those based on texture descriptors \cite{SagivSochenZeevi2006, HouhouThiranBresson2009}, histogram metrics \cite{NiBressonChanEsedoglu2009}, 
and finding other meaningful features in an observed image for classification \cite{Unser1995}.  Among the most popular approaches is the vector-valued Chan-Vese model for texture segmentation with a Gabor filter \cite{ChanSandbergVese2000} whose convex relaxed version is defined in \cite{BrownChanBresson2010,BrownChanBresson2009}.  
This can be seen as a generalized version of the two-phase piecewise constant Mumford and Shah model for a vector valued image $\vec{f} = \big[ f_m \big]_{m=1}^M$ and constant vector 
$\vec{c}_1 = \big[ c_{1m} \big]_{m=1}^M \,, \vec{c}_2 = \big[ c_{2m} \big]_{m=1}^M$. 
The resulting minimization becomes convex by relaxing the binary constraint to $p(\Bx) \in [0 \,, 1]$ \cite{BrownChanBresson2010}.

Though these techniques have seen much success in their respective domains, many natural images like fingerprints and stem cell imaging contain both homogeneous and textural regions and it is important to define a technique to capture the entirety of this information.  The Mumford-Shah model fails in this larger class of images due to the absence of measurement for texture; see Figure \ref{fig:Barbara:MSsegmentation} for such an example where textural regions appear in each phase. 
\begin{figure}
\begin{center}  
\includegraphics[height=0.179\textheight]{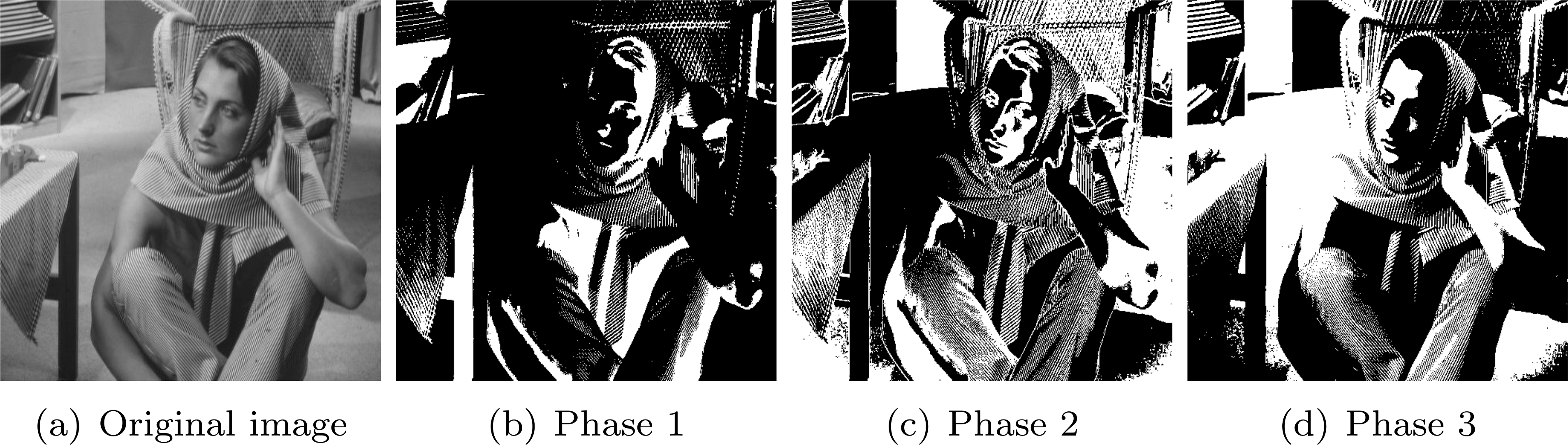}
 
 \caption{The original image (a) containing both texture and homogeneity is segmented into three phases by 
               the Mumford and Shah (MS) model with 
               $\sigma = 0, N = 3, s = 0.005, \xi = 0.001, \lambda = 10^{-3}, \tau = 0.1, \text{Iteration} = 20$.
               Note that both homogeneous and textural information appear in all three phases (b)-(d).
             }
\label{fig:Barbara:MSsegmentation}
\end{center}
\end{figure}


In this work, we provide a method for multiphase segmentation of images that simultaneously contain regions of both homogeneity and texture.  An attempt at this kind of segmentation was provided in \cite{LiuTaiHuangHuan2011} but importantly, our work here can be viewed as a decomposition of the original image which approximates the image in functional space instead of using harmonic analysis.  This approach to the inverse problem allows us to obtain a piecewise constant component as well as sparse directional information; Figure \ref{fig:fingerprint:approximation} shows a preview of results obtained using the bilevel SHT method outlined in Section \ref{sec:BiLevelMinimizationScheme}.

Following \cite{ThaiGottschlich2016G3PD, VeseOsher2003, AujolChambolle2005, ThaiGottschlich2016DG3PD, Gilles2012}, 
we adopt the idea of the discrete directional $\text{G}_S$-norm to measure texture in several directions  
and the dual of a generalized Besov space in the curvelet domain $\cC$ 
\cite{ThaiGottschlich2016DG3PD, CandesDonoho2004, CandesDemanetDonohoYing2006, StarckDonohoCandes2003, MaPlonka2010}
to measure the residual. This approach is particularly useful for many natural images such as fingerprints in which texture 
appears in many directions and can easily be adapted for the shearlet, contourlet, steerable wavelet or 2D empirical transforms 
\cite{ShearletsBook,DoVetterli2005,UnserVandeville2010, GillesTranOsher2014}.  
Because of the curvelet transform, the residual can be either independent or correlated and need not follow a Gaussian distribution.  Since our minimization involves the $\text{G}_S$-norm, we propose two alternative methods based on the two primary approaches to handling the $\text{G}_S$-norm:  a multiphase SHT method based on the approach of Aujol and Chambolle \cite{AujolChambolle2005} and a bilevel SHT method based on the approach of Vese and Osher \cite{VeseOsher2003}.

\begin{figure}
\begin{center}  
 \includegraphics[height=0.345\textheight]{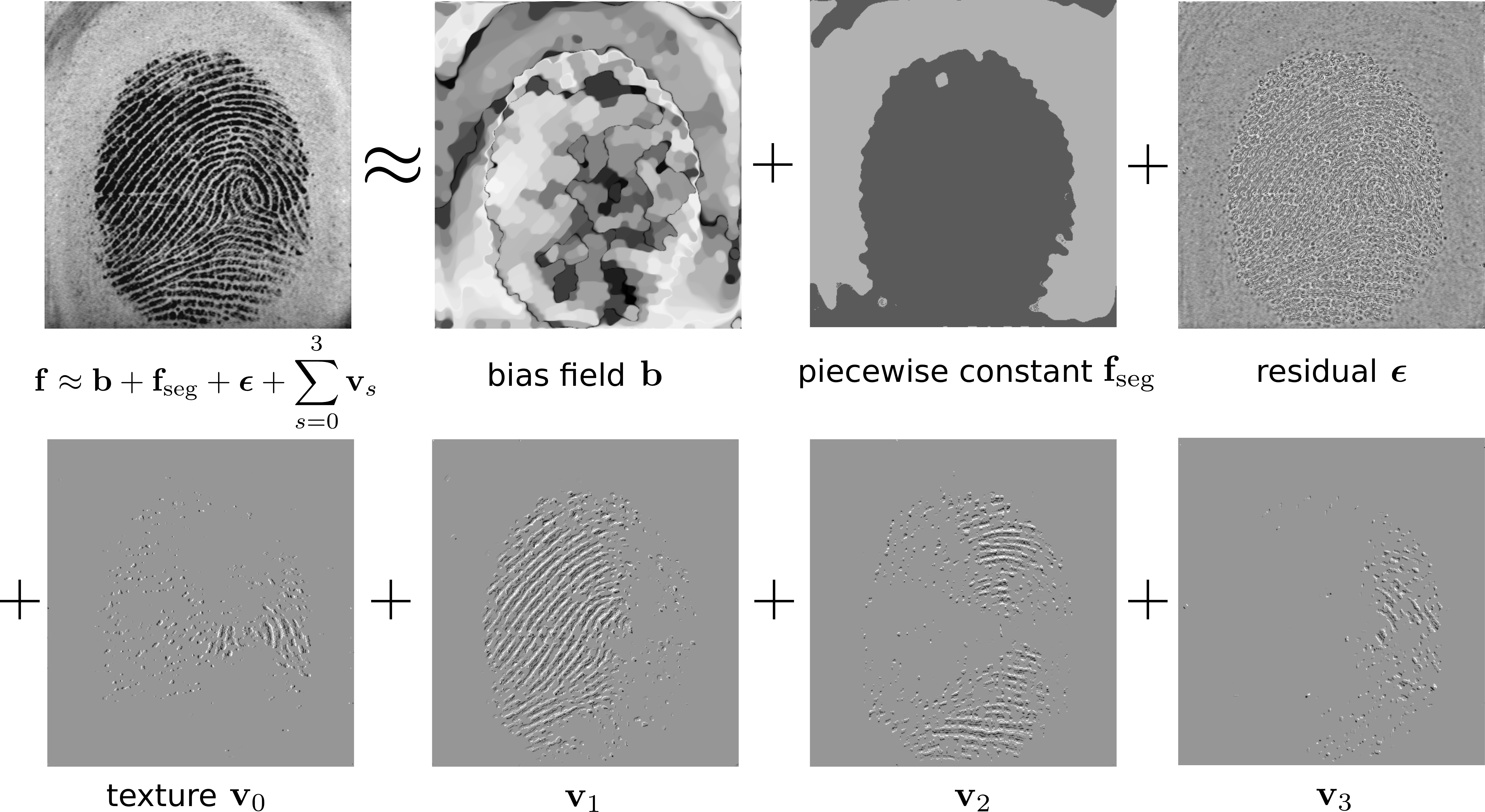}
 
 \caption{Image representation of a fingerprint decomposed according to the bilevel SHT method 
          described in Section \ref{sec:BiLevelMinimizationScheme}.  The directional texture is shown in the bottom row.}
\label{fig:fingerprint:approximation}
\end{center}
\end{figure}

The remainder of this paper is organized as follows.  In Section \ref{sec:prelim}, we define some preliminary notation and investigate a simple model where only two-phase segmentation is considered and the homogeneous portion is assumed to be piecewise constant.  In Section \ref{sec:multiSHT} we generalize this set-up both to multiphase segmentation and also to the case where homogeneous regions are considered piecewise-smooth.  In Section \ref{sec:BiLevelMinimizationScheme}
 we introduce a bilevel minimization scheme to more efficiently solve the minimization induced by the multiphase piecewise smooth context.  Finally, we apply the methodology to a number of representative images and compare to related approaches in Section \ref{sec:comparisons}.  For readability, mathematical details and proofs are provided in the Appendix.

\section{Preliminaries and Simplified Models}
\label{sec:prelim}
We begin by establishing some background notation and definitions.  
Let $X$ be the Euclidean space with dimension given by the size of the lattice 
$\Omega = \big\{ \Bk = [k_1 \,, k_2] \in [ 0 \,, d_1-1 ] \times [ 0 \,, d_2-1 ] \subset \mathbb Z^2 \big\}$.
On the bounded domain $\Omega$, we denote
the coordinates of the Fourier transform as $\Bome = [\omega_1 \,, \omega_2] \in [-\pi \,, \pi]^2$
and the coordinates of the $Z$ transform (the discrete version of the Fourier transform)
as $\Bz = \big[z_1 \,, z_2\big] = \big[e^{j\omega_1} \,, e^{j\omega_2}\big]$.
Denote the discrete Fourier transform pair as 
\begin{align*}
 f[\Bk] ~\stackrel{\cF}{\longleftrightarrow}~ 
 \cF \big\{ f[\Bk] \big\}(e^{j\Bome}) = F(e^{j\Bome}) = \sum_{\Bk \in \Omega} f[\Bk] e^{-j \langle \Bk \,, \Bome \rangle_{\ell_2}}\,.
\end{align*}
Given the discrete function $\Bf = \big[ f[\Bk] \big]_{\Bk \in \Omega} \in X$, a vector $\vec{\Bg} = [\Bg_l]_{l=0}^{L-1} \in X^L$
and the direction $l = 0, \ldots, L-1$, we make a few preliminary definitions.  
The {\bfseries directional forward/backward difference operators} are given, in matrix notation, by
 \begin{align*}
  &\partial_l^+ \Bf = \sin\left(\frac{\pi l}{L}\right) \BDm \Bf + \cos\left(\frac{\pi l}{L}\right) \Bf \BDnT
  ~~~~~~~ \stackrel{\cF}{\longleftrightarrow}~ 
  \Big[ \sin\left(\frac{\pi l}{L}\right) (z_1 - 1) + \cos\left(\frac{\pi l}{L}\right) (z_2 - 1) \Big] F(\Bz)
  \\
  &\partial_l^- \Bf = - \Big[ \sin\left(\frac{\pi l}{L}\right) \BDmT \Bf + \cos\left(\frac{\pi l}{L}\right) \Bf \BDn \Big]
  ~~ \stackrel{\cF}{\longleftrightarrow}~   
  -\Big[ \sin\left(\frac{\pi l}{L}\right) (z_1^{-1}-1) + \cos\left(\frac{\pi l}{L}\right) (z_2^{-1} - 1) \Big] F(\Bz) 
 \end{align*}
with
$\partial_l^\pm \Bf = \Big[ \partial_l^\pm f[\Bk] \Big]_{\Bk \in \Omega}$
and a matrix 
\begin{equation*}
 \BDm = \begin{pmatrix}
         -1            & 1             &0            &\hdots           &0              \\
          0            &-1             &1            &\hdots           &0              \\
          \vdots       &\vdots         &\vdots       &\ddots           &\vdots         \\
          0            &0              &0            &\hdots           &1              \\
          1            &0              &0            &\hdots            &-1            \\             
        \end{pmatrix}
  \in \mathbb R^{d_1 \times d_1}  \,,
\end{equation*} 
and similarly $\BDn \in \mathbb R^{d_2 \times d_2}$.
  
Given the adjoint operators of the difference operators as 
 $\big( \partial^\pm_x \big)^* = - \partial_x^\mp \,,~ \big( \partial^\pm_y \big)^* = - \partial_y^\mp$,
 the adjoint operators of their directional version are
  \begin{equation*} 
  \big( \partial^\pm_l \big)^* = - \Big[ \cos\left(\frac{\pi l}{L}\right) \partial_x^\mp + \sin\left(\frac{\pi l}{L}\right) \partial_y^\mp \Big] = - \partial_l^\mp \,,
 \end{equation*}
and we can define the {\bfseries discrete directional gradient} and {\bf divergence} as 
 $$
 \nabla_L^\pm \Bf = \Big[ \partial_l^\pm \Bf \Big]_{l=0}^{L-1}
\hspace{5mm}   \text{  and  }  \hspace{5mm}
 \text{div}_L^\pm \vec{\Bg} = \sum_{l=0}^{L-1} \partial_l^\pm \Bg_l    
 $$ 
respectively.  Note that the adjoint operator of $\nabla_L^\pm$ is $\big( \nabla_L^\pm \big)^* = - \text{div}_L^\mp$; that is
 \begin{equation*}
  \Big\langle \nabla_L^\pm \Bf \,, \vec{\Bg} \Big\rangle_{\ell_2} = 
  - \Big\langle \Bf \,, \text{div}_L^\mp \vec{\Bg} \Big\rangle_{\ell_2} \,. 
 \end{equation*}
Given a vector of matrices $\vec{\Bp} = \big[ \Bp_l \big]_{l=0}^{L-1} \in X^L$, 
the derivative of the divergence operator $\text{div}^-_L \vec{\Bp}$ w.r.t. $\Bp_l$ for $l = 0, \ldots, L-1$ is given by
\begin{align*} 
 \frac{ \partial }{ \partial \Bp_l } \Big\{ \text{div}^-_L \vec{\Bp} \Big\} 
 = - \frac{ \partial }{ \partial \Bp_l } \Big[ \sin\left(\frac{\pi l}{L}\right) \BDmT \Bp_l + \cos\left(\frac{\pi l}{L}\right) \Bp_l \BDn \Big]
 = \Big[ -\partial_l^+ \delta[\Bk] \Big]_{\Bk \in \Omega} \,,
\end{align*}
where $\delta(\cdot)$ denotes the Dirac delta function.  Thus, the derivative of the directional divergence w.r.t. $\vec{\Bp}$ is 
\[
 \frac{ \partial }{ \partial \vec{\Bp} } \Big\{ \text{div}^-_L \vec{\Bp} \Big\} 
 = \Big[ - \big[ \partial_l^+ \delta[\Bk] \big]_{l=0}^{L-1} \Big]_{\Bk \in \Omega}
 = \Big[ - \nabla^+_L \delta[\Bk] \Big]_{\Bk \in \Omega} \,.
\]

\noindent Finally, the {\bfseries discrete directional $\text{G}_S$-norm} \cite{ThaiGottschlich2016DG3PD} is given by 
 \begin{align*}
  \norm{\Bv}_{\text{G}_S} &= \text{inf} \Big\{ \norm{ \abs{\vec{\Bg}} }_{\ell_\infty} \,,
  \Bv = \text{div}^-_S \vec{\Bg} \,,~ \vec{\Bg} = \big[ \Bg_s \big]_{s=0}^{S-1} \in X^S
  \Big\} \,
 \end{align*}
 
\noindent and the indicator function on a convex set for noise in the curvelet domain $\cC$ 
\cite{ThaiGottschlich2016DG3PD, CandesDonoho2004, CandesDemanetDonohoYing2006} by 
 \begin{align*}
  A(\nu) = \Big\{ \Beps \in X ~:~ \norm{\cC \{ \Beps \}}_{\ell_\infty} \leq \nu \Big\}
  ~~\text{and}~~
  \mathscr G^*(\frac{\Beps}{\nu})
  = \begin{cases}
     0 \,, & \Beps \in A(\nu)   \\
     +\infty \,, & \text{else}  
    \end{cases}   \,.
 \end{align*}

\noindent
For a more thorough background on the mathematical preliminaries (including the point-wise operators as
$\pm \,, \backslash \,, \cdot^\times \,, \max \,, \text{Shrink} \,, \CST$, etc.), we refer the reader to \cite{ThaiHuckemannGottschlich2016, ThaiGottschlich2016G3PD, ThaiGottschlich2016DG3PD, Thai2015PhD}.
We point out that in the remainder of this work, we use boldface to present a matrix, e.g. $\Bu \in X$,
vector with boldface to denote a vector of matrix, e.g. $\vec{\Bp} = \left[ \Bp_n \right]_{n=1}^N \in X^N$,
and vector (without boldface) to denote constant vector, e.g. $\vec{c} = \left[ c_n \right]_{n=1}^N \in \mathbb R^N$.

\subsection{Two-phase piecewise constant and texture segmentation}
\label{sec:2phasePiecewiseConstTextureSeg}

We begin by considering the simple discrete model consisting of a two phase piecewise constant image 
(indicated by indicator function $\Bp$ and mean values $(c_1 \,, c_2) \in \mathbb R_+$) 
and texture $\Bv$ corrupted by i.i.d. (or weakly correlated) noise $\Beps$ as
\begin{align*}
 \Bf = c_1 \Bp + c_2 (1-\Bp) + \Bv + \Beps \,.
\end{align*}
As in \cite{ThaiGottschlich2016DG3PD}, we propose the model for this segmentation as
\begin{align} \label{eq:twophaseDG3PDtexture:minimization:1} 
 \min_{(\Bp, \Beps, c_1, c_2) \in X^{2} \times \mathbb R^2} & \bigg\{ 
 \norm{\nabla^+_L \Bp}_{\ell_1} + \mu_1 \norm{\Bv}_{\text{G}_S} + \mu_2 \norm{\Bv}_{\ell_1}
 + \mathscr G^*\left(\frac{\Beps}{\nu}\right)   \notag
 \\
 &\text{s.t.}~~ \Bf = c_1 \Bp + c_2 (1-\Bp) + \Bv + \Beps \,,
 p[\Bk] \in \{0, 1\}, \forall \Bk \in \Omega
 \bigg\} \,.  
\end{align}
In order to solve a non-convex minimization (\ref{eq:twophaseDG3PDtexture:minimization:1}),
we relax a binary set $p[\Bk] \in \{ 0 \,, 1\}$ to $[0 \,, 1]$ and then apply ALM and the alternating directional method of multipliers (ADMM);
see proposition \ref{prop:2phasePiecewiseConst} and Algorithms 5
in Appendix for detailed calculations.
Figure \ref{fig:2phasePiecewiseConst} illustrates the results of this model.
The histogram of the indicator function $\Bp$ in Figure \ref{fig:2phasePiecewiseConst} (d) shows that 
$\Bp$ almost converges to $\{ 0 \,, 1 \}$ after the $100^{th}$ iteration.
One can use a technique in \cite{BaeYuanTai2010} to make $\Bp$ be exactly a binary setting as a solution of 
a minimization (\ref{eq:twophaseDG3PDtexture:minimization:1}).
\setcounter{subfigure}{0}
\begin{figure}[!ht]
\begin{center}
\includegraphics[width=1\textwidth]{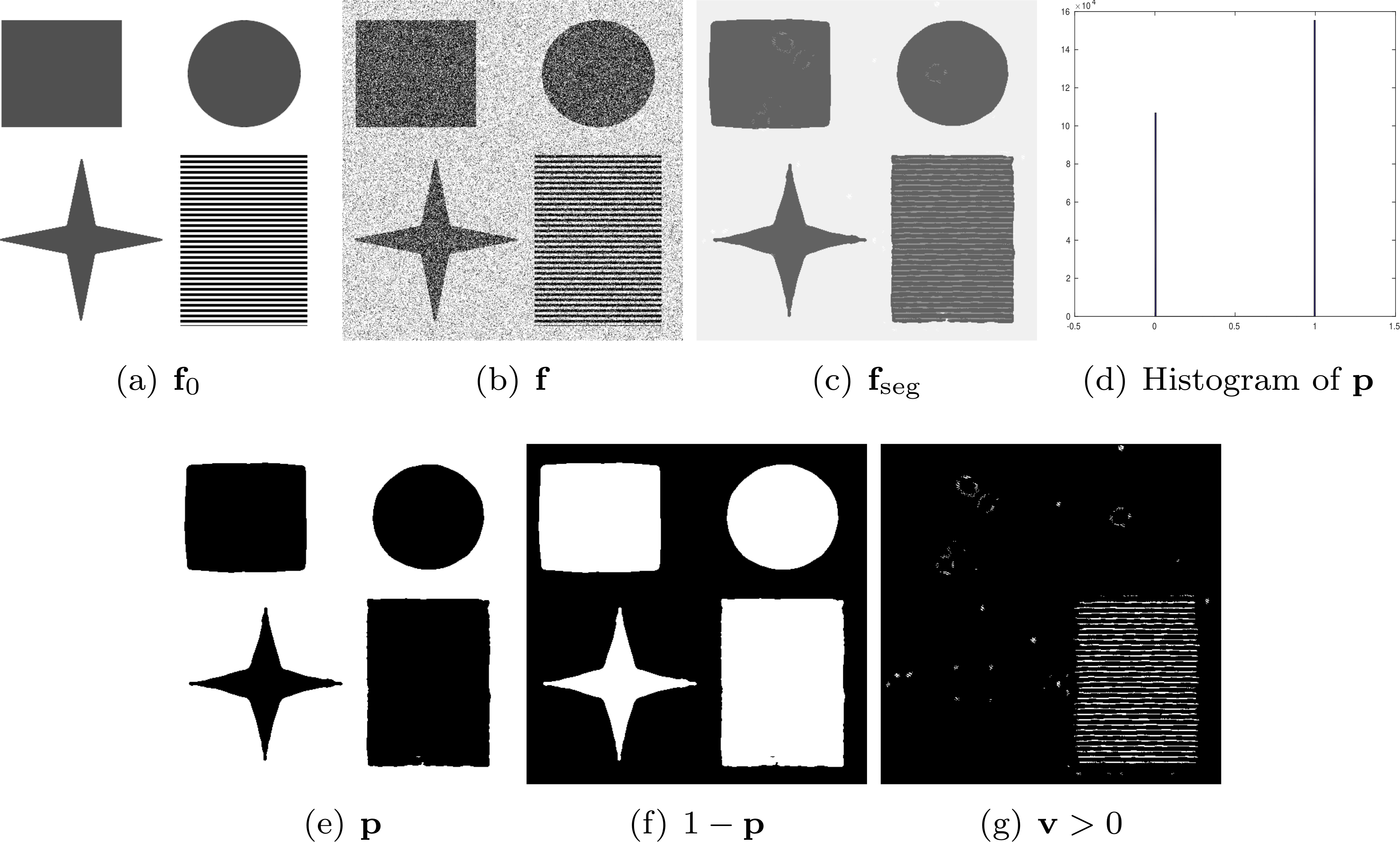}
 
\caption{
The original image $\Bf_0$ is shown in Subfigure (a).  Subfigure (b) shows the same image $\Bf$ with additional i.i.d. noise added from a Gaussian distribution with mean 0 and standard deviation $\sigma = 100$.
The segmented version $\Bf_\text{seg} = c_1 \Bp + c_2 (1 - \Bp) + c_3(v>0)$ of (b) shown in Subfigure (c)
is obtained by solving the minimization in (\ref{eq:twophaseDG3PDtexture:minimization:1}), see Algorithm 5 in the Appendix with
the parameters: 
$L = 150 \,, S = 9 \,, c_\delta = 0.1, \theta = 0.9 \,, c_{\mu_1} = c_{\mu_2} = 0.03 \,, 
 \beta_4 = 0.03 \,, \beta_3 = \frac{\theta}{1-\theta}\beta_4 \,, \beta_1 = \beta_4 \,, \beta_2 = 1.3\beta_3 \,,
 \#\text{iteration} = 100$.
The binarized texture $\Bv$ in (g) shows its sparsity by a minimization of (\ref{eq:twophaseDG3PDtexture:minimization:1})
with a percentage of non-zero coefficients in texture $\Bv$ as $\frac{\#\{\Bv \neq 0 \}}{m n}100\%  = 7.75\%$.
Figure (e) and (f) are the indicator function and its complement, respectively.
The mean values are $c_1 = 240.42 \,, c_2 = 98.02$ and we choose $c_3 = 50$.
\label{fig:2phasePiecewiseConst}}
\end{center}
\end{figure}

\section{Multiphase Segmentation SHT}
\label{sec:multiSHT}
The above models consider only two-phase segmentation in images where the homogeneous region can be considered
piecewise constant.  We now generalize this to allow for multiphase segmentation and also allow for piecewise-smooth homogeneity.  

\subsection{Multiphase piecewise smooth and texture segmentation} \label{sec:CombinedModel}
As before, we assume that a natural image $\Bf$ contains both texture $\Bv$ and homogeneous regions $\Bu$ as well as noise $\Beps$ so that $ \Bf = \Bu + \Bv + \Beps$.  However, we now further assume that $\Bu$ consists of both a multiphase $(N)$ piecewise constant (indexed by the indicator function 
$\vec{\Bp} = \big[ \Bp_n \big]_{n=1}^N$ and their mean values $\vec{c} = \big[ c_n \big]_{n=1}^N$) as well as a bias field $\Bb$

\[
\Bu = \Bb + \sum_{n=1}^N c_n \Bp_n \,,
\]

\noindent to account for the piecewise-smooth component of $\Bf$.
Following \cite{Chambolle2004}, we utilize the directional $\text{G}_S$-norm to measure the texture $\Bv$ and propose a new combined model for multiphase {\bf s}imultaneous {\bf h}omogeneous and {\bf t}exture image segmentation (the SHT model) as
\begin{align} \label{eq:CombinedModel:1}
 & \min_{\big( \vec{c}, \vec{\Bp}, \Bu, \Bv, \Beps, \Bb \big) \in \mathbb R^N \times X^{N+4}} \Bigg\{
 \norm{\nabla^+_L \Bu}_{\ell_1}
 + \mu_2 \norm{\Bv}_{\ell_1}
 + \mu_3 \sum_{n=1}^N \norm{\nabla^+_M \Bp_n}_{\ell_1}    
 + \frac{\mu_4}{2} \norm{\Bb}^2_{\ell_2}
 + \mathscr G^*(\frac{\Beps}{\nu})
 \notag
 \\& 
 \text{s.t. } \Bf = \Bu + \Bv + \Beps \,,
 \norm{\Bv}_{\text{G}_S} \leq \mu_1 \,, 
 \Bu = \Bb + \sum_{n=1}^N c_n \Bp_n \,, \notag \\
 &\hspace{25mm} \sum_{n=1}^N \Bp_n = 1 \,,
 p_n [\Bk] \in \{0, 1\} \,, n = 1, \ldots, N \,, \Bk \in \Omega  
 \Bigg\} \,.  
\end{align}
Note that in contrast with (\ref{eq:twophaseDG3PDtexture:minimization:1}), we no longer assume a piecewise-constant homogeneous region and thus must also take into account the bias term $\Bb$.  
Note further that the directional total variation norm (DTV-norm) $\norm{\nabla^+_L \cdot}_{\ell_1}$ and $\text{G}_S$-norm are a dual pair if $L = S$;
see Lemma \ref{lem:DTVDGnorm:dualpair} for details.  Finally, observe that as with the ROF model in  \cite{RudinOsherFatemi1992}, the process of smoothing the homogeneous areas while preserving 
the edge information is controlled by the DTV-norm for~ $\Bu$.

\subsubsection*{Solution to the Multiphase SHT Model}

In a similar fashion to \cite{BaeYuanTai2010}, the minimization in (\ref{eq:CombinedModel:1}) can be solved by
a smoothed primal-dual model for the $\vec{\Bp}$-problem
rather than by the Fourier approach used in the two-phase piecewise-constant model in  (\ref{eq:twophaseDG3PDtexture:minimization:1}). 
The remainder of this section provides a sketch of the proposed algorithm; for details and proofs, see Propositions \ref{prop:DirectionalTVL2}, \ref{prop:DirectionalGL1}, \ref{prop:combinedmodel:pproblem} in the Appendix.

\noindent Define the indicator function on a convex set for the $\text{G}_S$-norm as
\begin{align*}
 &\text{G}_S(\mu_1) = \Big\{ \Bv \in X \,, \vec{\Bg} \in X^S ~:~ \norm{\Bv}_{\text{G}_S} = \norm{\vec{\Bg}}_{\ell_\infty} \leq \mu_1 \Big\} \\
&\hspace{15mm} \text{ and } J^*_S \left(\frac{\Bv}{\mu_1}\right)
 = \begin{cases}
     0 \,, & \Bv \in \text{G}_S(\mu_1)   \\
     +\infty \,, & \text{otherwise}
    \end{cases}
 \,. 
\end{align*}
By applying ALM to the equality constraint $\Bf = \Bu + \Bv + \Beps$ and 
relaxing the binary setting $p_n[\Bk] \in \{0, 1\}$
to the convex set $p_n[\Bk] \in [0, 1]$, the nonconvex minimization in (\ref{eq:CombinedModel:1})
becomes convex as
\begin{align} \label{eq:CombinedModel:2}
 &\min_{(\vec{c}, \vec{\Bp}, \Bu, \Bv, \Beps) \in \mathbb R^N \times X^{N+3}} 
 \Big\{ \cL\big( \vec{c}, \vec{\Bp}, \Bu, \Bv, \Beps; \boldsymbol{\lambda} \big)
 \text{  s.t.  } 
 \sum_{n=1}^N \Bp_n = 1 \,,
 p_n [\Bk] > 0 \,, n = 1, \ldots, N \,, \Bk \in \Omega  
 \Big\}   
\end{align}
with
\begin{align*}
 \cL(\cdot; \cdot) &=  \norm{\nabla^+_L \Bu}_{\ell_1}
 + \mu_2 \norm{\Bv}_{\ell_1}
 + \mu_3 \sum_{n=1}^N \norm{\nabla^+_M \Bp_n}_{\ell_1}    
 + \frac{\mu_4}{2} \norm{\Bu - \sum_{n=1}^N c_n \Bp_n}^2_{\ell_2}
 + J_S^*(\frac{\Bv}{\mu_1})
 + \mathscr G^*(\frac{\Beps}{\nu})
 \\&
 + \frac{\beta}{2} \norm{ \Bf - \Bu - \Bv - \Beps + \frac{\boldsymbol{\lambda}}{\beta} }^2_{\ell_2} \,.
\end{align*}
Due to the multi-variable minimization, we apply ADMM to (\ref{eq:CombinedModel:2}) whose
minimizer is numerically computed through iteration $t$ with updated 
Lagrange multiplier
\begin{align} \label{eq:SHT:numericalLagrange}
 \big( \vec{c}^{(t)}, \vec{\Bp}^{(t)}, \Bu^{(t)}, \Bv^{(t)}, \Beps^{(t)} \big)
 = \argmin \cL\big( \vec{c}, \vec{\Bp}, \Bu, \Bv, \Beps \,;~ \boldsymbol{\lambda}^{(t-1)} \big) \,.
\end{align}
Given the initialization $\Bu^{(0)} = \Bf \,, \vec{\Bp}^{(0)} = \Bv^{(0)} = \Beps^{(0)} = \boldsymbol{\lambda}^{(0)} = \mathbf 0$
and $c_n = (n-1) \lfloor \frac{255}{N} \rfloor$ for $n = 1, \ldots, N$,
we solve the following five subproblems before updating the Lagrange multiplier. \vspace{2mm} \\

\noindent {\bfseries The $\Bu$-problem:} Fix $\vec{c}, \vec{\Bp}, \Bv, \Beps$ and solve
\begin{align} \label{eq:CombinedModel:uproblem:solution}
 &\min_{\Bu \in X} \left\{
 \norm{\nabla^+_L \Bu}_{\ell_1} 
 ~+~ \frac{\mu_4 + \beta}{2} \norm{\Bu - \Bh}^2_{\ell_2}
 \right\}
\end{align}
where
$\Bh = \frac{\mu_4}{\mu_4 + \beta} \sum_{n=1}^N c_n \Bp_n  +
       \frac{\beta}{\mu_4 + \beta} \Big[ \Bf - \Bv - \Beps + \frac{\boldsymbol{\lambda}}{\beta} \Big] \,.
$

\noindent From Proposition \ref{prop:DirectionalTVL2} and the numerical solver in (\ref{eq:SHT:numericalLagrange}),
a primal solution of the DTV-$\ell_2$ (\ref{eq:CombinedModel:uproblem:solution}) at iteration $t$ is given by
\begin{align*}
 \Bu^{(t)} &= \Bh - \frac{1}{\mu_4 + \beta} \text{div}^-_L \vec{\Br}^{(t)} 
\end{align*}
with dual variable 
\begin{align*}
 \vec{\Br}^{(t)}
 = 
 \frac{ \vec{\Br}^{(t-1)} + \tau \nabla^+_L \Big[ \text{div}^-_L \vec{\Br}^{(t-1)} - (\mu_4 + \beta) \Bh \Big] }
      { 1 + \tau \abs{ \nabla^+_L \Big[ \text{div}^-_L \vec{\Br}^{(t-1)} - (\mu_4 + \beta) \Bh \Big] } } \,. 
\end{align*}


\noindent {\bfseries The $\Bv$-problem:} Fix $\vec{c}, \vec{\Bp}, \Bu, \Beps$,
denote $\Bh_{\text{v}} = \Bf - \Bu - \Beps + \frac{\boldsymbol{\lambda}}{\beta}$ and solve
\begin{equation}  \label{eq:CombinedModel:vproblem:solution}
 \min_{\Bv \in X} \left\{
 J_S^*\left(\frac{\Bv}{\mu_1}\right)
 + \norm{\Bv}_{\ell_1}
 + \frac{1}{2} \frac{\beta}{\mu_2} \norm{ \Bv - \Bh_{\text{v}} }^2_{\ell_2}
 \right\}.
\end{equation}
To simplify the problem, we apply Proposition \ref{prop:DirectionalGL1} with a quadratic penalty $(\boldsymbol{\lambda_1} = 0)$.
The primal solution of the directional $\text{G}_S-\ell_1$ model (\ref{eq:CombinedModel:vproblem:solution}) 
at iteration $t$ is updated as
\begin{align*}
 v^{(t)} = \Shrink \left( \frac{ \frac{\beta}{\mu_2} }{\alpha + \frac{\beta}{\mu_2}} \Bh_{\text{v}} + \frac{\alpha \mu_1}{\alpha + \frac{\beta}{\mu_2} } \text{div}^-_S \vec{\Bg}^{(t)}
                     \,,~ \frac{1}{\alpha + \frac{\beta}{\mu_2}}
               \right) \,,
\end{align*}
with dual variable 
\begin{align*}
 \vec{\Bg}^{(t)}
 = \frac{ \vec{\Bg}^{(t-1)} + \tau \nabla_S^+ \Big[ \alpha \mu_1 \text{div}^-_S \vec{\Bg}^{(t-1)} - \alpha \Bv^{(t-1)} \Big] }
        { 1 + \tau \abs{\nabla^+_S \Big[ \alpha \mu_1 \text{div}^-_S \vec{\Bg}^{(t-1)} - \alpha \Bv^{(t-1)} \Big]} } .
\end{align*}


\noindent {\bfseries The $\Beps$-problem:} Fix $\vec{c}, \vec{\Bp}, \Bu, \Bv$ and solve
\begin{align}  \label{eq:CombinedModel:epsilonproblem:solution}
 \min_{\Beps \in X} \left\{
 \mathscr G^*\left(\frac{\Beps}{\nu}\right)
 + \frac{\beta}{2} \norm{ \Beps - \Big[\Bf - \Bu - \Bv + \frac{\boldsymbol{\lambda}}{\beta} \Big] }^2_{\ell_2}
 \right\}.
\end{align}
In a similar fashion to \cite{ThaiGottschlich2016DG3PD}, the solution of (\ref{eq:CombinedModel:epsilonproblem:solution}) is given by
\begin{equation*}
  \Beps^* = \Big[\Bf - \Bu - \Bv + \frac{\boldsymbol{\lambda}}{\beta} \Big] 
           - \CST \Big( \Big[\Bf - \Bu - \Bv + \frac{\boldsymbol{\lambda}}{\beta} \Big] \,, \nu \Big) \,.
\end{equation*}


\noindent {\bfseries The $\vec{c} = \big[ c_n \big]_{n=1}^N$-problem:} Fix $\vec{\Bp}, \Bu, \Bv, \Beps$ and solve
\begin{equation}  \label{eq:CombinedModel:cproblem:solution}
 \min_{\vec{c} \in \mathbb R^N} \left\{
 \norm{ \Bu - \sum_{n=1}^N c_n \Bp_n }^2_{\ell_2}
 \right\}.
\end{equation}
Due to its separability, the solution of (\ref{eq:CombinedModel:cproblem:solution}) is given by
\begin{equation*} \label{eq:caproblem:solution}
 c_n = \frac{\displaystyle \sum_{\Bk \in \Omega} u[\Bk] p_n[\Bk] }
             {\displaystyle \sum_{\Bk \in \Omega} p_n[\Bk] }  
 \,,~ n = 1, \ldots, N. 
\end{equation*}


\noindent {\bfseries The $\vec{\Bp} = \big[ \Bp_n \big]_{n=1}^N$-problem:} Fix $\vec{c} \,, \Bu \,, \Bv \,, \Beps$ and solve
\begin{align}  \label{eq:CombinedModel:pproblem:solution}
 \min_{\vec{\Bp} \in X^N} & \Bigg\{
 \mu_3 \sum_{n=1}^N \norm{\nabla^+_M \Bp_n}_{\ell_1}    
 + \frac{\mu_4}{2} \norm{ \Bu - \sum_{n=1}^N c_n \Bp_n }^2_{\ell_2} \notag \\
&\text{  s.t.  } 
 \sum_{n=1}^N \Bp_n = 1,
 p_n [\Bk] \in \{0, 1\}, n = 1, \ldots, N, \Bk \in \Omega   
 \Bigg\} \,.
\end{align}
From Proposition \ref{prop:combinedmodel:pproblem} with a smooth primal-dual model and Chambolle's projection,
the primal solution of (\ref{eq:CombinedModel:pproblem:solution}) at iteration $t$ (for $n = 1, \ldots, N$) is
\begin{align*}
 \Bp_n &= \frac{\displaystyle \exp\left\{ -\frac{1}{\xi} \Big[ \text{div}^-_M \vec{\Bq}_n 
                + \frac{\mu_4}{2 \mu_3} \big( \Bu - c_n \big)^{.2}
                \Big] \right\} }
              {\displaystyle \sum_{i=1}^N \exp \left\{ - \frac{1}{\xi} \Big[ \text{div}^-_M \vec{\Bq}_i 
                + \frac{\mu_4}{2 \mu_3} \big( \Bu - c_i \big)^{.2} \Big] \right\} }
 \\
 &= \frac{\displaystyle \exp\left\{ -\frac{1}{\xi} \Big[ -\sum_{m=0}^{M-1} \Big[ \sin(\frac{\pi m}{M}) \BDmT \Bq_{nm} + \cos(\frac{\pi m}{M}) \Bq_{nm} \BDn \Big] 
          + \frac{\mu_4}{2 \mu_3} \big( \Bu - c_n \big)^{.2}
          \Big] \right\} }
         {\displaystyle \sum_{i=1}^N \exp \left\{ - \frac{1}{\xi} \Big[ -\sum_{m=0}^{M-1} \Big[ \sin(\frac{\pi m}{M}) \BDmT \Bq_{im} + \cos(\frac{\pi m}{M}) \Bq_{im} \BDn \Big] 
          + \frac{\mu_4}{2 \mu_3} \big( \Bu - c_i \big)^{.2} \Big] \right\} } \,, 
\end{align*}
with dual variable 
\begin{align*}  
 \vec{\Bq}_n^{(t)} &= \frac{ \vec{\Bq}_n^{(t-1)} + \tau \nabla^+_M \Bp_n^{(t)} }{ 1 + \tau \abs{\nabla^+_M \Bp_n^{(t)}} }  
 ~ \in X^M
 \\ \Leftrightarrow~
 \Bq_{nm}^{(t)} &= \frac{\displaystyle \Bq_{nm}^{(t-1)} + \tau \left[ \sin(\frac{\pi m}{M}) \BDm \Bp_n^{(t)} + \cos(\frac{\pi m}{M}) \Bp_n^{(t)} \BDnT \right] }
                        {\displaystyle 1 + \tau \left[ \sum_{m=0}^{M-1} \big[ \sin(\frac{\pi m}{M}) \BDm \Bp_n^{(t)} + \cos(\frac{\pi m}{M}) \Bp_n^{(t)} \BDnT \big]^{.2} \right]^{.\frac{1}{2}} } 
\,,~ m = 0, \ldots, M-1 \,.
\end{align*}

\vspace{2mm}
\noindent Finally, we update the Lagrange multiplier as
\begin{equation*}
 \boldsymbol{\lambda}^{(t+1)} = \boldsymbol{\lambda}^{(t)} 
 + \beta \big( \Bf - \Bu - \Bv - \Beps \big) \,.
\end{equation*}
This solution is summarized in Algorithm 1.
Figures \ref{fig:Barbara:CombinedModel:NoNoise} and \ref{fig:Barbara:CombinedModel:Noise} depict
the segmented results without noise and with independent Gaussian noise, respectively.
In both cases, our proposed method provides good segmented results, though some large-scale texture
(e.g.\ the books shown in the upper left-hand corner) still remains in the piecewise constant images; see 
Figures \ref{fig:Barbara:CombinedModel:NoNoise} (f) and \ref{fig:Barbara:CombinedModel:Noise} (f).
This is likely due to the minimizer obtained by the primal-dual method with Chambolle's projection \cite{Chambolle2004}
and since there is no shrinkage to produce sparse signals in some transform domains.
Similar to (\ref{eq:twophaseDG3PDtexture:minimization:1}), one can use the technique in 
\cite{BaeYuanTai2010} to obtain a binary setting of $\Bp$. 
As in \cite{ThaiGottschlich2016G3PD, ThaiGottschlich2016DG3PD}, the convergence of the algorithm is
defined by a relative error on the log scale
\begin{align} \label{eq:relativeError:u}
 \text{Err}_{\Bu}(t) = \log \frac{\norm{\Bu^{(t)} - \Bu^{(t-1)}}_{\ell_2}}{\norm{\Bu^{(t-1)}}_{\ell_2}} \,,~ t = 1, 2,  \ldots \,
\end{align}


\begin{figure}[!ht]
\begin{center}  

 \includegraphics[height=0.58\textheight]{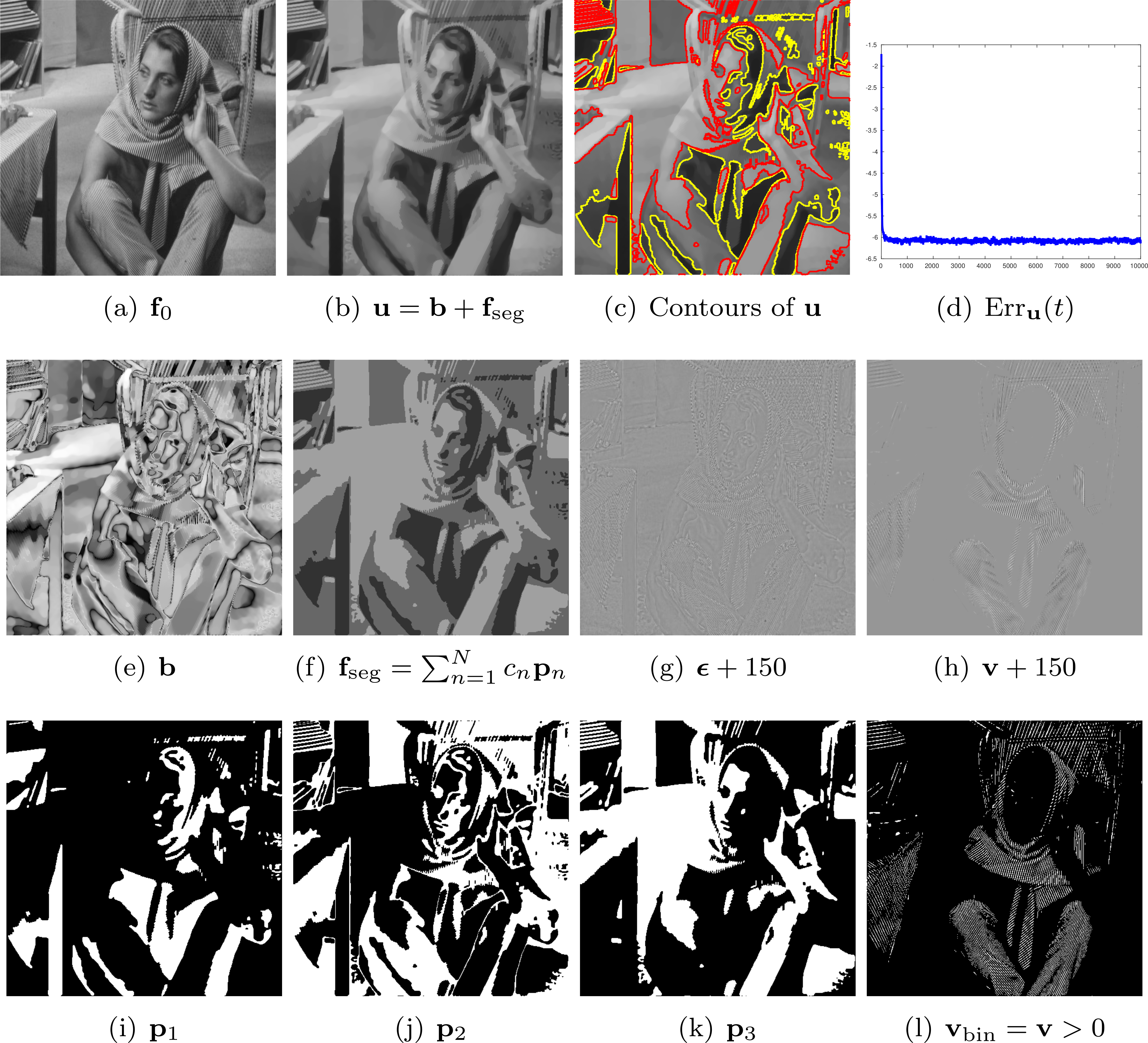}
 
\caption{The original image $\Bf$ (a) is decomposed into a bias field $\Bb$ (e), 
         a piecewise constant image $\Bf_\text{seg}$ (f), small scale objects (residual) $\Beps$ (g), 
         and sparse texture $\Bv$ (h) with binarized verion $\Bv_\text{bin}$ in (l).
         A piecewise smooth image $\Bu$ (b)
         is obtained by a summation of $\Bb$ (e) and $\Bf_\text{seg}$ (f).
         Subfigure (c) shows segmented contours superimposed on $\Bu$.  The relative error of $\Bu$ is shown in (d).
         The indicator functions for phases 1, 2, and 3 are shown in Subfigures i, j, and k, respectively.        
         The parameters are $\nu = 10, N = 3, L = S = M = 2, \tau = 0.1, \xi = 0.001, \alpha = \mu_1 = \mu_2 = \mu_3 = 0.1, 
         c_{\mu_1} = 0.14, \mu_4 = 0.01, \beta=0.04, \#\text{iteration} = 10000$.
         The mean square error of the original image $\Bf$ and a reconstructed image 
         $\Bf_\text{re} = \Bb + \Bf_\text{seg} + \Bv + \Beps$ is
         $\text{MSE} = 9.02 \times 10^{-5}$.}
\label{fig:Barbara:CombinedModel:NoNoise}
\end{center}
\end{figure}

\begin{figure}[!ht]
\begin{center}  

 \includegraphics[height=0.58\textheight]{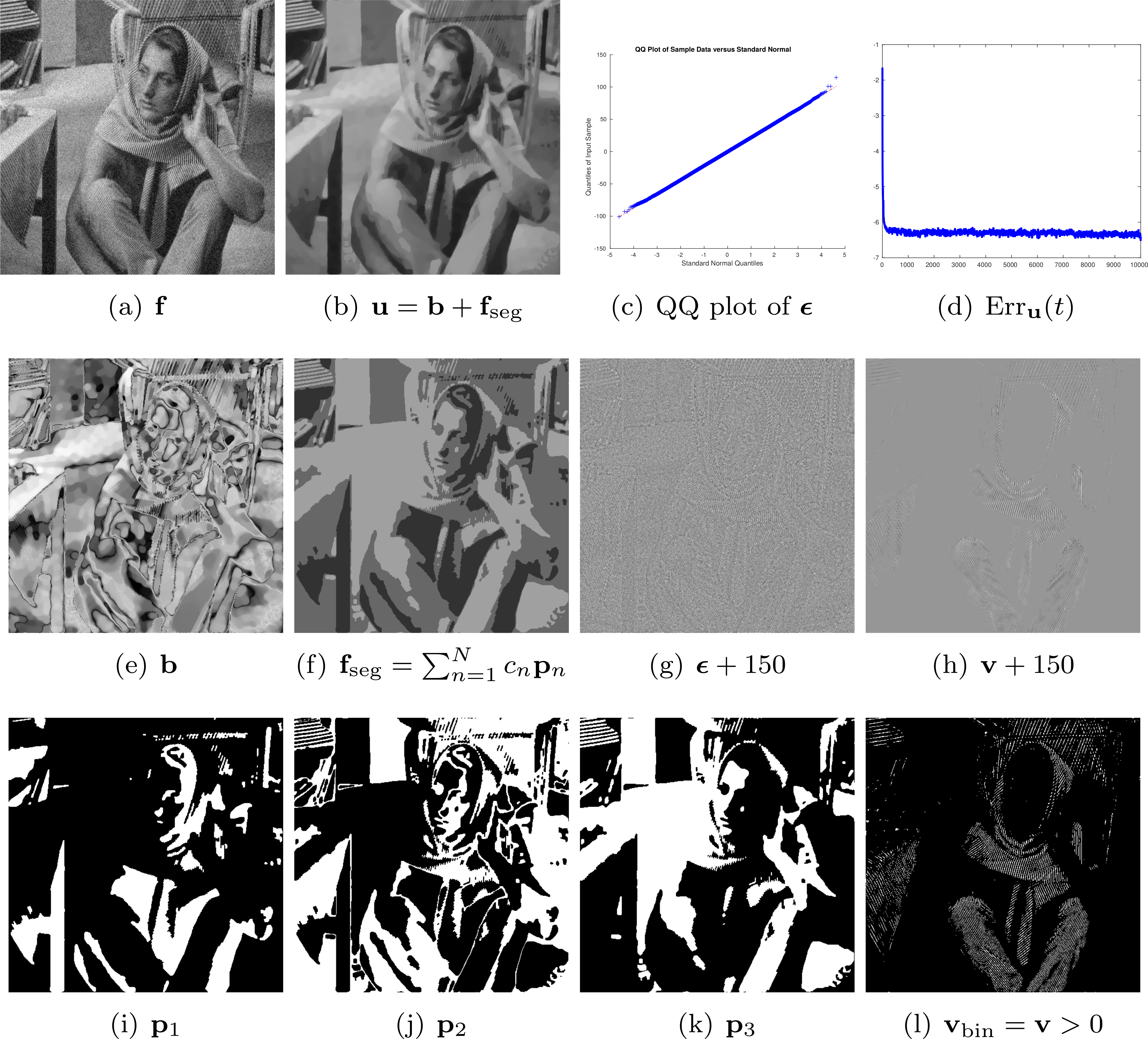}
 
 
 \caption{Original image $\Bf$ with added i.i.d. noise from $\mathcal N(0 \,, 20^2)$ is shown in (a).
          With the addition of noise, we choose $\nu = 16$ with the remaining parameters set similar to those in Figure \ref{fig:Barbara:CombinedModel:NoNoise}.
          The QQplot in (c) for noise $\Beps$ in (g) shows that $\nu = 16$ can separate most of the noise and
          some texture information.  The MSE is $7.47 \times 10^{-5}$.
          Note that increasing $L$ will not make $\Bu$ (b) smoother due to the lack of a sparsity constraint
          in Chambolle's projection.  The algorithm still performs well with sparse texture $\Bv$ as illustrated in (l).
         }
\label{fig:Barbara:CombinedModel:Noise}
\end{center}
\end{figure}

\begin{algorithm}
\label{alg:CombinedModel}
\caption{The SHT model}
\begin{algorithmic}
\small 
 
 \STATE{
  {\bfseries Initialization:}
  $\Bu^{(0)} = \Bf \,, \Bv^{(0)} = \Beps^{(0)} = \vec{\Br}^{(0)} = \vec{\Bg}^{(0)} = \vec{\Bp}^{(0)} = \vec{\Bq}^{(0)} =\boldsymbol 0 \,,~ 
   c_n^{(0)} = (n-1) \lfloor \frac{255}{N} \rfloor \,, n = 1, \ldots, N$.
 }  
 
\STATE{} 
\FOR{$t = 1 \,, \ldots \,, T$}
\STATE
{ 
{\bfseries I. Compute $\big( \vec{c}, \vec{\Br}, \Bu, \vec{\Bg}, \Bv, \Beps, \vec{\Bp}, \vec{\Bq}  \big) \in \mathbb R^N \times X^{L+S+N+NM+3}$:}
  \begin{align*}
   &\text{1.}~
   \Bh^{(t)} = \frac{\mu_4}{\mu_4 + \beta} \sum_{n=1}^N c_n^{(t-1)} \Bp_n^{(t-1)} + \frac{\beta}{\mu_4 + \beta} 
                \Big[ \Bf - \Bv^{(t-1)} - \Beps^{(t-1)} + \frac{\boldsymbol{\lambda}^{(t-1)}}{\beta} \Big]  
   \\&\text{2.}~
   \vec{\Br}^{(t)} = 
   \frac{ \vec{\Br}^{(t-1)} + \tau \nabla^+_L \Big[ \text{div}^-_L \vec{\Br}^{(t-1)} - (\mu_4 + \beta) \Bh^{(t)} \Big] }
        { 1 + \tau \abs{ \nabla^+_L \Big[ \text{div}^-_L \vec{\Br}^{(t-1)} - (\mu_4 + \beta) \Bh^{(t)} \Big] } }
   \\&\text{3.}~
   \Bu^{(t)} = \Bh^{(t)} - \frac{1}{\mu_4 + \beta} \text{div}^-_L \vec{\Br}^{(t)} 
   \\&\text{4.}~
   \vec{\Bg}^{(t)} = \frac{ \vec{\Bg}^{(t-1)} + \tau \nabla_S^+ \Big[ \alpha \mu_1 \text{div}^-_S \vec{\Bg}^{(t-1)} - \boldsymbol{\lambda}^{(t-1)} - \alpha \Bv^{(t)} \Big] }
                          { 1 + \tau \abs{\nabla^+_S \Big[ \alpha \mu_1 \text{div}^-_S \vec{\Bg}^{(t-1)} - \boldsymbol{\lambda}^{(t-1)} - \alpha \Bv^{(t)} \Big]} }  
   \\&\text{5.}~
   \Bv^{(t)} = \Shrink \Big( \underbrace{ \frac{ \frac{\beta}{\mu_2} }{\alpha + \frac{\beta}{\mu_2}} \big[ \Bf - \Bu^{(t)} - \Beps^{(t-1)} + \frac{\boldsymbol{\lambda}^{(t-1)}}{\beta} \big] + \frac{\alpha \mu_1}{\alpha + \frac{\beta}{\mu_2} } \text{div}^-_S \vec{\Bg}^{(t)} }_{:= \cT_\Bv}
                        , \frac{1}{\alpha + \frac{\beta}{\mu_2}}
                 \Big) \,,
   \\&\qquad
   \mu_2 = \frac{\displaystyle \beta c_{\mu_2} \max_{\Bk \in \Omega} \abs{\cT_v[\Bk]}}{\displaystyle 1 - \alpha c_{\mu_2} \max_{\Bk \in \Omega} \abs{\cT_v[\Bk]}}
   \\&\text{6.}~
   \Beps^{(t)} = \Big[\Bf - \Bu^{(t)} - \Bv^{(t)} + \frac{\boldsymbol{\lambda}^{(t-1)}}{\beta} \Big] 
               - \CST \Big( \Big[\Bf - \Bu^{(t)} - \Bv^{(t)} + \frac{\boldsymbol{\lambda}^{(t-1)}}{\beta} \Big] \,, \nu \Big)
   \\&\text{7.}~
   \Bp_n^{(t)} = \frac{\displaystyle \exp\Big\{ -\frac{1}{\xi} \Big[ \text{div}^-_M \vec{\Bq}_n^{(t-1)} 
                  + \frac{\mu_4}{2 \mu_3} \big( \Bu^{(t)} - c_n^{(t-1)} \big)^{\cdot 2}
                  \Big] \Big\} }
                {\displaystyle \sum_{i=1}^N \exp \Big\{ - \frac{1}{\xi} \Big[ \text{div}^-_M \vec{\Bq}_i^{(t-1)} 
                  + \frac{\mu_4}{2 \mu_3} \big( \Bu^{(t)} - c_i^{(t-1)} \big)^{\cdot 2} \Big] \Big\} }
   \,,~ n = 1, \ldots, N  
   \\&\text{8.}~    
   \vec{\Bq}_n^{(t)} = \frac{ \vec{\Bq}_n^{(t-1)} + \tau \nabla^+_M \Bp_n^{(t)} }{ 1 + \tau \abs{\nabla^+_M \Bp_n^{(t)}} }\,,~ n = 1, \ldots, N     
   \\&\text{9.}~
   c_n^{(t)} = \frac{\displaystyle \sum_{\Bk \in \Omega} u^{(t)}[\Bk] p_n^{(t)}[\Bk] }
              {\displaystyle \sum_{\Bk \in \Omega} p_n^{(t)}[\Bk] }  
   \,,~ n = 1, \ldots, N   
  \end{align*}

{\bfseries II. Update $\boldsymbol{\lambda} \in X$:}
\begin{equation*}
 \boldsymbol{\lambda}^{(t)} = \boldsymbol{\lambda}^{(t-1)} + \beta \Big( \Bf - \Bu^{(t)} - \Bv^{(t)} - \Beps^{(t)} \Big)
\end{equation*}


}
\ENDFOR
\end{algorithmic}
\end{algorithm}


%
%
%
%

\section{The Bilevel-SHT Model}
\label{sec:BiLevelMinimizationScheme}

We now propose an alternative to the multiphase SHT model.  As above, we assume that an image $\Bf$ is composed of a homogeneous region (consisting of a bias field $\Bb$ and piecewise-constant with mean values $\vec{c}$ and indicator functions $\vec{\Bp}$) as well as texture $\Bv$ and residual $\Beps$, but we now consider a bilevel scheme 
for decomposing the image into these base components.  Specifically, we consider the decomposition and segmentation 
as separate levels:
\begin{itemize}
 \item {\bfseries Level 1:} Image decomposition
  \begin{equation*}
   \Bf = \Bu + \Bv + \Beps
  \end{equation*}
  
 \item {\bfseries Level 2:} Multiphase piecewise-smooth image segmentation
  \begin{equation*}
   \Bu = \Bb + \sum_{n=1}^N c_n \Bp_n \,,~  \sum_{n=1}^N \Bp_n = 1 \,,~
   p_n [\Bk] \in \{0, 1\} \,,~ n = 1, \ldots, N \,,~ \Bk \in \Omega \, .
  \end{equation*}
\end{itemize}
The bilevel minimization scheme for simultaneously homogeneous and textural (SHT) image segmentation is defined as
\begin{align} \label{eq:BiLevelMinimization:1}
 \min_{\big( \vec{c} \,, \vec{\Bp} \,, \Bb \big) \in \mathbb R^N \times X^{N+1}} \Bigg\{
 &\min_{(\Bu, \Bv, \Beps) \in X^3} \bigg\{ 
 \mathcal I_1 \big(\Bu, \Bv, \Beps \big) \text{  s.t.  } \Bf = \Bu + \Bv + \Beps
 \bigg\}     
 + \mathcal I_2 \big( \vec{\Bp}, \Bb \big)
 \text{  s.t.  } \mathcal S \big( \vec{c}, \vec{\Bp}, \Bb; \Bu \big)
 \Bigg\}
\end{align}
with set $\mathcal S$
\begin{equation*}
 \mathcal S \big( \vec{c} \,, \vec{\Bp} \,, \Bb \,; \Bu \big) = \left\{
 \Bu = \Bb + \sum_{n=1}^N c_n \Bp_n \,,~ \sum_{n=1}^N \Bp_n = 1 \,,~
 p_n [\Bk] \in \{0, 1\} \,,~ n = 1, \ldots, N \,,~ \Bk \in \Omega 
 \right\}
\end{equation*}
and energy functions
\begin{align*}
 \mathcal I_1(\Bu, \Bv, \Beps) &~=~ \norm{\nabla^+_L \Bu}_{\ell_1} + \mu_1 \norm{\Bv}_{\text{G}_S} 
 + \mu_2 \norm{\Bv}_{\ell_1} + \mathscr G^*\left(\frac{\Beps}{\nu}\right) ~~ \text{and}
 \\
 \mathcal I_2 \big( \vec{\Bp} \,, \Bb \big) &~=~ 
 \sum_{n=1}^N \norm{\nabla^+_M \Bp_n}_{\ell_1} + \frac{\mu_3}{2} \norm{ \Bb }^2_{\ell_2}.
\end{align*}
Similar to the above multiphase SHT model, this bilevel-SHT model also measures a bias field $\Bb$ via $\ell_2$ distance
as data fidelity term in the regularization. 
In contrast with \cite{GuWangXiongChengHuangZhou2013, GuXiongWangChengHuangZhou2014}, 
we enforce the constraint for the smoothness in $\Bu$ with $\norm{ \nabla^+_L \Bu }_{\ell_1}$.

\subsection{Solution of the bilevel-SHT model}
We now describe a numerical algorithm to obtain the solution of the bilevel-SHT model (\ref{eq:BiLevelMinimization:1}):
\begin{itemize}
 \item {\bfseries Level 1:} {\bfseries D}irectional {\bfseries G}lobal {\bfseries T}hree-{\bfseries p}art {\bfseries D}ecomposition (DG3PD)
  \begin{equation}  \label{eq:BiLevelMinimization:2:step1}
   (\Bu^*, \Bv^*, \Beps^*) ~= \argmin_{(\Bu, \Bv, \Beps) \in X^3} \Big\{ 
   \mathcal I_1 \big(\Bu, \Bv, \Beps \big) \text{  s.t.  } \Bf = \Bu + \Bv + \Beps
   \Big\}
  \end{equation}

 \item {\bfseries Level 2:} {\bfseries S}imultaneously {\bfseries H}omogeneous and {\bfseries T}exture {\bfseries M}ultiphase {\bfseries S}egmentation (SHTMS)
  \begin{equation} \label{eq:BiLevelMinimization:2:step2}
   \big( \vec{c}^* \,, \vec{\Bp}^* \,, \Bb^* \big) ~=
   \argmin_{\big( \vec{c}, \vec{\Bp}, \Bb \big) \in \mathbb R^N \times X^{N+1}} \Big\{
   \mathcal I_2 \big( \vec{\Bp}, \Bb \big)
   \text{  s.t.  } \mathcal S \big( \vec{c} \,, \vec{\Bp} \,, \Bb \,; \Bu^* \big)
   \Big\}
  \end{equation}
\end{itemize}
As alluded to above, we first decompose the original image $\Bf$
into piecewise-smooth, texture, and residual components $\Bu$, $\Bv$, and $\Beps$.
We then segment the piecewise-smooth image $\Bu$ into multiphase $(N)$ piecewise-constant images and 
a bias field $\Bb$.
Sparse (or segmented) texture $\Bv$ is measured by $\norm{\Bv}_{\ell_1}$ and $\norm{\Bv}_{\text{G}_S}$ and we note that $\text{G}_S$ is a generalized version of the Banach space G in the discrete setting \cite{AujolChambolle2005, ThaiGottschlich2016DG3PD, Meyer2001}.
Note also that though we assume the original image $\Bf$ contains both texture 
and homogeneous areas, only the homogeneous areas are segmented by level 2.

\subsubsection{Solution of Level 1 - DG3PD} 
The solution of the convex minimization in (\ref{eq:BiLevelMinimization:2:step1}) is defined in
\cite[Algorithm 1]{ThaiGottschlich2016DG3PD} and \cite{ThaiGottschlich2016G3PD} and is solved by introducing new variables and applying ALM and ADMM.
In an effort to make this paper self-contained, the kernel of the DG3PD method is provided in Algorithm 3.

Note that DG3PD approximates $\norm{ \vec{\Bg} }_{\ell_\infty}$ in the $\text{G}_S$-norm by $\norm{ \vec{\Bg} }_{\ell_1}$;
see \cite{VeseOsher2003} for details. This approximation in $\ell_1$-norm enforces sparsity of $\vec{\Bg}$ (in our case, the texture $\Bv$).

\subsubsection{Solution of Level 2 - SHTMS}
Note that since the bias field is defined as 
\[
\Bb = \Bu - \sum_{n=1}^N c_n \Bp_n
\]
with the binary set $p_n[\Bk] \in \{ 0 \,, 1 \}$, 
we can rewrite the $\ell_2$-norm and recast the non-convex minimization in (\ref{eq:BiLevelMinimization:2:step2}) as
\begin{align} \label{eq:BiLevelMinimization:2:step2:1}
 \min_{(\vec{c}, \vec{\Bp}) \in \mathbb R^N \times X^N} \bigg\{
 & \sum_{n=1}^N \norm{\nabla^+_M \Bp_n}_{\ell_1} 
 + \frac{\mu_3}{2} \sum_{n=1}^N \Big\langle \big(\Bu - c_n \big)^{.2} \,, \Bp_n \Big\rangle_{\ell_2} \,, \notag \\
&\hspace{10mm} \sum_{n=1}^N p_n[\Bk] = 1 , p_n [\Bk] \in \{0, 1\} , n = 1, \ldots, N , \Bk \in \Omega 
 \bigg\} \,.
\end{align}
As in \cite{ThaiGottschlich2016DG3PD, ThaiGottschlich2016G3PD},
a solution of the multivariate minimization (\ref{eq:BiLevelMinimization:2:step2:1}) can be obtained
by alternating between solving the following two subproblems:

{\bfseries a. The $\vec{c} = \big[ c_n \big]_{n=1}^N$ problem:} Fix $\vec{\Bp}$ and solve

\begin{equation} \label{eq:BiLevelMinimization:level2:cproblem}
 \min_{\vec{c} \in \mathbb R^N} 
 \bigg\{ \mathcal L (\vec{c}) =
 \frac{\mu_3}{2} \sum_{n=1}^N \Big\langle \big(\Bu - c_n \big)^{.2} \,, \Bp_n \Big\rangle_{\ell_2}
 \bigg\}.
\end{equation}
Due to its separability, the solution of (\ref{eq:BiLevelMinimization:level2:cproblem}) is given by 
\begin{equation} \label{eq:caproblem:solution}
 c_n = \frac{\displaystyle \sum_{\Bk \in \Omega} u[\Bk] p_n[\Bk] }
             {\displaystyle \sum_{\Bk \in \Omega} p_n[\Bk] }  
 \,,~ n = 1, \ldots, N. 
\end{equation}

{\bfseries b. The $\vec{\Bp} = \big[ \Bp_n \big]_{n=1}^N$ problem:} Fix $\vec{c}$ and find $\vec{\Bp}$.

As in Section \ref{sec:CombinedModel}, the
nonconvex minimization in (\ref{eq:BiLevelMinimization:2:step2:1}) w.r.t. $\vec{\Bp}$
is relaxed and made convex by setting a binary set $p_n[\Bk] \in \{0 \,, 1\}$ to $[0 \,, 1]$.
Following \cite{BaeYuanTai2010, GuXiongWangChengHuangZhou2014},                   
we apply a smoothed dual formulation by introducing 
the primal, primal-dual, and dual models:
              
{\bfseries The primal model:} Solve 
\begin{equation} \label{eq:BiLevelMinimization:2:step2:3}
 \min_{\vec{\Bp} \in \mathcal Q_+} 
 \bigg\{ \mathcal L^\text{P} (\vec{\Bp}) =
 \sum_{n=1}^N \norm{\nabla^+_M \Bp_n}_{\ell_1}
 + \frac{\mu_3}{2} \sum_{n=1}^N \Big\langle \big(\Bu - c_n \big)^{.2} \,, \Bp_n \Big\rangle_{\ell_2}
 \bigg\}
\end{equation}
over the convex set 
\begin{equation} \label{eq:BiLevelMinimization:Level2:pproblem:PrimalModel} 
 \mathcal Q_+ = \Big\{ \big[\Bp_n\big]_{n=1}^N \in X^N ~:~ 
 \sum_{n=1}^N p_n[\Bk] = 1 \,,~
 p_n [\Bk] > 0 \,,~ n = 1, \ldots, N \,,~ \Bk \in \Omega  
 \Big\}.
\end{equation}

{\bfseries The primal-dual model:}

Denote a convex set
$K_M(1) = \Big\{ \vec{\Bq}_n \in X^M ~:~ \norm{\vec{\Bq}_n}_{\ell_\infty} \leq 1 \Big\}$
with a dual variable $\vec{\Bq} = \big[ \vec{\Bq}_n \big]_{n=1}^N = \big[ \Bq_{nm} \big]_{n=[1,N]}^{m=[0,M-1]} \in X^{N M}$ 
of a primal variable $\vec{\Bp} = \big[ \Bp_n \big]_{n=1}^N \in X^N$.
From Lemma \ref{lem:DTVDGnorm:dualpair} in the Appendix (for the dual formulation of the directional total variation norm)
and the minimax theorem as found in \cite[Chapter 6]{EkelandTeman1999} and \cite{BaeYuanTai2010}, 
the primal-dual model is defined as
\begin{equation} \label{eq:BiLevelMinimization:2:step2:primaldual}
 \max_{ \vec{\Bq} \in \big[K_M(1)\big]^N }
 \min_{ \vec{\Bp} \in \mathcal Q_+} \bigg\{
 \mathcal L^\text{PD}(\vec{\Bp}; \vec{\Bq}) =
 \sum_{n=1}^N \Big\langle \Bp_n \,, \frac{\mu_3}{2} \big(\Bu - c_n \big)^{.2} + \text{div}^-_M \vec{\Bq}_n \Big\rangle_{\ell_2}  
 \bigg\}.
\end{equation}

{\bfseries The smoothed primal-dual model:}  Solve
\begin{equation} \label{eq:BiLevelMinimization:2:step2:SmoothedPrimalDual1}
 \max_{ \vec{\Bq} \in \big[K_M(1)\big]^N }
 \underbrace{
 \min_{ \vec{\Bp} \in \mathcal Q_+} \bigg\{
 \mathcal L^\text{PD}_{\xi>0}(\vec{\Bp}; \vec{\Bq}) = 
 \sum_{n=1}^N \Big\langle \Bp_n \,, \frac{\mu_3}{2} \big(\Bu - c_n \big)^{.2} + \text{div}^-_M \vec{\Bq}_n \Big\rangle_{\ell_2}
 + \xi \sum_{n=1}^N \Big\langle \Bp_n \,, \log \Bp_n \Big\rangle_{\ell_2} 
 \bigg\}
 }_{\displaystyle = \cL^\text{D}_{\xi>0} (\vec{\Bq}) := -\xi \sum_{\Bk \in \Omega} \log \Big[ \sum_{n=1}^N
    \exp \Big\{ -\frac{1}{\xi} \Big[ \frac{\mu_3}{2} (u[\Bk] - c_n)^2 + \text{div}^-_M \vec{q}_n[\Bk] \Big] \Big\}
    \Big] } \,.
\end{equation}
From Proposition \ref{prop:bilevel:smoothedprimaldualproblem} in the Appendix,
the solution of the primal $\vec{\Bp}$-problem
\begin{equation*}
 \min_{ \vec{\Bp} \in \mathcal Q_+} \mathcal L^\text{PD}_{\xi>0}(\vec{\Bp}; \vec{\Bq}) 
\end{equation*}
is given by (with $n = 1, \ldots, N$)
\begin{align*}
 \Bp^*_n &= \frac{\displaystyle \exp\bigg[ - \frac{1}{\xi} \Big[ \frac{\mu_3}{2} \big(\Bu - c_n \big)^{.2} + \text{div}^-_M \vec{\Bq}_n \Big]\bigg] }
                 {\displaystyle \sum_{i=1}^N \exp\bigg[ - \frac{1}{\xi} \Big[ \frac{\mu_3}{2} \big(\Bu - c_i \big)^{.2} + \text{div}^-_M \vec{\Bq}_i \Big]\bigg]  }
 \\& 
 = \frac{\displaystyle \exp\bigg[ - \frac{1}{\xi} \Big[ \frac{\mu_3}{2} \big(\Bu - c_n \big)^{.2} - \sum_{m=0}^{M-1} \big[ \sin\left(\frac{\pi m}{M}\right) \BDmT \Bq_{nm} + \cos\left(\frac{\pi m}{M}\right) \Bq_{nm} \BDn \big] \Big]\bigg] }
        {\displaystyle \sum_{i=1}^N \exp\bigg[ - \frac{1}{\xi} \Big[ \frac{\mu_3}{2} \big(\Bu - c_i \big)^{.2} - \sum_{m=0}^{M-1} \big[ \sin\left(\frac{\pi m}{M}\right) \BDmT \Bq_{im} + \cos\left(\frac{\pi m}{M}\right) \Bq_{im} \BDn \big] \Big]\bigg]  } \,.
\end{align*}
Due to its separability, we consider the dual $\vec{\Bq}$-problem 
\begin{equation} \label{eq:BiLevelMinimization:2:step2:SmoothedDual1}
 \max_{ \vec{\Bq} \in \big[K_M(1)\big]^N } \cL^\text{D}_{s>0} (\vec{\Bq}) 
\end{equation}
at $n = 1, \ldots, N$.
Given $\displaystyle  \vec{\Bq}_n = \big[ \Bq_{nm} \big]_{m=0}^{M-1}$ and
$\nabla^+_M \Bp_n = \big[ \partial^+_m \Bp_n \big]_{m=0}^{M-1}$,
the solution of (\ref{eq:BiLevelMinimization:2:step2:SmoothedDual1}) which is solved by Chambolle's projection \cite{Chambolle2004} at each iteration $t$ is
\begin{align*}
 \vec{\Bq}_n^{(t+1)} = \frac{ \vec{\Bq}_n^{(t)} + \tau \nabla^+_M \Bp_n^{(t)} }{ 1 + \tau \abs{\nabla^+_M \Bp_n^{(t)}} }
 \,,~~ n = 1, \ldots, N 
\end{align*}
and its element form is (with $m = 0, \ldots, M-1$)
\begin{align*}
 \Bq_{nm}^{(t+1)} = \frac{\displaystyle \Bq_{nm}^{(t)} + \tau \big[ \sin\left(\frac{\pi m}{M}\right) \BDm \Bp_n + \cos\left(\frac{\pi m}{M}\right) \Bp_n \BDnT \big] }
                         {\displaystyle 1 + \tau \Bigg[ \sum_{m=0}^{M-1} \bigg[ \sin\left(\frac{\pi m}{M}\right) \BDm \Bp_n + \cos\left(\frac{\pi m}{M}\right) \Bp_n \BDnT \bigg]^{.2} \Bigg]^{.\frac{1}{2}} } \,.
\end{align*}
The numerical solution of the bilevel SHT model is described in
Algorithms 2-4.  
Figure \ref{fig:Barbara:BilevelModel} shows the bilevel SHT model applied to the same noisy image as in Figure 
\ref{fig:Barbara:CombinedModel:Noise}.  Note, by comparing the upper left-hand corners of subfigures (l) and (k) of 
Figures \ref{fig:Barbara:CombinedModel:Noise} and \ref{fig:Barbara:BilevelModel} respectively, that the bilevel SHT
model does a better job of fully segmenting the large scale texture from the homogeneous regions.  However, these 
binarized versions also reveal that the bilevel SHT model is slightly oversensitive as some small artifacts are introduced.
Finally, in Figure \ref{fig:galaxy:BilevelModel:2} we apply the bilevel SHT model to an image of a galaxy with many stars in the background.  Although the stars may constitute small-scale texture, in cases such as these we may set the 
texture component ($\Bv$) to 0, thereby treating this fine texture as noise.



\begin{algorithm}
\label{alg:SHTMS}
\caption{The Bilevel-SHT model}
\begin{algorithmic}
\small 
 \STATE{ {\bfseries Denote parameters:}
 $\kappa_{\text{d}} = \Big[  L, S, c_{\mu_1}, c_{\mu_2}, \big[\beta_i\big]_{i=1}^4, \nu  \Big]$
 and
 $\kappa_{\text{s}} = \Big[  M, N, \xi, \mu_3, \tau \Big]$
 }

 \STATE{ {\bfseries Denote variables:}
 \begin{align*}
  \theta = \bigg[
  \big[ \Br_l \big]_{l=0}^{L-1} \,, \big[ \Bw_s \big]_{s=0}^{S-1} \,, \big[ \Bg_s \big]_{s=0}^{S-1}
  \,, \big[\boldsymbol{\lambda}_{\mathbf{1}l}\big]_{l=0}^{L-1} \,, \big[\boldsymbol{\lambda}_{\mathbf{2}a}\big]_{a=0}^{S-1}
  \,, \boldsymbol{\lambda}_{\mathbf 3} \,, \boldsymbol{\lambda}_{\mathbf 4}  
  \bigg]
 \end{align*}
 }
 
 \STATE{
  {\bfseries Initialization:}$ \, \Bu^{(0)} = \Bf \,, \Bv^{(0)} = \Beps^{(0)} = \theta^{(0)} = \vec{\Bp}^{(0)} = \vec{\Bq}^{(0)} =\boldsymbol 0 \,,~ c_n^{(0)} = (n-1) \lfloor \frac{255}{N} \rfloor \,, n = 1, \ldots, N$.
 }  
 
\STATE{} 
\FOR{$t_1 = 1 \,, \ldots \,, T_1$}
\STATE
{ 

  \STATE{{\bfseries Level 1 (Decomposition):} $(\Bu, \Bv, \Beps) \in X^3$}
  \FOR{$t_2 = 1 \,, \ldots \,, T_2$}    
  \STATE
  {
  \begin{equation*}
   \Big[ \Bu^{(t_2)} \,, \Bv^{(t_2)} \,, \Beps^{(t_2)} \,, \theta^{(t_2)} \Big] = 
   \text{DG3PD} \Big( \Bu^{(t_2-1)} \,, \Bv^{(t_2-1)} \,, \Beps^{(t_2-1)} \,, \theta^{(t_2-1)} \,;~ \Bf \,, \kappa_\text{d} \Big)
  \end{equation*}
  }
  \ENDFOR
  
  \STATE{} 
  
  \STATE{{\bfseries Level 2 (Multiphase Segmentation of $\Bu$)}: $(\vec{c} \,, \vec{\Bp} \,, \vec{\Bq}) \in \mathbb R^N \times X^{N + NM}$} 
  \begin{equation*}
   \Big[ \vec{c}^{(t_1)} \,, \vec{\Bp}^{(t_1)} \,, \vec{\Bq}^{(t_1)} \Big] = 
   \text{SHTMS} \Big( \vec{c}^{(t_1-1)} \,, \vec{\Bp}^{(t_1-1)} \,, \vec{\Bq}^{(t_1-1)} 
   \,;~ \Bu^{(T_2)} \,, \kappa_\text{s}
   \Big)
  \end{equation*}
 }
\ENDFOR

\STATE{} 
\STATE{} 

\STATE{{\bfseries Global minimizer of (\ref{eq:BiLevelMinimization:1}):}
\begin{align*}
 \Bu^* &= \Bu^{T_1 T_2} \,, \Bv^* = \Bv^{T_1 T_2} \,, \Beps^* = \Beps^{T_1 T_2} \,,
 \vec{\Bq}^* = \vec{\Bq}^{T_1} \,, \vec{c}^* = \vec{c}^{T_1} \,,
 ~ \text{and}
 \\
 \Bp_h^* &= 
 \begin{cases}
  1 \,, & h = \displaystyle \argmin_{1 \leq n \leq N} \bigg\{ \underbrace{ -\sum_{m=0}^{M-1} \big[ \sin\left(\frac{\pi m}{M}\right) \BDmT \Bq^*_{nm} + \cos\left(\frac{\pi m}{M}\right) \Bq^*_{nm} \BDn \big] }_{ = \text{div}^-_M \vec{\Bq}_n^*} 
              + \frac{\beta_5}{2} \big( \Bu^* - c_n^* \big)^{.2}  
              \bigg\}
  \\
  0 \,, & \text{else}
 \end{cases}
 \,, h = 1, \ldots, N
\end{align*}

} 
\end{algorithmic}
\end{algorithm}

\begin{algorithm}
\label{alg:Level1:DG3PD:Part1}
\caption{Level 1: The Discrete DG3PD Model \cite{ThaiGottschlich2016DG3PD}}
\begin{algorithmic}
\small

\STATE
{
\begin{align*}
 \Big[ \Bu^{(\text{new})} \,, \Bv^{(\text{new})} \,, \Beps^{(\text{new})} \,, \theta^{(\text{new})} \Big] = 
 \text{DG3PD} \Big( \Bu^{(\text{old})} \,, \Bv^{(\text{old})} \,, \Beps^{(\text{old})} \,, \theta^{(\text{old})} \,;~ \Bf \,, \kappa_\text{d} \Big)
\end{align*}
}

\STATE
{
  {\bfseries 1. Compute}
   $\Big( \big[ \Br_b^{(t)} \big]_{b=0}^{L-1} \,, \big[ \mathbf{w}_a^{(t)} \big]_{a=0}^{S-1} \,,
    \big[ \mathbf{g}_a^{(t)} \big]_{a=0}^{S-1} \,, \Bv^{(t)} \,, \Bu^{(t)} \,, \Beps^{(t)} \Big) \in X^{L+2S+3}
   $:
   \begin{align*}
    \Br_b^{(t)} &~=~ \Shrink \Big( \sin\left( \frac{\pi b}{L} \right) \BDm \Bu^{(t-1)} + \cos\left( \frac{\pi b}{L} \right) \Bu^{(t-1)} \BDnT 
    - \frac{\boldsymbol{\lambda}_{\boldsymbol{1} b}^{(t-1)}}{\beta_1} \,, \frac{1}{\beta_1} \Big)
    \,,~ b = 0 \,, \ldots \,, L-1
    \\
    \mathbf{w}_a^{(t)} &~=~
    \Shrink \Big( \mathbf{t}_{\mathbf{w}_a} ~:=~ \mathbf{g}_a^{(t-1)} -
    \frac{ \boldsymbol{\lambda}_{\boldsymbol{2}a}^{(t-1)} }{\beta_2} \,, \frac{\mu_1}{\beta_2} \Big)
    \,,~ \quad a = 0 \,, \ldots \,, S-1
    \\
    \mathbf{g}_a^{(t)} &~=~ \RE \bigg[ \mathcal F^{-1} \Big\{ \mathcal A^{(t)}(\Bz) \cdot \mathcal B^{(t)}(\Bz) \Big\} \bigg] [\Bk] \Big |_{\Bk \in \Omega}
    \,,~ a = 0 \,, \ldots \,, S-1
    \\
    \Bv^{(t)} &= \Shrink \bigg( \mathbf{t_v} ~:=~ \frac{\beta_3}{\beta_3 + \beta_4}
    \bigg( \sum_{s=0}^{S-1} \Big[ \sin\left( \frac{\pi s}{S} \right) \BDm \Bg_s^{(t)} + \cos\left( \frac{\pi s}{S} \right) \Bg_s^{(t)} \BDnT \Big] - \frac{\boldsymbol{\lambda}_{\boldsymbol 3}^{(t-1)}}{\beta_3} \bigg)
    \\& \qquad \qquad \qquad \qquad
    + \frac{\beta_4}{\beta_3 + \beta_4} \bigg( \Bf - \Bu^{(t-1)} - \Beps^{(t-1)} + \frac{\boldsymbol{\lambda}_{\boldsymbol 4}^{(t-1)}}{\beta_4} \bigg)
    \,,~ \frac{\mu_2}{\beta_3 + \beta_4}
    \bigg)   
    \\
    \Bu^{(t)} &~=~ \RE \bigg[ \mathcal F^{-1} \Big\{ \mathcal X^{(t)}(\Bz) \cdot \mathcal Y^{(t)}(\Bz) \Big\} \bigg] [\Bk] \Big |_{\Bk \in \Omega}
    \\
    \Beps^{(t)} &~=~ \Big( \Bf - \Bu^{(t)} - \Bv^{(t)} + \frac{\boldsymbol{\lambda}_{\boldsymbol 4}^{(t-1)}}{\beta_4} \Big) ~-~
    \CST \big( \Bf - \Bu^{(t)} - \Bv^{(t)} + \frac{\boldsymbol{\lambda}_{\boldsymbol 4}^{(t-1)}}{\beta_4} \,, \nu \big)
   \end{align*} 
  {\bfseries 2. Update}
  $\Big( \big[\boldsymbol{\lambda}_{\mathbf{1}b}^{(t)}\big]_{b=0}^{L-1} \,, \big[\boldsymbol{\lambda}_{\mathbf{2}a}^{(t)}\big]_{a=0}^{S-1}
  \,, \boldsymbol{\lambda}_{\boldsymbol 3}^{(t)} \,, \boldsymbol{\lambda}_{\boldsymbol 4}^{(t)} \Big) \in X^{L+S+2}$:
  \begin{align*}
   \boldsymbol{\lambda}_{\mathbf{1}b}^{(t)} &~=~ \boldsymbol{\lambda}_{\mathbf{1}b}^{(t-1)}
   ~+~ \gamma \beta_1 \Big( \mathbf{r}_b^{(t)} - \sin\left( \frac{\pi b}{L} \right) \BDm \Bu^{(t)} - \cos\left( \frac{\pi b}{L} \right) \Bu^{(t)} \BDnT \Big) 
   \,, \quad b = 0 \,, \ldots \,, L-1
   \\
   \boldsymbol{\lambda}_{\mathbf{2} a}^{(t)} &~=~ \boldsymbol{\lambda}_{\mathbf{2} a}^{(t-1)}
   ~+~ \gamma \beta_2 \Big( \mathbf{w}_a^{(t)} - \mathbf{g}_a^{(t)} \Big) 
   \,, \quad a = 0 \,, \ldots \,, S-1
   \\
   \boldsymbol{\lambda}_{\mathbf{3}}^{(t)} &~=~ \boldsymbol{\lambda}_{\mathbf{3}}^{(t-1)}
   ~+~ \gamma \beta_3 \Big( \Bv^{(t)} - \sum_{s=0}^{S-1} \big[ \sin\left( \frac{\pi s}{S} \right) \BDm \mathbf{g}_s^{(t)} + \cos\left(\frac{\pi s}{S} \right) \mathbf{g}_s^{(t)} \BDnT \big] \Big) 
   \\
   \boldsymbol{\lambda}_{\mathbf{4}}^{(t)} &~=~ \boldsymbol{\lambda}_{\mathbf{4}}^{(t-1)}
   ~+~ \gamma \beta_4 \big( \Bf - \Bu^{(t)} - \Bv^{(t)} - \Beps^{(t)} \big) 
  \end{align*}
 }
\end{algorithmic}
\end{algorithm}

\begin{algorithm}
\label{alg:Level1:DG3PD:Part2}
\begin{algorithmic}
\small
\STATE
{\bfseries (continued Algorithm 3)}
\begin{align*}
 &\mathcal A(\Bz) ~=~ \Bigg[ \beta_2 \mathbf{1_{mn}} + \beta_3 \abs{ \sin \left( \frac{\pi a}{S} \right) (z_1 - 1) + \cos \left( \frac{\pi a}{S} \right) (z_2 - 1) }^2 \Bigg]^{-1}
 \,,
 \\
 &\mathcal B(\Bz) ~=~
 \beta_2 \Big[ W_a(\Bz)  + \frac{\Lambda_{2a}(\Bz) }{\beta_2} \Big]
 ~+~ \beta_3 \Big[ \sin\left( \frac{\pi a}{S} \right) (z_1^{-1} - 1) + \cos\left( \frac{\pi a}{S} \right) (z_2^{-1} - 1) \Big] \times
 \\&
 \bigg[ V(\Bz) - \sum_{s=[0\,,S-1] \backslash \{a\} }
 \Big[ \sin\left( \frac{\pi s}{S} \right) (z_1 - 1) + \cos\left( \frac{\pi s}{S} \right) (z_2 - 1) \Big] G_s(\Bz)
 + \frac{\Lambda_3(\Bz)}{\beta_3} \bigg] \,,
 \\
 &\mathcal X(\Bz) =
 \Bigg[ \beta_4 \mathbf{1_{mn}} + \beta_1 \sum_{l=0}^{L-1} \abs{ \sin\left( \frac{\pi l}{L} \right) (z_1 - 1) + \cos\left( \frac{\pi l}{L} \right) (z_2 - 1) }^2 \Bigg]^{-1} \,,
 \\
 &\mathcal Y(\Bz) =
 \beta_4 \Big[ F(\Bz) - V(\Bz) - \mathcal{E}(\Bz) + \frac{\Lambda_4(\Bz)}{\beta_4} \Big]
 + \beta_1 \sum_{l=0}^{L-1} \Big[ \sin \left( \frac{\pi l}{L} \right) (z_1^{-1} - 1) + \cos \left( \frac{\pi l}{L} \right) (z_2^{-1} -1) \Big]
 \Big[ R_l(\Bz) + \frac{\Lambda_{1l}(\Bz)}{\beta_1} \Big] .
\end{align*}
{\bfseries Choice of Parameters}
\begin{align*}
 \mu_1 &= c_{\mu_1} \beta_2 \cdot \max_{\Bk \in \Omega} \big( \abs{t_{\Bw_a}[\Bk]} \big) \,,~
 \mu_2 = c_{\mu_2} (\beta_3 + \beta_4) \cdot \max_{\Bk \in \Omega} \big( \abs{t_\Bv[\Bk]} \big)
 \text{  and  }
 \beta_2 = c_{\beta_{2}} \beta_3 \,, \beta_3 = \frac{\theta}{1 - \theta} \beta_4 \,, \beta_1 = c_{\beta_{1}} \beta_4.
\end{align*}
\end{algorithmic}
\end{algorithm}

\begin{algorithm}
\label{alg:Level2:MultiphaseSegmentation}
\caption{Level 2: The SHTMS}
\begin{algorithmic}
\small

\STATE
{
\begin{equation*}
 \Big[ \vec{c}^{(t+1)} \,, \vec{\Bp}^{(t+1)} \,, \vec{\Bq}^{(t+1)} \Big] = 
 \text{SHTMS} \Big( \vec{c}^{(t)} \,, \vec{\Bp}^{(t)} \,, \vec{\Bq}^{(t)} 
 \,;~ \Bu \,, \kappa_{\text{s}}
 \Big)
\end{equation*}
}

\STATE{
{\bfseries Compute}
$\big( \vec{c} \,, \vec{\Bp} \,, \vec{\Bq} \big) \in \mathbb R^N \times X^{N + NM}$:
\begin{align*}
 c_n^{(t+1)} &= \frac{\displaystyle \sum_{\Bk \in \Omega} u[\Bk] p_n^{(t)}[\Bk] }
             {\displaystyle \sum_{\Bk \in \Omega} p_n^{(t)}[\Bk] }  
 \,,~ n = 1, \ldots, N
 \\
 \Bp^{(t+1)}_n 
 &= \frac{\displaystyle \exp\bigg[ - \frac{1}{\xi} \Big[ -\sum_{m=0}^{M-1} \big[ \sin\left(\frac{\pi m}{M}\right) \BDmT \Bq_{nm}^{(t)} + \cos\left(\frac{\pi m}{M}\right) \Bq_{nm}^{(t)} \BDn \big] + \frac{\mu_3}{2} \big(\Bu - c_n^{(t+1)} \big)^{.2} \Big]\bigg] }
         {\displaystyle \sum_{i=1}^N \exp\bigg[ - \frac{1}{\xi} \Big[- \sum_{m=0}^{M-1} \big[ \sin\left(\frac{\pi m}{M}\right) \BDmT \Bq_{im}^{(t)} + \cos\left(\frac{\pi m}{M}\right) \Bq_{im}^{(t)} \BDn \big] + \frac{\mu_3}{2} \big(\Bu - c_i^{(t+1)} \big)^{.2} \Big]\bigg]  }
 \,,~ n = 1, \ldots, N
 \\
 \Bq_{nm}^{(t+1)} &= \frac{\displaystyle \Bq_{nm}^{(t)} + \tau \big[ \sin\left(\frac{\pi m}{M}\right) \BDm \Bp_n^{(t+1)} + \cos\left(\frac{\pi m}{M}\right) \Bp_n^{(t+1)} \BDnT \big] }
                          {\displaystyle 1 + \tau \Big[ \sum_{m=0}^{M-1} \big[ \sin\left(\frac{\pi m}{M}\right) \BDm \Bp_n^{(t+1)} + \cos\left(\frac{\pi m}{M}\right) \Bp_n^{(t+1)} \BDnT \big]^{.2} \Big]^{.\frac{1}{2}} }
 \,,~~ n = 1, \ldots, N \,,~ m = 0, \ldots, M-1
\end{align*}
}

\end{algorithmic}
\end{algorithm}


\begin{figure}[!ht]
\begin{center}  

 \includegraphics[height=0.58\textheight]{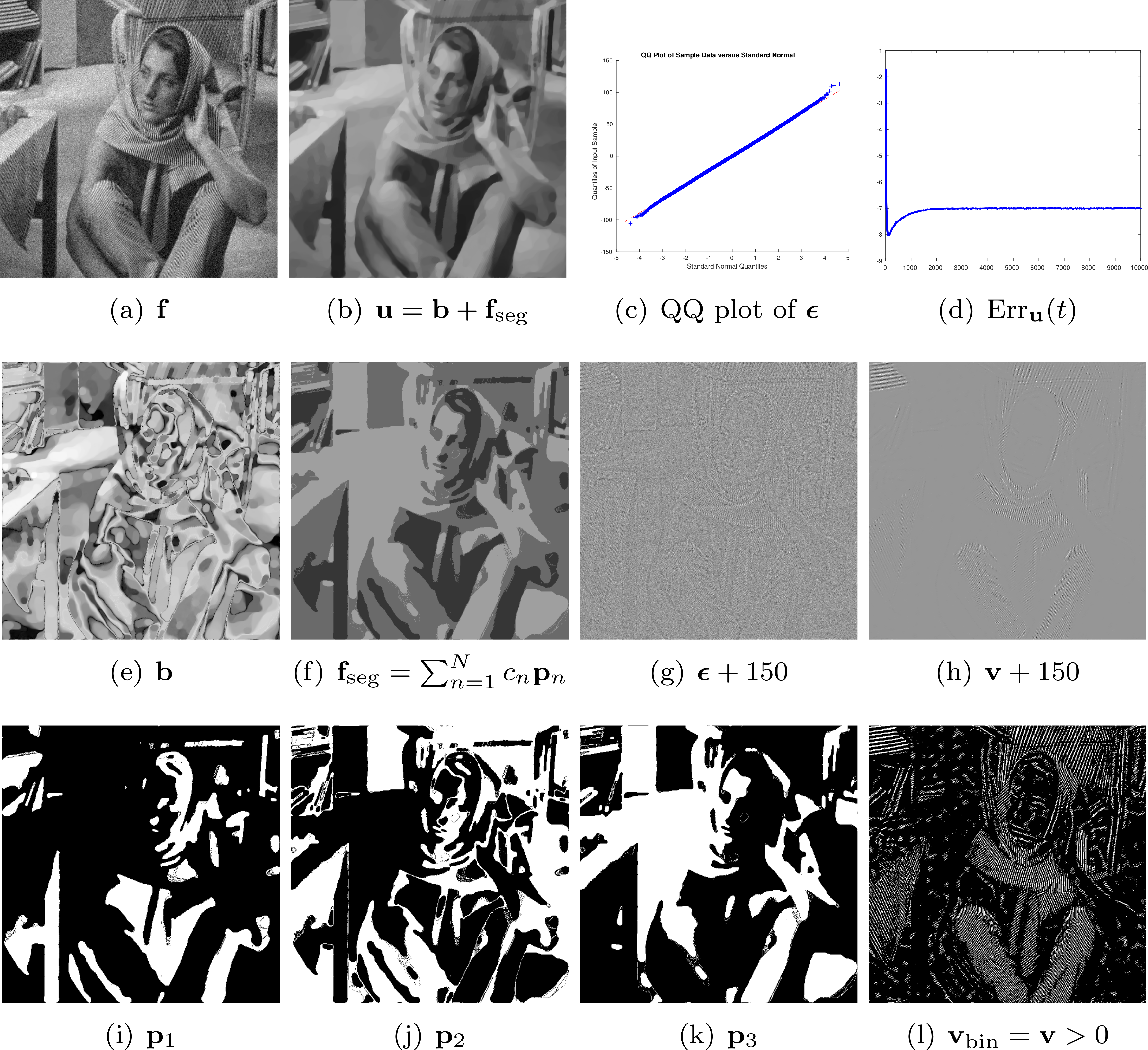}
 
 \caption{The bilevel SHT model applied to the noise image in Figure 6 with parameters
          $\sigma = 20 \,, L = S = 9 \,, c_{\mu_1} = c_{\mu_2} = 0.03 \,, 
           \theta = 0.9 \,, c_1 = 1, c_2 = 1.3 \,, \beta_4 = 0.04 \,, 
           \beta_3 = \frac{\theta}{1-\theta}\beta_4 \,, \beta_1 = c_{\beta_{1}} \beta_4 \,, \beta_2 = c_{\beta_{2}} \beta_3 \,, 
           \nu = 16 \,, M = 2 \,, N = 3 \,, \xi = 0.001 \,, \tau = 0.1 \,, \beta_5 = 100 \,, T_1 = T_2 = 100. 
          $
         } 
\label{fig:Barbara:BilevelModel}
\end{center}
\end{figure}

\begin{figure}[!ht]
\begin{center}  

 \includegraphics[height=0.6\textheight]{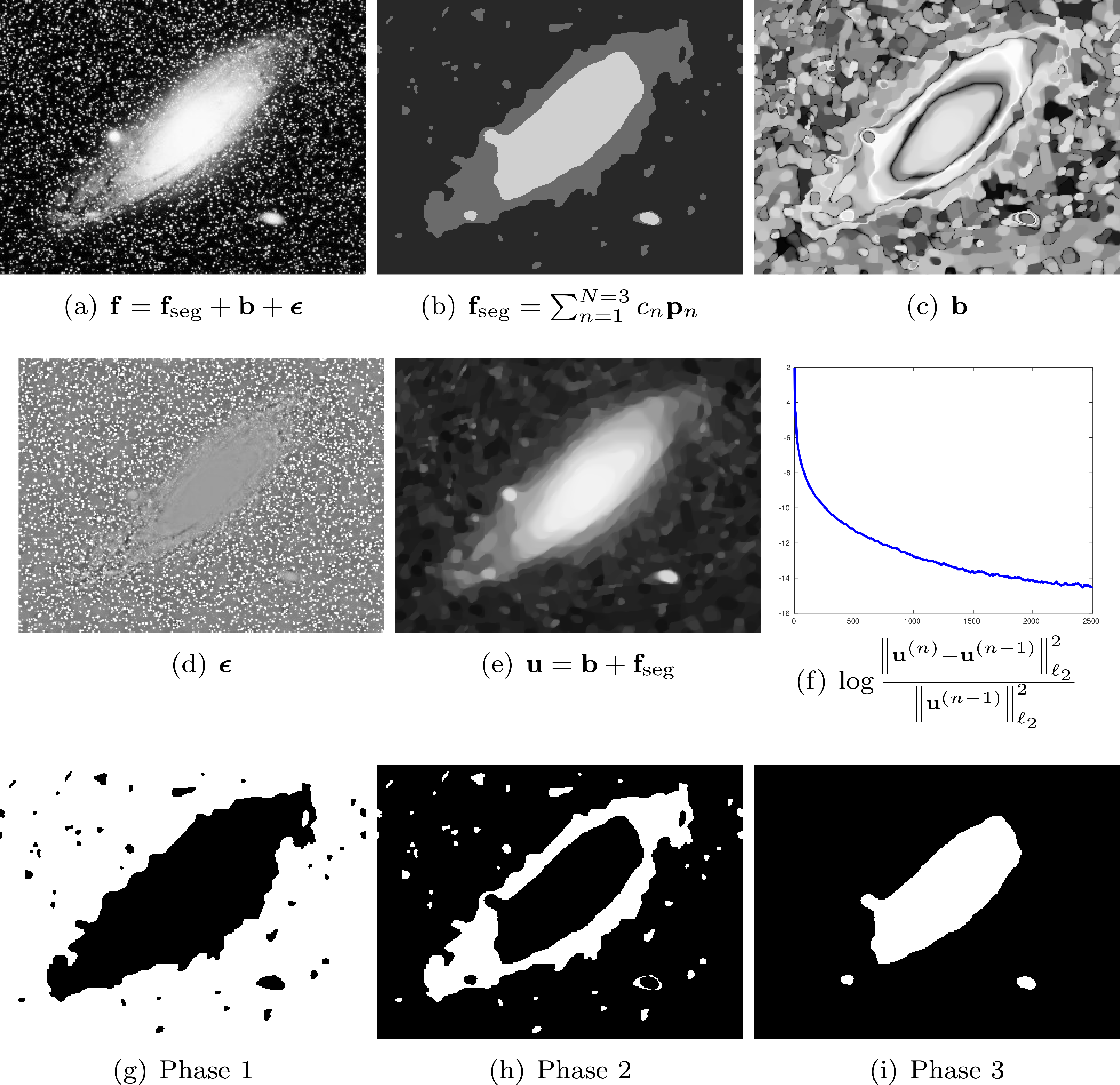}
 
 \caption{The bilevel SHT model applied to the galaxy image with no additional noise added.  The background stars
          represent small-scale texture which can be treated as noise and measured by $\norm{ \cC\{ \Beps \} }_{\ell_\infty}$.
          The constant values are $\vec{c} = [40.1 \,, 106.81 \,,  208.21]$ and $\text{MSE} = 3.8 10^{-7}$.
          Parameters were chosen as in Figure 7 with the exception of $\nu = 40 \,, T_1 = T_2 = 50.$ 
         } 
\label{fig:galaxy:BilevelModel:2}
\end{center}
\end{figure}

\section{Comparison with Alternative Approaches}
\label{sec:comparisons}
We now apply our approach to several images in order to demonstrate and compare the performance with alternative approaches.  The proficiency of and some properties of our models were demonstrated in Figures 
\ref{fig:Barbara:CombinedModel:NoNoise}, \ref{fig:Barbara:CombinedModel:Noise} and \ref{fig:Barbara:BilevelModel}. 
Here we focus on more subtle properties and compare our approach with existing methods.

\setcounter{subfigure}{0}
\begin{figure}[!ht]
\begin{center}

 \includegraphics[width=1\textwidth]{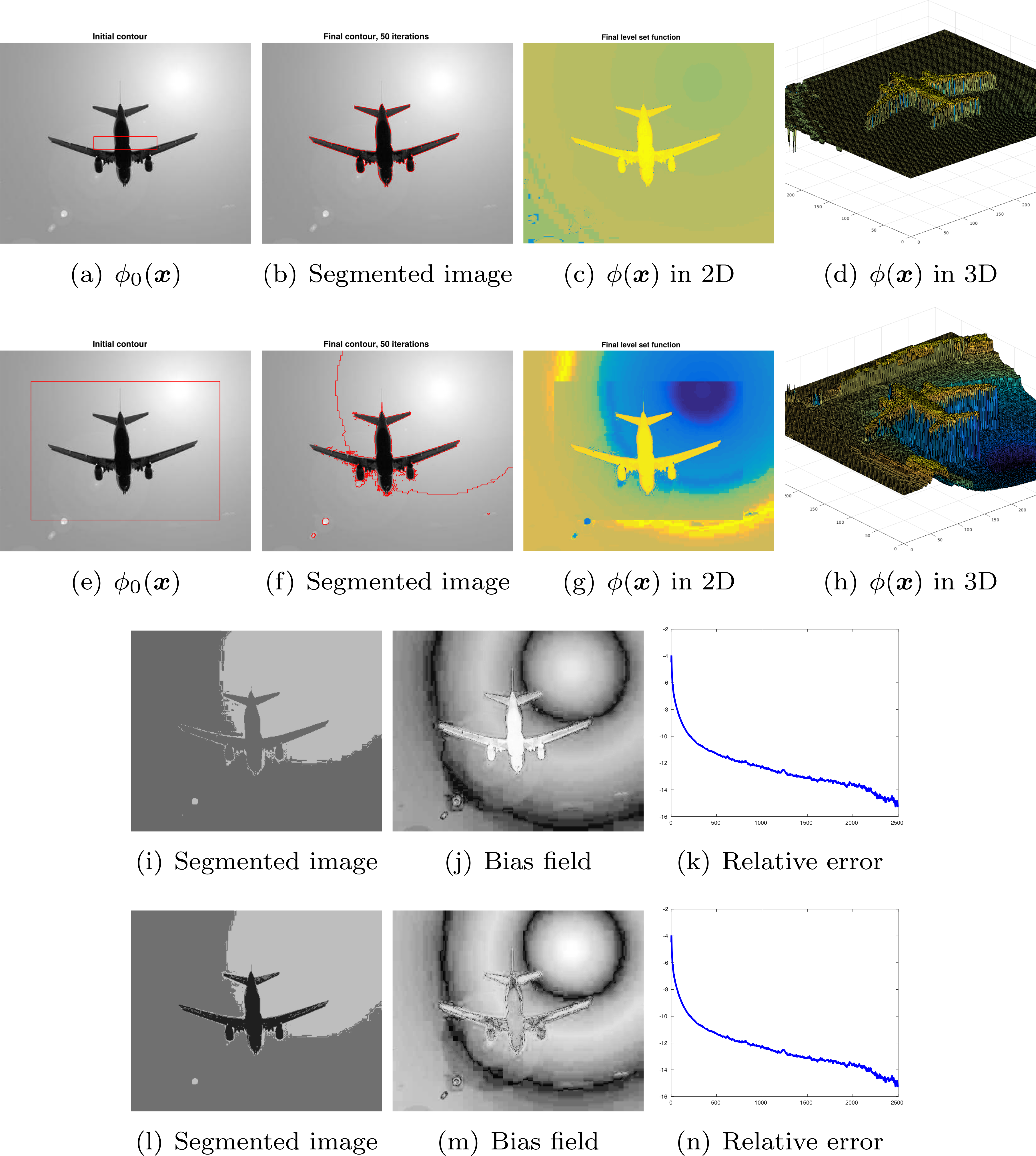}
 
 \caption{The Chan-Vese model (rows one and two) produces different segmentations for different initializations of the level set function $\phi(\Bx)$.  The Chan-Vese model parameters were chosen as $\text{Iteration} = 50 \,, \mu = \lambda_1 = \lambda_ 2 = \epsilon = 1 \,, \text{timestep} = 0.1 \,, v = 0$.  The third and fourth rows show reconstructed images from the 2 and 3-phase bilevel SHT model with $\Bu_\text{re} = \Bb + \sum_{n=1}^{N} c_n \Bp_n$.  The MSE for $\Bu_\text{re}$ in the 3-phase bilevel SHT model is $3.15 10^{-6}$ and the relative error in Subfigures (k) and (n) is measured by (\ref{eq:relativeError:u}).  The bilevel model parameters were selected as in Figure 7 with $\nu = 0$ and $T_1 = T_2 = 50$.
%
         } 
         \label{fig:comparison:ChanVese_Bilevel}
\end{center}
\end{figure}

Figure \ref{fig:comparison:ChanVese_Bilevel} depicts a homogeneous image of an airplane where we compare our bilevel SHT model to the classic Chan-Vese model \cite{ChanVese2001}.  The Chan-Vese model is applied in the first two rows and note that for different initial conditions -- Subfigures (a) and (e) -- that the model produces very different segmentations.  However, in our bilevel SHT model, we solve a convex minimization and as a result, produce a nearly unique result.  The bilevel SHT model is applied in rows three and four with 2 and 3 phases respectively.  Note that the 3-phase model is able to segment the sun, sky, and airplane whereas the 2-phase model combines the sky and airplane.   Subfigures (k) and (n) show convergence on the log scale, thus implying very fast convergence on the linear scale.

\setcounter{subfigure}{0}
\begin{figure}[!ht]
\begin{center}

 \includegraphics[width=1\textwidth]{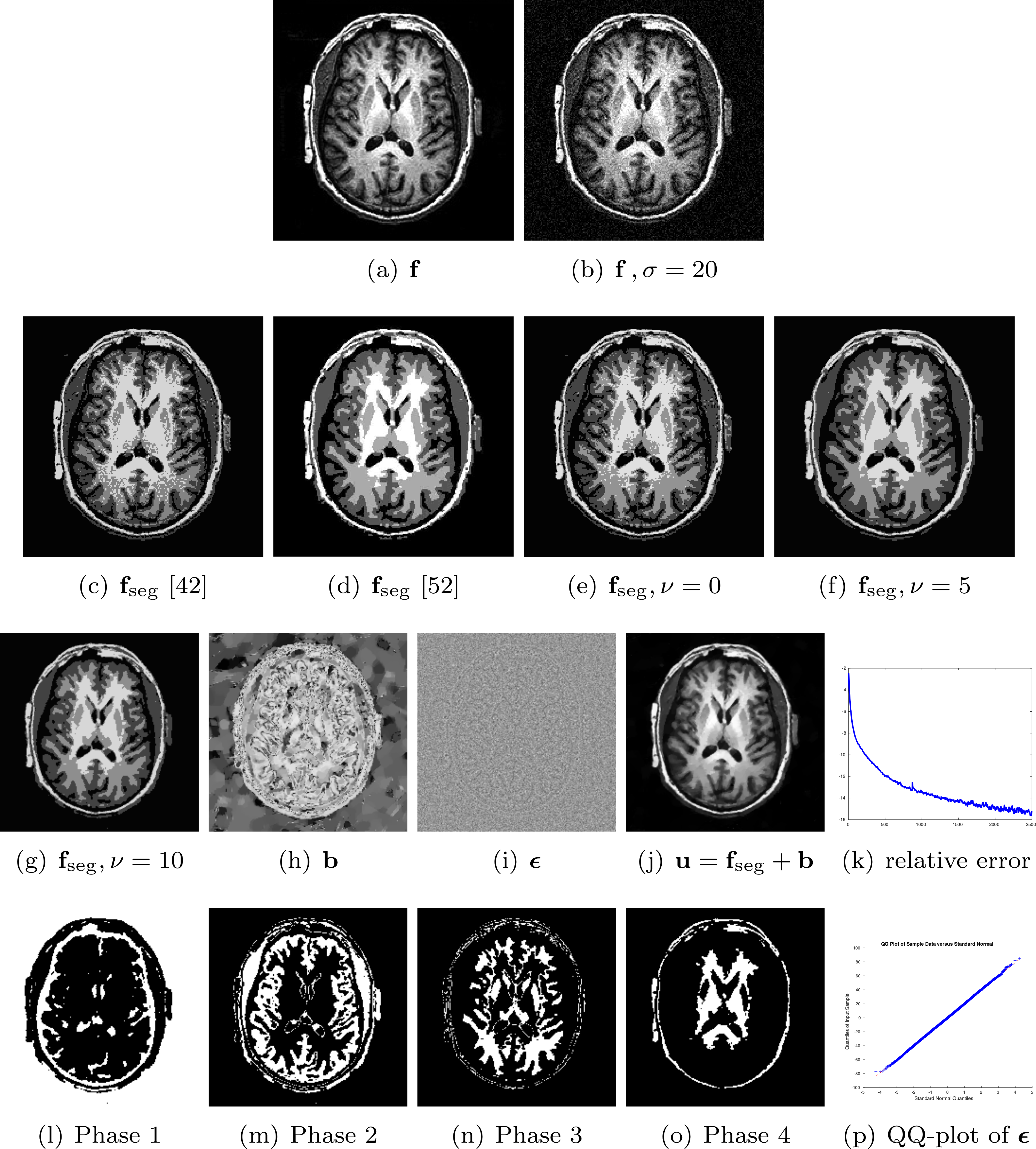}

 \caption{A comparison with the methods in \cite{BaeYuanTai2010} and \cite{BrownChanBresson2009}. 
          The segmented image in Subfigure (d) is taken from \cite{BrownChanBresson2009}.  The results in row two were based on the segmentation of Subfigure (a); those in rows three and four were applied to the noisy version of the image in Subfigure (b).  The bilevel model parameters are the same as in Figure 7 with $\nu = 0, 5, \text{ and } 10$ as noted in Subfigures (e), (f), and (g), respectively.  The relative error of $\Bu$ is shown in Subfigures (k) and~ (p).} 
 \label{fig:comparison:BaeYuanTai2010_Bilevel:brain}
\end{center}
\end{figure}

Figure \ref{fig:comparison:BaeYuanTai2010_Bilevel:brain} contains (entirely) homogeneous images of a brain and were analyzed in \cite{BrownChanBresson2012}.  Subfigure (a) contains the original image with no noise added and in Subfigure (b) we add i.i.d.\ Gaussian noise with standard deviation 20.  The second row (Subfigures (c) - (f)) show the resulting segmented images following the procedures in \cite{BaeYuanTai2010, BrownChanBresson2009}, and the bilevel SHT method with $\nu = 0$ and $\nu = 5$, respectively.  We note that Subfigure (d) was taken directly from \cite{BrownChanBresson2009} and Subfigure (c) was programmed by hand.  Note that in Subfigure (e) with $\nu = 0$, some small-scale residual still remains but when we increase the threshold to $\nu = 5$ in Subfigure (f), this residual is removed resulting in a smoother segmented image.  Thus, our procedure compares favorably even to 
other methods that apply only to homogeneous images.  Further note that when noise is added (Subfigure (b)), our procedure is able to not only filter out the additional noise from the segmented images -- see Subfigures (l) - (o) -- but also produces a decomposition with well-separated meaningful components.  Note also that by examining Subfigure (p),
we see that with $\nu = 10$, almost all of the resulting noise was that which was added (i.e. very little of the information from the original image was classified as noise).  As in Figure \ref{fig:comparison:BaeYuanTai2010_Bilevel:brain}, Subfigure (k) shows convergence on the log scale implying very fast convergence on the linear scale.

\setcounter{subfigure}{0}
\begin{figure}[!ht]
\begin{center}

 \includegraphics[width=1\textwidth]{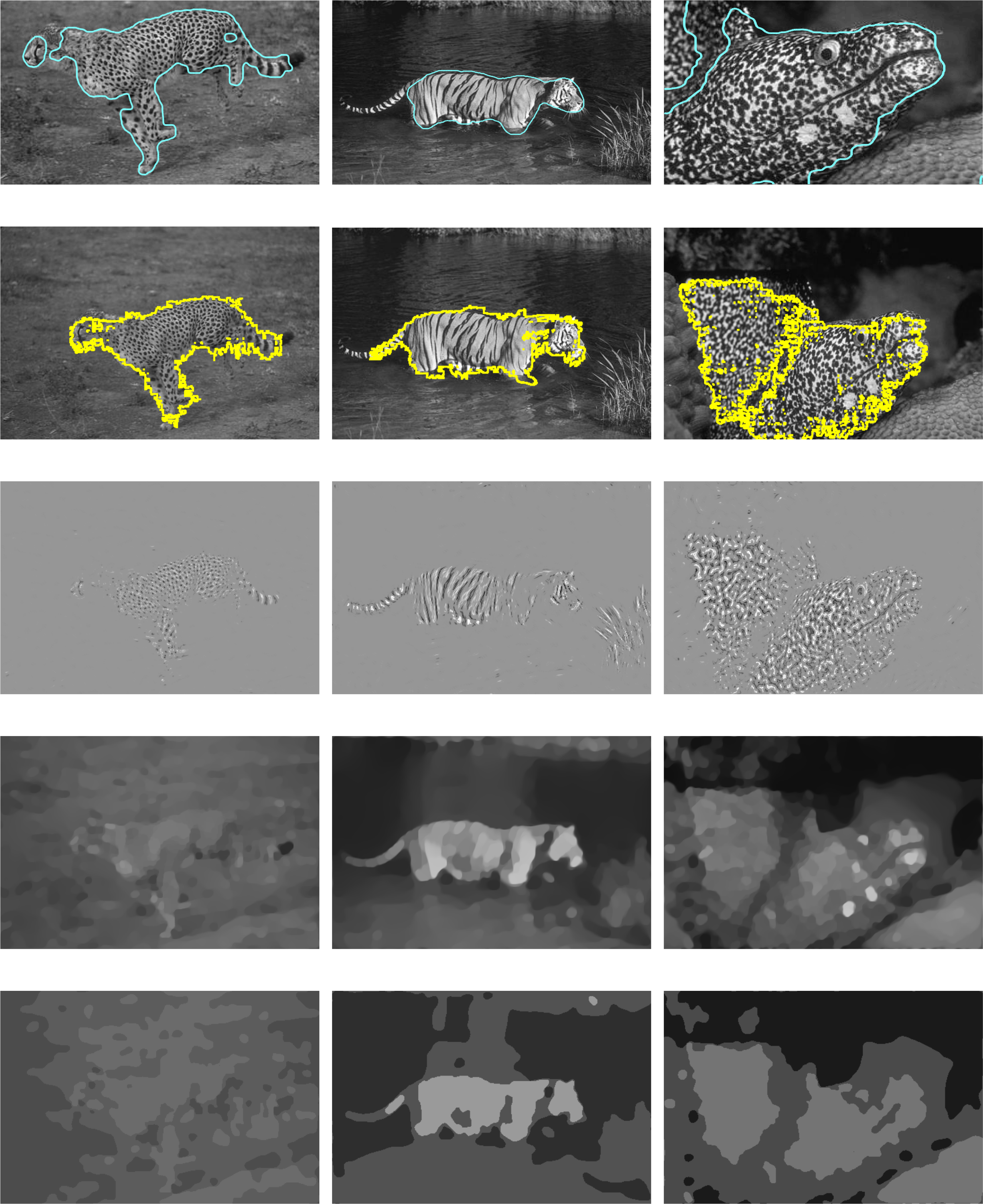}
 
 \caption{Animal images with textural regions of interest.  Row one shows the original images and segmentations taken from \cite{NiBressonChanEsedoglu2009}.  Rows 2 - 5 show the various components of our bilevel SHT model; see main text for details.  The bilevel model parameters are the same as in Figure 7 with the following exceptions:  $T_2 = 1$ for each of the three images; $L = 40, 40, 12$ and $\nu = 30, 25, 15$ for the images in columns 1, 2, and 3, respectively.
         } 
         \label{fig:comparison:texture}
\end{center}
\end{figure}

Finally, we move on to consider images that contain only a textural region of interest.  The images in Figure \ref{fig:comparison:texture} depict various animals each with well-defined textural markings; the first row of images are taken directly from \cite{NiBressonChanEsedoglu2009}.  We begin by noting that many methods already exist to define a region of texture; see for example the methodology in \cite{NiBressonChanEsedoglu2009}.  
Our SHT procedures were not designed for this goal, though extracting such a region is possible with our bilevel SHT model.  Row three of Figure \ref{fig:comparison:texture} shows the texture component of the bilevel SHT decomposition.  This texture component was then binarized and a morphological operator applied to obtain the textural boundaries shown in row 2.  Rows four and five show the piecewise-smooth and piecewise-constant bilevel SHT components, respectively.  Note that our bilevel model, though not designed for this purpose, still does an admirable job of capturing the textural boundary.  One advantage to our approach is that instead of only defining this boundary, our procedure also allows one to separate the texture inside from the remainder of the image.

\section{Conclusion}
\label{sec:conclusion}
This work provides algorithms to simultaneously decompose and segment images containing 
regions of both texture and homogeneity.  This can be seen as an extension of the Mumford and Shah model to a much larger class of natural images.  Two approaches are presented corresponding to 
the two alternative solutions to the $\text{G}_S$-norm for texture $\Bv$; the multiphase SHT approach 
based on the $\text{G}_S$-norm solution provided by Aujol and Chambolle \cite{AujolChambolle2005}
and bilevel SHT approach based on the solution of Vese and Osher \cite{VeseOsher2003}.  
In practice we find that the bilevel SHT algorithm is better able to discriminate between 
the homogeneous and textural regions and thus we focus on this approach in Section 5 and recommend it in practical applications.  The likely reason for the superior performance of the bilevel model is that the Vese and Osher \cite{VeseOsher2003} approach to solving the $\text{G}_S$-norm utilized in the bilevel SHT model approximates $\norm{ \vec{\Bg} }_{\ell_\infty}$ with $\norm{\vec{\Bg}}_{\ell_1}$.  This enhances the sparsity of $\vec{\Bg}$ and though the original image is typically not sparse, it is sparse in some transform domain which is usually measured by $\ell_0$-norm (or its relaxed $\ell_1$-norm) in function space.  One shortcoming of our models is the large number of required parameters and we hope to reduce the size of the parameter set as well as to analyze the convergence of the proposed minimization in future work.

\section*{Acknowledgements}
The authors thank Professors Len Stefanski, David Banks and Ingrid Daubechies for their helpful comments.  This material was based upon work partially supported by the National Science Foundation under Grant DMS-1127914 to the Statistical and Applied Mathematical Sciences Institute. Any opinions, findings, and conclusions or recommendations expressed in this material are those of the author(s) and do not necessarily reflect the views of the National Science Foundation. 


\bibliographystyle{unsrt}


\section{Appendix} \label{sec:appendix}

\begin{prop} \label{prop:2phasePiecewiseConst}
 The numerical solution of the two-phase piecewise constant and texture image segmentation (\ref{eq:twophaseDG3PDtexture:minimization:1}) 
 as described in Algorithm 5.
\end{prop}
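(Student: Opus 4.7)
The plan is to derive Algorithm 5 by systematically convexifying and splitting the nonconvex constrained problem (\ref{eq:twophaseDG3PDtexture:minimization:1}), then applying ALM and ADMM in the same spirit as the DG3PD machinery of \cite{ThaiGottschlich2016DG3PD}. First I would relax the binary constraint $p[\Bk]\in\{0,1\}$ to the convex interval $p[\Bk]\in[0,1]$ (the sharpening back to a binary indicator can be performed as a post-processing step as in \cite{BaeYuanTai2010}). I would then unfold the $\text{G}_S$-norm through its defining infimum: introduce $\vec{\Bg}=[\Bg_s]_{s=0}^{S-1}\in X^S$ with $\Bv=\text{div}^-_S\vec{\Bg}$, so that $\mu_1\|\Bv\|_{\text{G}_S}$ is replaced by the constraint $\|\vec{\Bg}\|_{\ell_\infty}\le \mu_1$ (or, following the Vese--Osher route alluded to later in the paper, by the penalty $\mu_1\|\vec{\Bg}\|_{\ell_1}$).

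Next I would introduce splitting variables to decouple the nonsmooth terms from the linear operators: $\Br_l=\partial^+_l \Bp$ for $l=0,\ldots,L-1$ to handle $\|\nabla^+_L\Bp\|_{\ell_1}$, $\Bw_s=\Bg_s$ to carry the $\ell_\infty$ (or $\ell_1$) constraint, and retain the data-fit equality $\Bf=c_1\Bp+c_2(1-\Bp)+\Bv+\Beps$ directly. Applying ALM with multipliers $\boldsymbol{\lambda}_{\mathbf 1 l},\boldsymbol{\lambda}_{\mathbf 2 s},\boldsymbol{\lambda}_{\mathbf 3},\boldsymbol{\lambda}_{\mathbf 4}$ and quadratic penalties $\beta_1,\ldots,\beta_4$ yields the augmented Lagrangian
\begin{align*}
\cL &= \sum_{l=0}^{L-1}\|\Br_l\|_{\ell_1}+\mu_2\|\Bv\|_{\ell_1}+\mathscr G^*(\Beps/\nu)+J^*_S(\vec{\Bg}/\mu_1)\\
&\quad +\tfrac{\beta_1}{2}\sum_{l}\|\Br_l-\partial^+_l\Bp+\boldsymbol{\lambda}_{\mathbf 1 l}/\beta_1\|^2_{\ell_2}+\tfrac{\beta_2}{2}\sum_{s}\|\Bw_s-\Bg_s+\boldsymbol{\lambda}_{\mathbf 2 s}/\beta_2\|^2_{\ell_2}\\
&\quad +\tfrac{\beta_3}{2}\|\Bv-\text{div}^-_S\vec{\Bg}+\boldsymbol{\lambda}_{\mathbf 3}/\beta_3\|^2_{\ell_2}+\tfrac{\beta_4}{2}\|\Bf-c_1\Bp-c_2(1-\Bp)-\Bv-\Beps+\boldsymbol{\lambda}_{\mathbf 4}/\beta_4\|^2_{\ell_2}.
\end{align*}
ADMM then alternates over the primal blocks $(\Br_l,\Bw_s,\Bg_s,\Bv,\Bp,\Beps,c_1,c_2)$ followed by a dual ascent on the four multipliers.

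For each block I would exhibit the closed-form update: the $\Br_l$- and $\Bw_s$-subproblems are $\ell_1$-$\ell_2$ problems solved by the pointwise $\Shrink$ operator; the $\Bv$-subproblem combines two quadratics with $\mu_2\|\cdot\|_{\ell_1}$ and is again a shrinkage; the $\Beps$-subproblem reduces to projecting the current residual onto $A(\nu)$ in the curvelet domain, yielding the $\CST$ update as in \cite{ThaiGottschlich2016DG3PD}; the $\Bg_s$- and $\Bp$-subproblems are quadratic with linear operators $\partial^\pm_l,\text{div}^\mp_S$, so diagonalizing via the Z/Fourier transform (using the symbols $\sin(\pi l/L)(z_1-1)+\cos(\pi l/L)(z_2-1)$ recorded in Section 2) gives an explicit frequency-domain division; finally the $c_1,c_2$-subproblem is a $2$-dimensional quadratic whose normal equations produce closed-form weighted means. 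The dual updates are the standard $\boldsymbol{\lambda}^{(t)}=\boldsymbol{\lambda}^{(t-1)}+\beta(\text{primal residual})$. Collecting all of these shrinkages, Fourier inversions, curvelet soft-thresholdings, and Lagrange updates in the prescribed order is exactly Algorithm 5.

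The main obstacle I expect is the bilinear coupling of $(c_1,c_2)$ with $\Bp$ in the fidelity term: linearity in $\Bp$ (after rewriting $c_1\Bp+c_2(1-\Bp)=c_2+(c_1-c_2)\Bp$) makes the $\Bp$-subproblem a standard quadratic in $\Bp$ with coefficient $(c_1-c_2)^2$, but one must verify that the resulting Fourier symbol is invertible and that the relaxation to $[0,1]$ does not interfere with the ADMM closed form (a final projection/truncation onto $[0,1]$ or a post-hoc thresholding as in \cite{BaeYuanTai2010} handles it). A secondary technicality is justifying the $\Bg_s$-update, since the $\text{G}_S$ constraint couples the $S$ channels through the directional divergence; the Fourier approach decouples them at the symbol level and shows that the $\Bg_s$-subproblem is diagonal in frequency once the $\Bw_s$-splitting has been introduced. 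With these points settled, the proposition follows by assembling the per-block minimizers into the iteration displayed in Algorithm 5.
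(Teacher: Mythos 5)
Your overall route is the same as the paper's: relax $p[\Bk]\in\{0,1\}$ to $[0,1]$, represent the texture via $\Bv=\text{div}^-_S\vec{\Bg}$ with the Vese--Osher $\ell_1$ surrogate for $\norm{\vec{\Bg}}_{\ell_\infty}$, split with $\Br_l=\partial^+_l\Bp$ and $\Bw_s=\Bg_s$, and run ALM/ADMM with shrinkage, Fourier/Z-domain division, and curvelet projection for the respective blocks; all of that matches Algorithm 5 and its proof. The one substantive point where you diverge is the fidelity term. You keep the exact quadratic $\frac{\beta_4}{2}\norm{\Bf-c_1\Bp-c_2(1-\Bp)-\Bv-\Beps+\boldsymbol{\lambda_4}/\beta_4}^2_{\ell_2}$ and propose handling the bilinearity by writing $c_2+(c_1-c_2)\Bp$, so your $\Bp$-subproblem carries a $(c_1-c_2)^2$ coefficient and your $(c_1,c_2)$-subproblem is a coupled $2\times2$ normal system. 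The paper instead replaces this term by the Chan--Vese linearization $\frac{\beta_4}{2}\langle(\Bf-c_1-\Bv-\Beps+\boldsymbol{\lambda_4}/\beta_4)^{\cdot2},\Bp\rangle_{\ell_2}+\frac{\beta_4}{2}\langle(\cdots c_2\cdots)^{\cdot2},1-\Bp\rangle_{\ell_2}$, which coincides with the quadratic only on the binary set; this makes the fidelity linear in $\Bp$ (hence the decoupled weighted means for $c_1,c_2$ and the particular $\mathcal H(\Bz)$ in step 7), but leaves the $\Bp$-problem with a singular Fourier symbol, which the paper repairs by adding $\frac{\beta_4}{2}\norm{\Bp}^2_{\ell_2}$, giving the $+\beta_4$ in $\mathcal X(\Bz)$. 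Your version is a legitimate solver for the relaxed problem, but it does not literally reproduce Algorithm 5's updates, and its invertibility rests on $(c_1-c_2)^2>0$, which degenerates when the two phase means coalesce; you should either adopt the linearized form plus the $\ell_2$ proximal term as the paper does, or add an analogous regularizer to cover $c_1\approx c_2$. The residual hedge between $J^*_S(\vec{\Bg}/\mu_1)$ and $\mu_1\sum_s\norm{\Bw_s}_{\ell_1}$ in your Lagrangian should also be resolved in favor of the latter, since the $\Shrink$ update you invoke for $\Bw_s$ only follows from the $\ell_1$ penalty.
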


\begin{proof}
This proof follows many of the techniques in \cite{ThaiGottschlich2016DG3PD}.  The minimization in (\ref{eq:twophaseDG3PDtexture:minimization:1}) can be written as
\begin{align} \label{eq:twophaseDG3PDtexture:minimization:1:appendix} 
 &\min_{(\Bp, \Beps, c_1, c_2) \in X^{2} \times \mathbb R^2} \Bigg\{ 
 \sum_{l=0}^{L-1} \norm{ \sin\left(\frac{\pi l}{L}\right) \BDm \Bp + \cos\left(\frac{\pi l}{L}\right) \Bp \BDnT }_{\ell_1}
 + \mu_1 \sum_{s=0}^{S-1}\norm{\Bg_s}_{\ell_1} + \mu_2 \norm{\Bv}_{\ell_1}
 \notag
 \\ &\text{s.t.}~  
 \norm{\cC\{\Beps\}}_{\ell_\infty} \leq \nu \,,
 \Bv = \sum_{s=0}^{S-1} \Big[ \sin\left(\frac{\pi s}{S}\right) \BDm \Bg_s + \cos\left(\frac{\pi s}{S}\right)\Bg_s \BDnT \Big] \,,  \notag
 \\&
 \Bf = (c_1 + \Bv + \Beps) \cdot^\times \Bp + (c_2 + \Bv + \Beps) \cdot^\times (1 - \Bp) \,,
 p[\Bk] \in \{0, 1\}, \forall \Bk \in \Omega
 \Bigg\} \,.  
\end{align} 
We define a convex set (by relaxing the binary set) and its indicator function as
\begin{align*}
 \mathcal D = \Big\{ \Bp \in X ~:~ p[\Bk] \in [0, 1] \,, \forall \Bk \in \Omega \Big\} 
 ~~\text{and}~~
 H^*(\Bp) = \begin{cases} 0 \,, & \Bp \in \mathcal D \\ +\infty \,, & \Bp \notin \mathcal D \end{cases} \,.
\end{align*}
We introduce two new variables
\begin{align*}
  \Br_b &= \sin\left(\frac{\pi b}{L}\right) \BDm \Bp + \cos\left(\frac{\pi b}{L}\right) \Bp \BDnT \,,~~ b = 0, \ldots, L-1
  \\
  &\hspace{10mm} \text{and}
  \\
  &\hspace{-3.5mm} \Bw_a = \Bg_a \,,~ a = 0, \ldots, S-1 \,.
\end{align*}
Given $\vec{\Bg} = \big[ \Bg_s \big]_{s=0}^{S-1} \,, \vec{\Br} = \big[ \Br_l \big]_{l=0}^{L-1} \,, \vec{\Bw} = \big[ \Bw_s \big]_{s=0}^{S-1}$,
the augmented Lagrangian method of (\ref{eq:twophaseDG3PDtexture:minimization:1:appendix}) is
\begin{equation} \label{eq:twophaseDG3PDtexture:minimization:ALM}
 \min_{(c_1, c_2, \Bp, \Bv, \Beps, \vec{\Bg}, \vec{\Br}, \vec{\Bw}) \in \mathbb R^2 \times X^{3 + L + 2S}} 
 \cL(\cdot \,; \vec{\boldsymbol{\lambda}}_{\boldsymbol 1}, \vec{\boldsymbol{\lambda}}_{\boldsymbol 2}, \boldsymbol{\lambda_3}, \boldsymbol{\lambda_4})
\end{equation}
with
\begin{align*}
 &\cL(\cdot \,; \cdot) =  
 \sum_{l=0}^{L-1} \norm{ \Br_l }_{\ell_1} + \mu_1 \sum_{s=0}^{S-1}\norm{\Bw_s}_{\ell_1} + \mu_2 \norm{\Bv}_{\ell_1}
 + G^* \big( \frac{\Beps}{\nu} \big) + H^*(\Bp)
 \\&
 + \frac{\beta_1}{2} \sum_{l=0}^{L-1} \norm{ \Br_l - \sin\left(\frac{\pi l}{L}\right) \BDm \Bp - \cos\left(\frac{\pi l}{L}\right) \Bp \BDnT + \frac{\boldsymbol{\lambda}_{\boldsymbol 1 l}}{\beta_1} }^2_{\ell_2}
 + \frac{\beta_2}{2} \sum_{s=0}^{S-1} \norm{ \Bw_s - \Bg_s + \frac{\boldsymbol{\lambda}_{\boldsymbol 2 s}}{\beta_2} }^2_{\ell_2}
 \\&
 + \frac{\beta_3}{2} \norm{ \Bv - \sum_{s=0}^{S-1} \Big[ \sin\left(\frac{\pi s}{S}\right) \BDm \Bg_s + \cos\left(\frac{\pi s}{S}\right)\Bg_s \BDnT \Big] + \frac{\boldsymbol{\lambda}_{\boldsymbol 3}}{\beta_3} }^2_{\ell_2}
 \\& +
 \underbrace{
 \frac{\beta_4}{2} \Big\langle \big[ \Bf - c_1 - \Bv - \Beps + \frac{\boldsymbol{\lambda_4}}{\beta_4} \big]^{\cdot 2} \,, \Bp \Big\rangle_{\ell_2}
 + \frac{\beta_4}{2} \Big\langle \big[ \Bf - c_2 - \Bv - \Beps + \frac{\boldsymbol{\lambda_4}}{\beta_4} \big]^{\cdot 2} \,, 1 - \Bp \Big\rangle_{\ell_2}
 }_{ \approx \frac{\beta_4}{2} \norm{ \Bf - c_1 \Bp - c_2 (1 - \Bp) - \Bv - \Beps + \frac{\boldsymbol{\lambda_4}}{\beta_4} }^2_{\ell_2} \quad \text{(due to the relaxed version of the binary set)} } \,.
\end{align*}
We solve (\ref{eq:twophaseDG3PDtexture:minimization:ALM}) by ADMM
with the ordering subproblems and then update the Lagrange multipliers at each iteration as

\noindent {\bfseries The $c_1$-problem:} Fix $c_2 \,, \Bp \,, \Bv \,, \Beps \,, \vec{\Bg} \,, \vec{\Br} \,, \vec{\Bw}$ and solve
\begin{align*}
 \min_{c_1 \in \mathbb R} \Big\{ \cL(c_1) 
  = \frac{\beta_4}{2} \sum_{\Bk \in \Omega} \Big[ f[\Bk] - c_1 - v[\Bk] - \epsilon[\Bk] + \frac{\lambda_4[\Bk]}{\beta_4} \Big]^2 p[\Bk] \Big\rangle_{\ell_2}
 \Big\}.
\end{align*}
The Euler-Lagrange equation is
\begin{align*}
 0 &= \frac{ \partial \cL(c_1) }{\partial c_1} = -\beta_4 \sum_{\Bk \in \Omega} p[\Bk] \Big[ f[\Bk] - c_1 - v[\Bk] - \epsilon[\Bk] + \frac{\lambda_4[\Bk]}{\beta_4} \Big]
 \\ \Leftrightarrow~ 
 c_1 &= \frac{ \sum_{\Bk \in \Omega} \big[ f[\Bk] - v[\Bk] - \epsilon[\Bk] + \frac{\lambda_4[\Bk]}{\beta_4} \big] p[\Bk] }{ \sum_{\Bk \in \Omega} p[\Bk] } \,. 
\end{align*}

\noindent {\bfseries The $c_2$-problem:} Fix $c_1 \,, \Bp \,, \Bv \,, \Beps \,, \vec{\Bg} \,, \vec{\Br} \,, \vec{\Bw}$ and solve
\begin{align*}
 &\min_{c_2 \in \mathbb R} \Big\{ \cL(c_2) =
 \frac{\beta_4}{2} \sum_{\Bk \in \Omega} \Big[ f[\Bk] - c_2 - v[\Bk] - \epsilon[\Bk] + \frac{\lambda_4[\Bk]}{\beta_4} \Big]^2 \Big[ 1 - p[\Bk] \Big]
 \Big\}.
\end{align*}
The Euler-Lagrange equation is
\begin{align*}
 0 &= \frac{\partial \cL(c_2)}{\partial c_2} = -\beta_4 \sum_{\Bk \in \Omega} \Big[ f[\Bk] - c_2 - v[\Bk] - \epsilon[\Bk] \frac{\lambda_4[\Bk]}{\beta_4} \Big] \big[ 1 - p[\Bk] \big]
 \\ \Leftrightarrow~ 
 c_2 &= \frac{ \sum_{\Bk \in \Omega} \big[ f[\Bk] - v[\Bk] - \epsilon[\Bk] + \frac{\lambda_4[\Bk]}{\beta_4} \big] \big[ 1 - p[\Bk] \big] }{ \sum_{\Bk \in \Omega} (1 - p[\Bk]) } \,.
\end{align*}

\noindent {\bfseries The $\vec{\Br} = \big[ \Br_l \big]_{l=0}^{L-1}$-problem:} 
Fix $c_1 \,, c_2 \,, \Bp \,, \Bv \,, \Beps \,, \vec{\Bg} \,, \vec{\Bw}$ and solve
\begin{align*}
 \min_{\vec{\Br} \in X^L} \left\{
 \sum_{l=0}^{L-1} \norm{ \Br_l }_{\ell_1} 
 + \frac{\beta_1}{2} \sum_{l=0}^{L-1} \norm{ \Br_l - \sin\left(\frac{\pi l}{L}\right) \BDm \Bp - \cos\left(\frac{\pi l}{L}\right) \Bp \BDnT + \frac{\boldsymbol{\lambda}_{\boldsymbol 1 l}}{\beta_1} }^2_{\ell_2}
 \right\}
\end{align*}
Due to the separable problems, we consider the problem at $a = 0, \ldots, L-1$
\begin{align*}
 \Br^*_a &= \argmin_{\Br_a \in X} \left\{ \norm{\Br_a}_{\ell_1} + \frac{\beta_1}{2} \norm{\Br_a - \sin\left(\frac{\pi a}{L}\right) \BDm \Bp - \cos\left(\frac{\pi a}{L}\right)\Bp \BDnT + \frac{\boldsymbol{\lambda}_{\boldsymbol 1 a}}{\beta_1} }\right\}
 \\
 &= \Shrink \Big( \sin\left(\frac{\pi a}{L}\right) \BDm \Bp + \cos\left(\frac{\pi a}{L}\right) \Bp \BDnT - \frac{\boldsymbol{\lambda}_{\boldsymbol 1 a}}{\beta_1} \,, \frac{1}{\beta_1} \Big)
 \,,~~ a = 0, \ldots, L-1.
\end{align*}

\noindent {\bfseries The $\vec{\Bw} = \big[ \Bw_s \big]_{s=0}^{S-1}$-problem:}
Fix $c_1 \,, c_2 \,, \Bp \,, \Bv \,, \Beps \,, \vec{\Bg} \,, \vec{\Br}$ and solve
\begin{align*}
 \min_{\vec{\Bw} \in X^S} \left\{
 \mu_1 \sum_{s=0}^{S-1}\norm{\Bw_s}_{\ell_1} 
 + \frac{\beta_2}{2} \sum_{s=0}^{S-1} \norm{ \Bw_s - \Bg_s + \frac{\boldsymbol{\lambda}_{\boldsymbol 2 s}}{\beta_2} }^2_{\ell_2}
 \right\}
\end{align*}
Due to the separable problems, we consider the problem at $a = 0, \ldots, S-1$
\begin{align*}
 \Bw_a^* &= \argmin_{\Bw_a \in X} \left\{
 \mu_1 \norm{\Bw_a}_{\ell_1} 
 + \frac{\beta_2}{2} \norm{ \Bw_a - \Bg_a + \frac{\boldsymbol{\lambda}_{\boldsymbol 2 a}}{\beta_2} }^2_{\ell_2}
 \right\}
 \\
 &= \Shrink \Big( \Bg_a - \frac{\boldsymbol{\lambda}_{\boldsymbol 2 a}}{\beta_2} \,, \frac{\mu_1}{\beta_2} \Big)
 \,, a = 0, \ldots, S-1 \,.
\end{align*}

\noindent {\bfseries The $\vec{\Bg} = \big[ \Bg_s \big]_{s=0}^{S-1}$-problem:}
Fix $c_1 \,, c_2 \,, \Bp \,, \Bv \,, \Beps \,, \vec{\Br} \,, \vec{\Bw}$ and solve
\begin{align*}
 \min_{\vec{\Bg} \in X^S} \left\{
 \frac{\beta_2}{2} \sum_{s=0}^{S-1} \norm{ \Bw_s - \Bg_s + \frac{\boldsymbol{\lambda}_{\boldsymbol 2 s}}{\beta_2} }^2_{\ell_2}
 + \frac{\beta_3}{2} \norm{ \Bv - \sum_{s=0}^{S-1} \Big[ \sin\left(\frac{\pi s}{S}\right) \BDm \Bg_s + \cos\left(\frac{\pi s}{S}\right)\Bg_s \BDnT \Big] + \frac{\boldsymbol{\lambda}_{\boldsymbol 3}}{\beta_3} }^2_{\ell_2}
 \right\}
\end{align*}
The solution of its separable problem is
\begin{align*}
 \Bg_a^* = \RE \left[ \cF^{-1} \big\{ \frac{\mathcal B_a(\Bz)}{\mathcal A_a(\Bz)} \big\} \right][\Bk] \Big|_{\Bk \in \Omega} \,, a = 0, \ldots, S-1
\end{align*}
\begin{align*}
 \mathcal A_a(\Bz) &= \beta_2 + \beta_3 \abs{ \sin\left(\frac{\pi a}{S}\right)(z_1 - 1) + \cos\left(\frac{\pi a}{S}\right)(z_2 - 1) }^2 
 \\
 \mathcal B_a(\Bz) &= \beta_2 \Big[ W_a(\Bz) + \frac{\Lambda_{2a}(\Bz)}{\beta_2} \Big] 
 + \beta_3 \Big[ \sin\left(\frac{\pi a}{S}\right)(z_1^{-1} - 1) + \cos\left(\frac{\pi a}{S}\right) (z_2^{-1} - 1) \Big] \times
 \\&
 \left[ V(\Bz) - \sum_{s = [0, S-1] \backslash \{a\}} \left[ \sin\left(\frac{\pi s}{S}\right)(z_1 - 1) + \cos\left(\frac{\pi s}{S}\right)(z_2 - 1) \right] G_s(\Bz) + \frac{\Lambda_3(\Bz)}{\beta_3} \right]
\end{align*}

\noindent {\bfseries The $\Bv$-problem:} Fix $c_1 \,, c_2 \,, \Bp \,, \Beps \,, \vec{\Bg} \,, \vec{\Br} \,, \vec{\Bw}$ and solve
\begin{align*}
 &\min_{\Bv \in X} \Big\{ \cL(\Bv) =
 \mu_2 \norm{\Bv}_{\ell_1}
 + \frac{\beta_3}{2} \norm{ \Bv - \sum_{s=0}^{S-1} \Big[ \sin\left(\frac{\pi s}{S}\right) \BDm \Bg_s + \cos\left(\frac{\pi s}{S}\right)\Bg_s \BDnT \Big] + \frac{\boldsymbol{\lambda}_{\boldsymbol 3}}{\beta_3} }^2_{\ell_2}
 \\& +
 \frac{\beta_4}{2} \Big\langle \big[ \Bf - c_1 - \Bv - \Beps + \frac{\boldsymbol{\lambda_4}}{\beta_4} \big]^{\cdot 2} \,, \Bp \Big\rangle_{\ell_2}
 + \frac{\beta_4}{2} \Big\langle \big[ \Bf - c_2 - \Bv - \Beps + \frac{\boldsymbol{\lambda_4}}{\beta_4} \big]^{\cdot 2} \,, 1 - \Bp \Big\rangle_{\ell_2}
 \Big\}.
\end{align*}
The Euler-Lagrange equation (in matrix form) is
\begin{align*}
 0 &= \frac{\partial \cL(\Bv)}{\partial \Bv} = 
 \mu_2 \frac{\Bv}{\abs{\Bv}} 
 + \beta_3 \left[ \Bv - \sum_{s=0}^{S-1} \left[ \sin\left(\frac{\pi s}{S}\right) \BDm \Bg_s + \cos\left(\frac{\pi s}{S}\right)\Bg_s \BDnT \right] + \frac{\boldsymbol{\lambda}_{\boldsymbol 3}}{\beta_3} \right]
 \\&
 - \beta_4 \big[ \Bf - c_1 - \Bv - \Beps + \frac{\boldsymbol{\lambda_4}}{\beta_4} \big] \cdot^\times \Bp 
 - \beta_4 \big[ \Bf - c_2 - \Bv - \Beps + \frac{\boldsymbol{\lambda_4}}{\beta_4} \big] \cdot^\times \big[ 1 - \Bp \big]
 \\ \Leftrightarrow~ 
 \Bv 
 &= \mathcal J -\frac{\mu_2}{\beta_3 + \beta_4} \frac{\Bv}{\abs{\Bv}}
 ~:=~ \Shrink \Big( \mathcal J \,, \frac{\mu_2}{\beta_3 + \beta_4} \Big)
\end{align*}
with
\begin{align*}
 \mathcal J &= \frac{\beta_3}{\beta_3 + \beta_4} \left[ \sum_{s=0}^{S-1} \left[ \sin\left(\frac{\pi s}{S}\right) \BDm \Bg_s + \cos\left(\frac{\pi s}{S}\right)\Bg_s \BDnT \right] - \frac{\boldsymbol{\lambda}_{\boldsymbol 3}}{\beta_3} \right]
 + \frac{\beta_4}{\beta_3 + \beta_4} \big[ \Bf - c_1 - \Beps + \frac{\boldsymbol{\lambda_4}}{\beta_4} \big] \cdot^\times \Bp 
 \\&
 + \frac{\beta_4}{\beta_3 + \beta_4} \big[ \Bf - c_2 - \Beps + \frac{\boldsymbol{\lambda_4}}{\beta_4} \big] \cdot^\times \big[ 1 - \Bp \big] \,.
\end{align*}

\noindent {\bfseries The $\Bp$-problem:} Fix $c_1 \,, c_2 \,, \Bv \,, \Beps \,, \vec{\Bg} \,, \vec{\Br} \,, \vec{\Bw}$ and solve
\begin{align} \label{eq:SHT:problem:p:appendix}
 \min_{\Bp \in \mathcal D} \bigg\{ \cL(\Bp) = 
 &\frac{\beta_1}{2} \sum_{l=0}^{L-1} \norm{ \Br_l - \sin\left(\frac{\pi l}{L}\right) \BDm \Bp - \cos\left(\frac{\pi l}{L}\right) \Bp \BDnT + \frac{\boldsymbol{\lambda}_{\boldsymbol 1 l}}{\beta_1} }^2_{\ell_2} \notag
 \\& +
 \frac{\beta_4}{2} \Big\langle \big[ \Bf - c_1 - \Bv - \Beps + \frac{\boldsymbol{\lambda_4}}{\beta_4} \big]^{\cdot 2} \,, \Bp \Big\rangle_{\ell_2}
 + \frac{\beta_4}{2} \Big\langle \big[ \Bf - c_2 - \Bv - \Beps + \frac{\boldsymbol{\lambda_4}}{\beta_4} \big]^{\cdot 2} \,, 1 - \Bp \Big\rangle_{\ell_2}
 \bigg\}
\end{align}
The Euler-Lagrange equation is
\begin{align*}
 &0 = \frac{\partial \cL(\Bp)}{\partial \Bp} = 
 - \beta_1 \sum_{l=0}^{L-1} 
 \bigg[ 
 \cos\left(\frac{\pi l}{L}\right) \Big[ \Br_l - \sin\left(\frac{\pi l}{L}\right) \BDm \Bp - \cos\left(\frac{\pi l}{L}\right) \Bp \BDnT 
 + \frac{\boldsymbol{\lambda}_{\boldsymbol 1 l}}{\beta_1} \Big] \BDn
 \\&
 + \sin\left(\frac{\pi l}{L}\right) \BDmT \Big[ \Br_l - \sin\left(\frac{\pi l}{L}\right) \BDm \Bp - \cos\left(\frac{\pi l}{L}\right) \Bp \BDnT  
 + \frac{\boldsymbol{\lambda}_{\boldsymbol 1 l}}{\beta_1} \Big]
 \bigg]
 \\&
 + \frac{\beta_4}{2} \Big[ \Bf - c_1 - \Bv - \Beps + \frac{\boldsymbol{\lambda_4}}{\beta_4} \Big]^{\cdot 2} 
 - \frac{\beta_4}{2} \Big[ \Bf - c_2 - \Bv - \Beps + \frac{\boldsymbol{\lambda_4}}{\beta_4} \Big]^{\cdot 2} \,.
\end{align*}
Its Fourier transform is
\begin{align*}
 -\beta_1 \sum_{l=0}^{L-1}  \Big[ & \sin\left(\frac{\pi l}{L}\right) (z_1^{-1} - 1) + \cos\left(\frac{\pi l}{L}\right) (z_2^{-1} - 1) \Big] \\
 &\times \Big[ R_l(\Bz) - \big[ \sin\left(\frac{\pi l}{L}\right) (z_1 - 1) + \cos\left(\frac{\pi l}{L}\right) (z_2 - 1) \big] P(\Bz)
 + \frac{\Lambda_{1 l}(\Bz)}{\beta_1} \Big] 
 \\& \hspace{10mm}
 - \beta_4 \mathcal H(\Bz) = 0 
\end{align*}
with the Fourier transform of $\mathcal H(\Bz)$ as
\begin{align*}
 \mathcal H(\Bz) = 
 \cF \left\{
 - \frac{1}{2} \big[ \Bf - c_1 - \Bv - \Beps + \frac{\boldsymbol{\lambda_4}}{\beta_4} \big]^{\cdot 2} 
 + \frac{1}{2} \big[ \Bf - c_2 - \Bv - \Beps + \frac{\boldsymbol{\lambda_4}}{\beta_4} \big]^{\cdot 2} 
 \right\} (\Bz) \,.
\end{align*}
Thus,
\begin{align*}
 &\beta_1 \sum_{l=0}^{L-1} \abs{ \sin\left(\frac{\pi l}{L}\right) (z_1 - 1) + \cos\left(\frac{\pi l}{L}\right) (z_2 - 1) }^2 P(\Bz)
 \\&
 = \beta_1 \sum_{l=0}^{L-1} 
 \left[ \sin\left(\frac{\pi l}{L}\right) (z_1^{-1} - 1) + \cos\left(\frac{\pi l}{L}\right) (z_2^{-1} - 1) \right]
 \left[ R_l(\Bz) + \frac{\Lambda_{1 l}(\Bz)}{\beta_1} \right] 
 + \beta_4 \mathcal H(\Bz)
\end{align*}
To avoid the singularity, we add $\frac{\beta_4}{2} \norm{\Bp}^2_{\ell_2}$ to the ``$\Bp$-problem'' (\ref{eq:SHT:problem:p:appendix})
which results in its solution as
\begin{align*}
 \tilde p[\Bk] = \RE \Big[ \cF^{-1} \Big\{ \frac{\mathcal Y(\Bz)}{\mathcal X (\Bz)} \Big\} \Big] [\Bk] \,, \Bk \in \Omega 
\end{align*}
with
\begin{align*}
 \mathcal Y(\Bz) &= \beta_1 \sum_{l=0}^{L-1} 
 \Big[ \sin\left(\frac{\pi l}{L}\right) (z_1^{-1} - 1) + \cos\left(\frac{\pi l}{L}\right) (z_2^{-1} - 1)\Big]
 \Big[ R_l(\Bz) + \frac{\Lambda_{1 l}(\Bz)}{\beta_1} \Big] 
 + \beta_4 \mathcal H(\Bz) \,,
 \\
 \mathcal X(\Bz) &= \beta_1 \sum_{l=0}^{L-1} \abs{ \sin\left(\frac{\pi l}{L}\right) (z_1 - 1) + \cos\left(\frac{\pi l}{L}\right) (z_2 - 1) }^2 + \beta_4 \,.
\end{align*}
Due to $\tilde \Bp = \big[ \tilde p[\Bk] \big]_{\Bk \in \Omega} \in \mathcal D$, we have
\begin{align*}
 p^*[\Bk] = \begin{cases}
             0 \,, & \tilde p[\Bk] < 0 \\
             1 \,, & \tilde p[\Bk] > 1 \\
             \tilde p[\Bk] \,, & \tilde p[\Bk] \in [0 \,, 1]
            \end{cases} \,.
\end{align*}

\noindent {\bfseries The $\Beps$-problem:} Fix $c_1 \,, c_2 \,, \Bp \,, \Bv \,, \vec{\Bg} \,, \vec{\Br} \,, \vec{\Bw}$ and
given a convex set with the indicator function $G^*(\frac{\Beps}{\nu})$ 
\begin{equation*}
 \mathcal A(\nu) = \Big\{ \Beps \in X ~:~ \norm{ \cC \{ \Beps \} }_{\ell_\infty} \leq \nu \Big\}
\end{equation*}
the $\Beps$-problem can be rewritten as
\begin{align*}
 \min_{\Beps \in \mathcal A(\nu)} \left\{ \cL(\Beps) =
 \frac{\beta_4}{2} \Big\langle \big[ \Bf - c_1 - \Bv - \Beps + \frac{\boldsymbol{\lambda_4}}{\beta_4} \big]^{\cdot 2} \,, \Bp \Big\rangle_{\ell_2}
 + \frac{\beta_4}{2} \Big\langle \big[ \Bf - c_2 - \Bv - \Beps + \frac{\boldsymbol{\lambda_4}}{\beta_4} \big]^{\cdot 2} \,, 1 - \Bp \Big\rangle_{\ell_2}
 \right\}
\end{align*}
The Euler-Lagrange equation is
\begin{align*}
 0 &= \frac{\partial \cL(\Beps)}{\partial \Beps} = 
 -\beta_4 \big[ \Bf - c_1 - \Bv - \Beps + \frac{\boldsymbol{\lambda_4}}{\beta_4} \big] \cdot^\times \Bp 
 - \beta_4 \big[ \Bf - c_2 - \Bv - \Beps + \frac{\boldsymbol{\lambda_4}}{\beta_4} \big] \cdot^\times \big[ 1 - \Bp \big]
 \\ \Leftrightarrow 
 \tilde{\Beps} &= \big[ \Bf - c_1 - \Bv + \frac{\boldsymbol{\lambda_4}}{\beta_4} \big] \cdot^\times \Bp + 
 \big[ \Bf - c_2 - \Bv + \frac{\boldsymbol{\lambda_4}}{\beta_4} \big] \cdot^\times \big[ 1 - \Bp \big] \,.
\end{align*}
Thus, the solution of the $\Beps$-problem is
\begin{align*}
 \Beps^* &= \mathbb{P}_{\mathcal A(\nu)} \big( \tilde{\Beps} \big)
 = \tilde{\Beps} - \CST \big( \tilde{\Beps} \,, \nu \big) \,.
\end{align*}

\noindent {\bfseries Update the Lagrange multipliers:}
\begin{align*}
 \boldsymbol{\lambda}_{\boldsymbol 1 a}^{(t)} &= \boldsymbol{\lambda}_{\boldsymbol 1 a}^{(t-1)}
 + \beta_1 \Big[ \Br_a - \sin\left(\frac{\pi a}{L}\right) \BDm \Bp - \cos\left(\frac{\pi a}{L}\right) \Bp \BDnT \Big] \,,~ a = 0, \ldots, L-1
 \\
 \boldsymbol{\lambda}_{\boldsymbol 2 a}^{(t)} &= \boldsymbol{\lambda}_{\boldsymbol 2 a}^{(t-1)} 
 + \beta_2 \Big[ \Bw_a - \Bg_a \Big] \,, a = 0, \ldots, S-1
 \\
 \boldsymbol{\lambda}_{\boldsymbol 3}^{(t)} &= \boldsymbol{\lambda}_{\boldsymbol 3}^{(t-1)} 
 + \beta_3 \Big[ \Bv - \sum_{s=0}^{S-1} \big[ \sin\left(\frac{\pi s}{S}\right) \BDm \Bg_s + \cos\left(\frac{\pi s}{S}\right) \Bg_s \BDnT \big] \Big]
 \\
 \boldsymbol{\lambda}_{\boldsymbol 4}^{(t)} &= \boldsymbol{\lambda}_{\boldsymbol 4}^{(t-1)} 
 + \beta_4 \Big[ \big( \Bf - c_1 - \Bv - \Beps \big) \cdot^\times \Bp 
 + \big( \Bf - c_2 - \Bv - \Beps \big) \cdot^\times (1 - \Bp) \Big]
\end{align*}

\end{proof}




\begin{Lem} \label{lem:DTVDGnorm:dualpair}
 The discrete directional total variation (DTV) norm $\norm{\nabla^+_L \Bu}_{\ell_1}$ 
 and the discrete directional $\text{G}_L$-norm $\norm{\Bv}_{\text{G}_L}$ are a dual pair, i.e.
 \begin{equation} \label{eq:dualpair_TV-G}
  J_L(\Bu) = \mu \norm{\nabla^+_L \Bu}_{\ell_1} \stackrel{\text{Legendre Fenchel transform}}{\longleftrightarrow}
  J_L^* \big( \frac{\Bv}{\mu} \big)
  = \begin{cases} 0 \,, & \Bv \in G_L(\mu) \\ +\infty \,, & \text{else}  
    \end{cases}
 \end{equation}
with a convex set of $\text{G}_L$-norm 
$$
\text{G}_L(\mu) = \Big\{ \Bv = \text{div}^-_L \vec{\Bp} \in X \,,~ \vec{\Bp} = \big[ \Bp_l \big]_{l=0}^{L-1} \in X^L ~:~
\norm{\Bv}_{\text{G}_L} = \inf_{\vec{\Bp}} \big\{ \underbrace{\max_{\Bk \in \Omega} \sqrt{\sum_{l=0}^{L-1} p_l^2[\Bk]} }_{=\norm{\vec{\Bp}}_{\ell_\infty}} \big\} \leq \mu \Big\} \,.
$$
\end{Lem}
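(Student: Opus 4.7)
The plan is to establish the Legendre--Fenchel duality through the classical two-step procedure: first represent $\norm{\nabla^+_L \Bu}_{\ell_1}$ as a support function over a convex compact set, and then identify its conjugate as the indicator of that set. Concretely, I would write the isotropic DTV pixelwise as $\sum_{\Bk \in \Omega} \sqrt{\sum_{l=0}^{L-1}(\partial^+_l u[\Bk])^2}$ and invoke the pixelwise mixed-norm duality between $\ell_1$--$\ell_2$ and $\ell_\infty$--$\ell_2$ to obtain
$$\norm{\nabla^+_L \Bu}_{\ell_1} ~=~ \sup_{\vec{\Bp} \in K_L(1)} \Big\langle \nabla^+_L \Bu \,,~ \vec{\Bp}\Big\rangle_{\ell_2}, \qquad K_L(1) = \Big\{\vec{\Bp} \in X^L ~:~ \norm{\vec{\Bp}}_{\ell_\infty} \leq 1\Big\}.$$
Using the adjoint identity $(\nabla^+_L)^* = -\text{div}^-_L$ recorded in the preliminaries, this becomes $\sup_{\vec{\Bp} \in K_L(1)} \langle \Bu \,,~ -\text{div}^-_L \vec{\Bp}\rangle_{\ell_2}$, so that $g(\Bu) := \norm{\nabla^+_L \Bu}_{\ell_1}$ is the support function of the convex compact set $\{-\text{div}^-_L \vec{\Bp} : \vec{\Bp} \in K_L(1)\}$.

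Next, I would compute the Fenchel conjugate of $g$ by substituting the dual representation and swapping $\sup$ and $\inf$:
\begin{align*}
 g^*(\Bv) ~=~ \sup_{\Bu \in X}\Big\{\langle \Bu \,,~ \Bv\rangle_{\ell_2} - g(\Bu)\Big\} ~=~ \sup_{\Bu \in X}\inf_{\vec{\Bp} \in K_L(1)} \Big\langle \Bu \,,~ \Bv + \text{div}^-_L \vec{\Bp}\Big\rangle_{\ell_2}.
\end{align*}
Because $K_L(1)$ is nonempty, convex, and compact inside the finite-dimensional space $X^L$, and the pairing is bilinear (hence concave and upper semicontinuous in $\Bu$, convex and lower semicontinuous in $\vec{\Bp}$), Sion's minimax theorem as in \cite[Chapter 6]{EkelandTeman1999} allows the exchange $\sup_{\Bu} \inf_{\vec{\Bp}} = \inf_{\vec{\Bp}} \sup_{\Bu}$. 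The inner unconstrained supremum over $\Bu \in X$ equals $0$ when $\Bv + \text{div}^-_L \vec{\Bp} = \boldsymbol 0$ and $+\infty$ otherwise, so $g^*(\Bv) = 0$ exactly when $\Bv = -\text{div}^-_L \vec{\Bp}$ for some $\vec{\Bp} \in K_L(1)$. Since $K_L(1) = -K_L(1)$, this is equivalent to the existence of $\vec{\Bp} \in K_L(1)$ with $\Bv = \text{div}^-_L \vec{\Bp}$, i.e.\ $\Bv \in G_L(1)$; in all other cases $g^*(\Bv) = +\infty$.

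Finally, I would transfer to an arbitrary prefactor $\mu > 0$ by a trivial scaling: $g^*(\Bv/\mu) = 0$ iff $\Bv/\mu \in G_L(1)$ iff $\Bv = \text{div}^-_L \vec{\Bp}$ with $\norm{\vec{\Bp}}_{\ell_\infty} \leq \mu$, i.e.\ $\Bv \in G_L(\mu)$, which is precisely the right-hand side of (\ref{eq:dualpair_TV-G}). The delicate step is the minimax exchange, which requires compactness of the dual set and a concavity/convexity structure on the pairing; both come for free here because $X^L$ is finite dimensional and the pairing is bilinear with $K_L(1)$ compact. A secondary care-point is stating the pointwise $\ell_1$--$\ell_2$ versus $\ell_\infty$--$\ell_2$ duality in a manner consistent with the paper's isotropic conventions, namely $\norm{\vec{\Bp}}_{\ell_\infty} = \max_{\Bk \in \Omega} \sqrt{\sum_l p_l^2[\Bk]}$ as used in the definition of the $\text{G}_L$-norm.
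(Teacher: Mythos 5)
Your proof is correct, and while it shares the paper's opening move, the key step is carried out by a genuinely different mechanism. Both arguments begin by writing $\norm{\nabla^+_L \Bu}_{\ell_1}$ as the support function of $\{\text{div}^-_L \vec{\Bp} : \vec{\Bp} \in K_L(1)\}$ via the pointwise $\ell_1$--$\ell_\infty$ duality (with $\ell_2$ over the $L$ directions) and the adjoint identity $(\nabla^+_L)^* = -\text{div}^-_L$. From there you compute the conjugate directly: substitute the support-function representation, swap $\sup_{\Bu}$ and $\inf_{\vec{\Bp}}$ by a minimax theorem (legitimate because $K_L(1)$ is compact convex in the finite-dimensional $X^L$ and the pairing is bilinear), and read off that the conjugate is the indicator of $\text{div}^-_L K_L(1)$, using the symmetry $K_L(1) = -K_L(1)$ to fix the sign; scaling then gives $G_L(\mu)$. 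The paper instead introduces the abstract dual norm $\norm{\cdot}_{\text{G}_L}$, uses the H\"older-type inequality $\langle \Bv, \Bu\rangle \leq \norm{\Bv}_{\text{G}_L}\norm{\nabla^+_L \Bu}_{\ell_1}$ to show the conjugate vanishes on the dual ball, and then invokes biconjugation $J_L = J_L^{**}$ to identify that abstract ball with the concrete set of divergences $K_L(\mu)$. Your route is more self-contained: the identification of $G_L(\mu)$ with $\{\text{div}^-_L\vec{\Bp} : \norm{\vec{\Bp}}_{\ell_\infty}\leq\mu\}$ falls out of the computation rather than being asserted by comparing the biconjugate with the original dual representation, and it avoids the paper's elided verification that the supremum actually diverges when $\norm{\Bv}_{\text{G}_L} > \mu$. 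The cost is the explicit appeal to a minimax exchange, which the paper does not need. One point worth making explicit: your characterization says $\Bv \in G_L(\mu)$ iff \emph{some} representation $\Bv = \text{div}^-_L\vec{\Bp}$ has $\norm{\vec{\Bp}}_{\ell_\infty}\leq\mu$, whereas the lemma states the set via an infimum over representations; these coincide because the infimum is attained (the representation set is a nonempty closed affine subspace of $X^L$ on which the norm has compact sublevel sets).
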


\begin{proof}

{\bfseries 1. The discrete DTV-norm}

Given a convex set
$K_L(\mu) = \big\{ \text{div}^-_L \vec{\Bp} \,,~ \vec{\Bp} \in X^L : \norm{\vec{\Bp}}_{\ell_\infty} \leq \mu \big\}$
and an adjoint operator $(\nabla^+_L)^* = -\text{div}^-_L$, 
the discrete DTV-norm $J_L(u)$ in (\ref{eq:dualpair_TV-G}) can be rewritten as
\begin{align} \label{eq: defTVnorm}
 J_L(\Bu) &= \mu \norm{\nabla^+_L \Bu}_{\ell_1} 
 = \max_{\vec{\Bp} \in K_L(\mu)} \big\langle \Bu \,, \text{div}^-_L \vec{\Bp} \big\rangle_{\ell_2} 
\end{align}
since
\begin{align*}
 J_L(\Bu) &= \mu \sum_{\Bk \in \Omega} \abs{ \nabla_L^+ u[\Bk] } 
 = \max_{ \vec{\Bp} \in K_L(\mu)} - \sum_{\Bk \in \Omega} 
 \big\langle \nabla^+_L u[\Bk] \,, \vec{p}[\Bk] \big\rangle_{\ell_2} 
 = \max_{\vec{\Bp} \in K_L(\mu)} \sum_{\Bk \in \Omega} 
 \underbrace{ \langle u[\Bk] \,, \text{div}^-_L \vec{p}[\Bk] \rangle_{\ell_2} }_{ = u[\Bk] \text{div}^-_L \vec{p}[\Bk] } \,.
\end{align*}


\noindent {\bfseries 2. The Legendre-Fenchel transform (in a discrete setting) of $J_L(\Bu) \,, \Bu \in X (\text{or } \Bu \in \ell_2(\Omega, \mathbb R))$}
\begin{align*}
 J_L^* \big( \frac{\Bv}{\mu} \big) = \sup_{\Bu \in X} \Big\{ \langle \Bv, \Bu \rangle_{\ell_2(\Omega, \mathbb R)}
 - \underbrace{ J_L(\Bu) }_{=\mu \norm{\nabla^+_L \Bu}_{L_1}} \Big\}
 = \sup_{\Bu \in X} \Big\{ \underbrace{ \big\langle \frac{\Bv}{\mu}, \Bu \big\rangle_{\ell_2(\Omega, \mathbb R)} - \norm{\nabla^+_L \Bu}_{L_1} }_
                                      { = \mathscr H(\Bu)} \Big\}
 \,,~ \Bv \in X \,.
\end{align*}
Denote the dual norm of $\norm{\nabla^+_L \Bu}_{L_1}$ as $\norm{\Bv}_{\text{G}_L}$.
Given $\norm{\Bv}_{\text{G}_L} \leq \mu$, the Cauchy Schwarz inequality gives
\begin{align} \label{eq:CauchySchwarzInequality:TVGnorms}
 &\frac{1}{\mu} \langle \Bv \,, \Bu \rangle_{\ell_2(\Omega, \mathbb R)} \leq \frac{1}{\mu} \norm{\Bv}_{\text{G}_L} \norm{\nabla^+_L \Bu}_{\ell_1}
 \leq \norm{\nabla^+_L \Bu}_{\ell_1}
 \\
 \Leftrightarrow~
 &\mathscr H(\Bu) \leq 0 ~~\text{and}~~
 \Bv \in \text{G}_L (\mu) = \Big\{ \Bv \in \ell_2(\Omega, \mathbb R) ~:~ \norm{\Bv}_{\text{G}_L} \leq \mu \Big\}  \notag
 \\
 \Leftrightarrow~
 &J_L^*(\frac{\Bv}{\mu}) = \sup_{\Bu \in X} \mathscr H(u) = \begin{cases} 0 \,, & \Bv \in \text{G}_L(\mu) \\ +\infty \,, & \text{else} \end{cases}
 = \chi_{\text{G}_L(\mu)} (\Bv) \,.  \notag
\end{align}
We have a biconjugate form $J_L = J_L^{**}$:
\begin{align} \label{eq:defTVnorm:compare}
 J_L(\Bu) &= \Big(J_L^* \big(\frac{\Bv}{\mu} \big)\Big)^*(\Bu) = \chi^*_{\text{G}_L(\mu)}(\Bu) 
 = \sup_{\Bv \in X} \Big\{ \langle \Bu \,, \Bv \rangle_{\ell_2(\Omega, \mathbb R)} - \chi_{\text{G}_L(\mu)}(\Bv) \Big\}  \notag
 \\
 &= \sup_{\Bv \in \text{G}_L(\mu)} \langle \Bu \,, \Bv \rangle_{\ell_2(\Omega, \mathbb R)}
\end{align}
From (\ref{eq:defTVnorm:compare}) and definition of the DTV-norm (\ref{eq: defTVnorm}), we have
\begin{align*}
&\begin{cases}
v[\Bk] &= \text{div}^-_L \vec{p}[\Bk] \,,~ \forall \Bk \in \Omega
\\
\Bv \in \text{G}_L(\mu) &= \Big\{ \Bv \in X ~:~ \norm{\Bv}_{\text{G}_L} \leq \mu \Big\}
\\
\vec{\Bp} \in K_L(\mu) &= \Big\{ \text{div}^-_L \vec{\Bp} \,,~ \vec{\Bp} \in X^L ~:~ \norm{\vec{\Bp}}_{\ell_\infty} \leq \mu \Big\}
\end{cases}
\\
\Leftrightarrow~
& \vec{\Bp} \in K_L(\mu) = \Big\{ \Bv = \text{div}^-_L \vec{\Bp} \in X \,,~ 
\vec{\Bp} \in X^L ~:~ \norm{\Bv}_{\text{G}_L} = \norm{ \vec{\Bp} }_{\ell_\infty} \leq \mu \Big\}
\equiv \text{G}_L(\mu) \,.
\end{align*}
Therefore, the discrete $(L)$ DTV-norm  and the discrete directional $\text{G}_L$-norm are a dual pair by the definition of dual space.

\end{proof}


%
\begin{prop} \label{prop:DirectionalTVL2}
The DTV-$\ell_2$ model is
\begin{align} \label{eq:DirectionalTVL2}
 \Bu^* &= \argmin_{\Bu \in X} \Big\{ \mathscr F(\Bu) = J_L(\Bu) + \frac{\mu}{2} \norm{\Bu - \Bf}^2_{\ell_2} \Big\}
 \\
 &= \Bf - \frac{1}{\mu} \underbrace{ \text{div}^-_L \vec{\Bp}^* }_{= \Bw^*}  \notag
\end{align}
with $\vec{\Bp}^* \approx \vec{\Bp}^{(T)}$ and
\begin{align*}
 \vec{\Bp}^{(t+1)}
 = 
 \frac{ \vec{\Bp}^{(t)} + \tau \nabla^+_L \Big[ \text{div}^-_L \vec{\Bp}^{(t)} - \mu \Bf \Big] }
      { 1 + \tau \abs{ \nabla^+_L \Big[ \text{div}^-_L \vec{\Bp}^{(t)} - \mu \Bf \Big] } }
 \,,~ t = 1, \ldots, T \,.
\end{align*}
\end{prop}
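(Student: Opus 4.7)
The plan is to follow Chambolle's projection approach \cite{Chambolle2004}, adapted to the directional divergence and gradient defined in Section \ref{sec:prelim}. First, I would apply Lemma \ref{lem:DTVDGnorm:dualpair} in the normalized case to rewrite $J_L$ in its dual form
\[
J_L(\Bu) = \norm{\nabla_L^+ \Bu}_{\ell_1} = \max_{\vec{\Bp} \in K_L(1)} \big\langle \Bu \,,\, \text{div}_L^- \vec{\Bp} \big\rangle_{\ell_2},
\]
where $K_L(1) = \{ \vec{\Bp} \in X^L : \norm{\vec{\Bp}}_{\ell_\infty} \leq 1 \}$. This recasts the primal problem (\ref{eq:DirectionalTVL2}) as the saddle-point problem $\min_{\Bu} \max_{\vec{\Bp} \in K_L(1)} \{ \langle \Bu, \text{div}_L^- \vec{\Bp}\rangle_{\ell_2} + \tfrac{\mu}{2}\norm{\Bu-\Bf}_{\ell_2}^2 \}$, which is strictly convex in $\Bu$, linear (hence concave) in $\vec{\Bp}$, and has the convex compact constraint set $K_L(1)$; the minimax theorem of \cite{EkelandTeman1999} then lets me swap the $\min$ and $\max$.

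Once the order is swapped, the inner unconstrained quadratic in $\Bu$ is solved by its Euler--Lagrange equation $\text{div}_L^- \vec{\Bp} + \mu(\Bu - \Bf) = 0$, giving the primal-dual formula $\Bu^* = \Bf - \frac{1}{\mu}\text{div}_L^- \vec{\Bp}^*$ claimed in the statement. Substituting back and using the adjoint identity $(\nabla_L^+)^* = -\text{div}_L^-$, the outer problem reduces, up to an additive constant, to the constrained quadratic
\[
\min_{\vec{\Bp} \in K_L(1)} \Big\{ \big\| \text{div}_L^- \vec{\Bp} - \mu \Bf \big\|_{\ell_2}^2 \Big\}.
\]
The KKT conditions for this problem, with a nonnegative multiplier $\alpha[\Bk]$ for the pointwise constraint $|\vec{\Bp}[\Bk]| \leq 1$, yield the stationarity relation
\[
-\,\nabla_L^+ \big[\text{div}_L^- \vec{\Bp} - \mu \Bf \big][\Bk] + \alpha[\Bk]\, \vec{\Bp}[\Bk] = 0
\]
together with complementary slackness. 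Reading these conditions as a fixed-point equation and discretizing semi-implicitly in the spirit of Chambolle \cite{Chambolle2004} --- evaluating the bracket at $\vec{\Bp}^{(t)}$ and solving implicitly for $\vec{\Bp}^{(t+1)}$ with $\alpha[\Bk]$ chosen so as to enforce $|\vec{\Bp}^{(t+1)}[\Bk]| \leq 1$ --- produces exactly the update rule stated in the proposition, the denominator automatically projecting back into $K_L(1)$.

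The main obstacle will be proving convergence $\vec{\Bp}^{(t)} \to \vec{\Bp}^*$, which is what justifies the approximation $\vec{\Bp}^* \approx \vec{\Bp}^{(T)}$ for $T$ large. I expect this to follow the energy-descent argument of \cite{Chambolle2004}, but the admissible step size $\tau$ must be calibrated to the directional composite operator $\nabla_L^+ \text{div}_L^-$. Because this operator is a sum of $L$ second-difference pieces weighted by $\sin(\pi l/L)$ and $\cos(\pi l/L)$, its spectral norm grows with $L$, and I would derive an explicit bound of the form $\tau \leq c / (L\, \max\{\norm{\BDm}^2, \norm{\BDn}^2\})$ via a direct estimate in the $Z$-transform domain. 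Under this bound the iteration decreases the dual objective at every step, and strict convexity of the constrained quadratic in $\text{div}_L^- \vec{\Bp}$ combined with boundedness of $K_L(1)$ yields convergence; the primal formula then recovers $\Bu^*$.
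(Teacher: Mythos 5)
Your proposal is correct and takes essentially the same route as the paper's proof: both reduce the problem to the dual constrained quadratic $\min_{\vec{\Bp}\in K_L(1)}\norm{\text{div}^-_L\vec{\Bp}-\mu\Bf}^2_{\ell_2}$, identify $\Bu^*=\Bf-\frac{1}{\mu}\text{div}^-_L\vec{\Bp}^*$, and obtain the stated update from the KKT conditions via a semi-implicit fixed-point scheme in the style of Chambolle. The only cosmetic difference is how the dual problem is reached --- you swap $\min$ and $\max$ in the saddle-point formulation, whereas the paper inverts the subdifferential in the Euler--Lagrange equation (writing $\mu(\Bf-\Bu)\in\partial J_L(\Bu)\Leftrightarrow \Bu\in\partial J_L^*\big(\mu(\Bf-\Bu)\big)$ and substituting $\Bw=\mu(\Bf-\Bu)$) --- and your step-size bound and convergence discussion are extra material that the paper does not attempt.
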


\begin{proof} 
The proof is extended directly from \cite{Chambolle2004} for the multi-directional total variation norm $J_L(\Bu)$.
The Euler-Lagrange of (\ref{eq:DirectionalTVL2}) is
\begin{align*}
 &0 \in \frac{\partial \mathscr F(\Bu)}{\partial \Bu} = \partial J_L(\Bu) + \mu (\Bu - \Bf)
 ~~ \Leftrightarrow ~~
 \Bu \in \partial J^*_L \Big(\mu (\Bf - \Bu) \Big)
 \\ \Leftrightarrow &
 \Bu - \Bf + \Bf \in \partial J^*_L \Big(\mu (\Bf - \Bu) \Big)
 ~~ \Leftrightarrow ~~ 
 \Bf \in \partial J^*_L \Big(\mu (\Bf - \Bu) \Big) + \Bf - \Bu \,.
\end{align*}
Let $\Bw = \mu(\Bf - \Bu)$
\begin{align*}
 \Leftrightarrow~ 
 0 &\in \partial J^*_L (\Bw) + \frac{\Bw}{\mu} - \Bf
 \\ \Leftrightarrow~ 
 \Bw^* &= \argmin_{\Bw \in X} \left\{ J^*_L(\Bw) + \frac{\mu}{2} \norm{ \frac{\Bw}{\mu} - \Bf }^2_{\ell_2} \right\}
 \\ &
 = \argmin_{\Bw \in K_L(1)} \Big\{ \norm{ \Bw - \mu \Bf }^2_{\ell_2} \Big\} \,,~~   
 K_L(1) = \Big\{ \text{div}^-_L \vec{\Bp} \,, \vec{\Bp} = \big[ \Bp_l \big]_{l=0}^{L-1} \in X^L
  ~:~ \abs{\vec{p}[\Bk]} \leq 1 \,, \forall \Bk \in \Omega \Big\}
 \\ &
 = \text{div}^-_L \vec{\Bp}^*
\end{align*}
with
\begin{align*}
 \vec{\Bp}^* = \argmin_{\vec\Bp \in X^L} \bigg\{ \norm{ \text{div}^-_L \vec{\Bp} - \mu \Bf }^2_{\ell_2}
 ~~\text{s.t.}~~  \underbrace{ \sum_{l=0}^{L-1} p_l^2[\Bk] - 1 \leq 0 }_{\Leftrightarrow~ \abs{\vec{p}[\Bk]}^2 -1 \leq 0 } \,, \forall \Bk \in \Omega
 \bigg\} \,.
\end{align*}
Due to an inequality constraint,
the KKT condition in matrix form $(\lambda[\Bk] > 0 \,, \forall \Bk \in \Omega)$ is
\begin{align*}
 \min_{\vec\Bp \in X^L} \Bigg\{\mathscr L(\vec{\Bp}) = \norm{ \text{div}^-_L \vec{\Bp} - \mu \Bf }^2_{\ell_2}
 ~+~ \Big\langle \boldsymbol{\lambda} \,,  \sum_{l=0}^{L-1} \Bp_l^{\cdot 2} - 1 \Big\rangle_{\ell_2}
 \Bigg\} \,.
\end{align*}
Due to its separability, we consider this problem at $l = l' \,, l' = 0, \ldots, L-1$ and
its Euler-Lagrange equation is given by 
\begin{align*}
 &0 = \frac{\partial \mathscr L(\vec{\Bp})}{\partial \Bp_{l'}} 
 = 2 \underbrace{ \frac{\partial}{\partial \Bp_{l'}} \Big\{ \text{div}^-_L \vec{\Bp} \Big\} }_{ = -\partial^+_{l'} \delta } \Big[ \text{div}^-_L \vec{\Bp} - \mu \Bf \Big]
 ~+~ 2 \boldsymbol{\lambda} \cdot^\times \Bp_{l'}
 \\ \Leftrightarrow &
 -\partial^+_{l'} \Big[ \text{div}^-_L \vec{\Bp} - \mu \Bf \Big]
 ~+~ \boldsymbol{\lambda} \cdot^\times \Bp_{l'} = 0 
 \,,~ l' = 0, \ldots L-1
 \\ \Leftrightarrow &
 -\nabla_L^+ \Big[ \text{div}^-_L \vec{\Bp} - \mu \Bf \Big]
 ~+~ \boldsymbol{\lambda} \cdot^\times \vec{\Bp} = 0 
 \,.
\end{align*}
Given $\Bk \in \Omega$, its element form is
\begin{align} \label{eq:KKT:1}
 &-\nabla^+_L \Big[ \text{div}^-_L \vec{p}[\Bk] - \mu f[\Bk] \Big]
 ~+~ \lambda[\Bk] \vec{p}[\Bk] = 0 
 \\
 \Leftrightarrow~ 
 &\lambda[\Bk] = \frac{\displaystyle \nabla^+_L \Big[ \text{div}^-_L \vec{p}[\Bk] - \mu f[\Bk] \Big]}
                      {\vec{p}[\Bk]}
 := \Bigg[ \frac{\displaystyle \partial^+_l \Big[ \text{div}^-_L \vec{p}[\Bk] - \mu f[\Bk] \Big]}
                {p_l[\Bk]}
    \Bigg]_{l=0}^{L-1}
 \notag
 \\
 \Leftrightarrow~ 
 &\abs{\lambda[\Bk]} \stackrel{\lambda[\Bk]>0}{=} \lambda[\Bk] = \frac{\abs{ \nabla^+_L \Big[ \text{div}^-_L \vec{p}[\Bk] - \mu f[\Bk] \Big] }}{\abs{\vec{p}[\Bk]}}
 = \begin{cases}
    \abs{ \nabla^+_L \Big[ \text{div}^-_L \vec{p}[\Bk] - \mu f[\Bk] \Big] } \,, & \abs{\vec{p}[\Bk]} = 1
    \\
    0 \,, & \abs{\vec{p}[\Bk]} < 1 
   \end{cases}
 \notag
 \\
 \Leftrightarrow~ 
 & \lambda[\Bk] = \abs{ \nabla^+_L \Big[ \text{div}^-_L \vec{p}[\Bk] - \mu f[\Bk] \Big] } \,.  \notag
\end{align}
Note that Lagrange multiplier is not active on an open set. We rewrite (\ref{eq:KKT:1}) as
\begin{align*}
 -\nabla^+_L \Big[ \text{div}^-_L \vec{p}[\Bk] - \mu f[\Bk] \Big]
 ~+~ \abs{ \nabla^+_L \Big[ \text{div}^-_L \vec{p}[\Bk] - \mu f[\Bk] \Big] } \vec{p}[\Bk] = 0 \,,~  \forall \Bk \in \Omega \,.
\end{align*}
Applying gradient descent and a fixed point algorithm (i.e.\ a semi-implicit gradient descent scheme), we have
\begin{align*}
 &-\nabla^+_L \Big[ \text{div}^-_L \vec{p}^{(t-1)}[\Bk] - \mu f[\Bk] \Big]
 ~+~ \abs{ \nabla^+_L \Big[ \text{div}^-_L \vec{p}^{(t-1)}[\Bk] - \mu f[\Bk] \Big] } \vec{p}^{(t)}[\Bk] = 
 - \frac{\vec{p}^{(t)}[\Bk] - \vec{p}^{(t-1)}[\Bk]}{\tau} \,,~ 
\end{align*}
and the solution in matrix form is given by
\begin{align*}
 \vec{\Bp}^{(t)}
 = 
 \frac{ \vec{\Bp}^{(t-1)} + \tau \nabla^+_L \Big[ \text{div}^-_L \vec{\Bp}^{(t-1)} - \mu \Bf \Big] }
      { 1 + \tau \abs{ \nabla^+_L \Big[ \text{div}^-_L \vec{\Bp}^{(t-1)} - \mu \Bf \Big] } } 
 \,,~ t = 1 \,, \ldots
\end{align*}

\end{proof}


\begin{prop}  \label{prop:DirectionalGL1}
 The directional $\text{G}_S-\ell_1$ model is
 \begin{align}  \label{eq:L1DGnorm:1}
  \min_{\Bv \in X} \Big\{ \mathscr F(\Bv) = \norm{\Bv}_{\ell_1} + \frac{\beta}{2} \norm{\Bf - \Bv}^2_{\ell_2} 
  ~~\text{s.t.}~~ \norm{\Bv}_{\text{G}_S} \leq \mu
  \Big\} \,.
 \end{align}
For iteration $t$, we have
 
\noindent  {\bfseries Dual variable $\vec{\Bg} = \big[ \Bg_s \big]_{s=0}^{S-1}$}
  \begin{align*}
   \vec{\Bg}^{(t)}
   = \frac{ \vec{\Bg}^{(t-1)} + \tau \nabla_S^+ \Big[ \alpha \mu \text{div}^-_S \vec{\Bg}^{(t-1)} - \boldsymbol{\lambda}_{\boldsymbol 1}^{(t-1)} - \alpha \Bv^{(t-1)} \Big] }
          { 1 + \tau \abs{\nabla^+_S \Big[ \alpha \mu \text{div}^-_S \vec{\Bg}^{(t-1)} - \boldsymbol{\lambda}_{\boldsymbol 1}^{(t-1)} - \alpha \Bv^{(t-1)} \Big]} } \,,
  \end{align*} 
 
\noindent {\bfseries Primal variable}
  \begin{align*}
   \Bv^{(t)} = 
   \Shrink \Big( \frac{\beta}{\alpha + \beta} \Bf + \frac{\alpha \mu}{\alpha + \beta} \text{div}^-_S \vec{\Bg}^{(t)}
                 - \frac{1}{\alpha + \beta} \boldsymbol{\lambda}_{\boldsymbol 1}^{(t-1)}
                 \,,~ \frac{1}{\alpha + \beta}
           \Big) \,,
  \end{align*}     
  
\noindent {\bfseries Updated Lagrange multiplier $\boldsymbol{\lambda_1} \in X$} 
  \begin{align*}
   \boldsymbol{\lambda}_{\boldsymbol 1}^{(t)} = \boldsymbol{\lambda}_{\boldsymbol 1}^{(t-1)}
   + \alpha \Big[ \Bv^{(t)} - \mu \text{div}^-_S \vec{\Bg}^{(t)} \Big] \,.
  \end{align*}

\end{prop}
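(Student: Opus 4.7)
The plan is to reformulate (\ref{eq:L1DGnorm:1}) using the definition of the $\text{G}_S$-norm so that the constraint becomes affine in a new dual variable $\vec{\Bg}$, and then to attack the resulting constrained problem by the augmented Lagrangian method and ADMM, exactly as in the proof of Proposition \ref{prop:2phasePiecewiseConst}. With this splitting the $\Bv$-subproblem becomes a standard $\ell_1$-$\ell_2$ minimization solved by soft-thresholding, while the $\vec{\Bg}$-subproblem reduces to the $\ell_\infty$-constrained least-squares program already treated in Proposition \ref{prop:DirectionalTVL2} by Chambolle's projection.

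First I would use the fact that, by the definition $\norm{\Bv}_{\text{G}_S} = \inf\{\norm{\vec{\Bg}}_{\ell_\infty} : \Bv = \text{div}^-_S \vec{\Bg}\}$, the constraint $\norm{\Bv}_{\text{G}_S} \leq \mu$ is equivalent to the existence of $\vec{\Bg} \in K_S(1) := \{\vec{\Bg} \in X^S : \norm{\vec{\Bg}}_{\ell_\infty} \leq 1\}$ with $\Bv = \mu\,\text{div}^-_S \vec{\Bg}$. Inserting the indicator $\chi_{K_S(1)}(\vec{\Bg})$ into the objective and introducing a Lagrange multiplier $\boldsymbol{\lambda_1}$ with penalty parameter $\alpha > 0$, the augmented Lagrangian becomes
\[
 \cL(\Bv, \vec{\Bg}; \boldsymbol{\lambda_1}) = \norm{\Bv}_{\ell_1} + \frac{\beta}{2}\norm{\Bf - \Bv}_{\ell_2}^2 + \chi_{K_S(1)}(\vec{\Bg}) + \frac{\alpha}{2}\norm{\Bv - \mu\,\text{div}^-_S \vec{\Bg} + \frac{\boldsymbol{\lambda_1}}{\alpha}}_{\ell_2}^2.
\]

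Next I would apply ADMM. For the $\Bv$-subproblem (fixing $\vec{\Bg}$ and $\boldsymbol{\lambda_1}$), the first-order condition
\[
 0 \in \partial \norm{\Bv}_{\ell_1} + (\alpha+\beta)\Bv - \big(\beta\Bf + \alpha\mu\,\text{div}^-_S \vec{\Bg} - \boldsymbol{\lambda_1}\big)
\]
has the claimed shrinkage solution in closed form. For the $\vec{\Bg}$-subproblem (fixing $\Bv$ and $\boldsymbol{\lambda_1}$), discarding constants reduces it to
\[
 \min_{\vec{\Bg} \in K_S(1)} \frac{\alpha}{2}\norm{\mu\,\text{div}^-_S \vec{\Bg} - \Big(\Bv + \frac{\boldsymbol{\lambda_1}}{\alpha}\Big)}_{\ell_2}^2,
\]
which is precisely the $\ell_\infty$-constrained quadratic of Proposition \ref{prop:DirectionalTVL2} up to a rescaling of the data term. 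I would import that argument verbatim: form pointwise Lagrange multipliers for $\sum_s g_s^2[\Bk] - 1 \leq 0$, use complementary slackness to identify them with $\abs{\nabla^+_S[\alpha\mu\,\text{div}^-_S\vec{\Bg} - \alpha\Bv - \boldsymbol{\lambda_1}]}$, and close via a semi-implicit gradient descent with step $\tau$; the factor $\alpha\mu$ inside the gradient in the statement is exactly the constant carried over from the quadratic term. Finally, the standard ALM rule $\boldsymbol{\lambda_1}^{(t)} = \boldsymbol{\lambda_1}^{(t-1)} + \alpha[\Bv^{(t)} - \mu\,\text{div}^-_S \vec{\Bg}^{(t)}]$ closes the iteration.

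The main obstacle is the $\vec{\Bg}$-subproblem: one must recognize it as a rescaled instance of the DTV-$\ell_2$ dual problem so that Chambolle's projection applies, which relies on the dual characterization of the $\text{G}_S$-norm furnished by Lemma \ref{lem:DTVDGnorm:dualpair}. Once this identification is made, the Chambolle iteration follows verbatim with only a rescaling of the step size; the $\Bv$-update is immediate $\ell_1$ soft-thresholding and the multiplier update is the textbook ALM formula, so no further technical difficulties arise.
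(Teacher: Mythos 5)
Your proposal is correct and follows essentially the same route as the paper's proof: rewriting the constraint as $\Bv = \mu\,\text{div}^-_S\vec{\Bg}$ with $\norm{\vec{\Bg}}_{\ell_\infty}\leq 1$, applying ALM/ADMM with multiplier $\boldsymbol{\lambda_1}$ and penalty $\alpha$, solving the $\Bv$-subproblem by soft-thresholding, and handling the $\vec{\Bg}$-subproblem via pointwise KKT multipliers and the semi-implicit (Chambolle-type) fixed-point iteration exactly as in Proposition \ref{prop:DirectionalTVL2}. The only cosmetic difference is that you encode the inequality constraint through an indicator function before unpacking it into the pointwise multipliers, whereas the paper writes the generalized multiplier term $\big\langle \boldsymbol{\lambda_2}, \sum_s \Bg_s^{\cdot 2} - 1\big\rangle_{\ell_2}$ directly; these are equivalent.
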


\begin{proof}
Given the directional $\text{G}_S$-norm \cite{AujolChambolle2005, ThaiGottschlich2016DG3PD},
we have the convex set 
\begin{align*}
 \text{G}_S(\mu) = \Bigg\{ \Bv = \text{div}^-_S \vec\Bg \in X \,, \vec{\Bg} = \big[ \Bg_s \big]_{s=0}^{S-1} \in X^S 
 ~:~ \underbrace{ \norm{\Bv}_{\text{G}_S} = \norm{ \sqrt{ \sum_{s=0}^{S-1} \Bg_s^{\cdot 2} } }_{\ell_\infty} \leq \mu
                }_{\displaystyle \Leftrightarrow \abs{g[\Bk]} = \sqrt{\sum_{s=0}^{S-1} g_s^2[\Bk]} \leq \mu \,, \forall \Bk \in \Omega }
 \Bigg\}
\end{align*}
Note that $\text{G}_S(\mu)$ is identical to $K_S(\mu)$; see Lemma \ref{lem:DTVDGnorm:dualpair}.
The objective function of (\ref{eq:L1DGnorm:1}) can be rewritten as
\begin{align*}
 \mathscr F(\Bv) &= \begin{cases}
            \norm{\Bv}_{\ell_1} + \frac{\beta}{2} \norm{\Bf - \Bv}^2_{\ell_2} \,,& \Bv \in \text{G}_S(\mu)
            \\
            +\infty \,,& \text{else}
           \end{cases}
 \hspace{3mm} = \hspace{3mm} \norm{\Bv}_{\ell_1} + \frac{\beta}{2} \norm{\Bf - \Bv}^2_{\ell_2} + J_S^* \left(\frac{\Bv}{\mu}\right)
\end{align*}
with indicator function
\begin{align*}
 J_S^* \left(\frac{\Bv}{\mu}\right) = 
 \begin{cases}
  0 \,, & \Bv \in \text{G}_S(\mu) \\ +\infty \,, & \text{else}
 \end{cases}
 \,.
\end{align*}
Thus, we can rewrite (\ref{eq:L1DGnorm:1}) as
\begin{align*}  
 &\min_{\Bv \in \text{G}_S(\mu)} \Big\{ \norm{\Bv}_{\ell_1} + \frac{\beta}{2} \norm{\Bf - \Bv}^2_{\ell_2} \Big\}
 \\
 &= \min_{\Bv \in X} \left\{ \norm{\Bv}_{\ell_1} + \frac{\beta}{2} \norm{\Bf - \Bv}^2_{\ell_2} 
 \,, \Bv = \text{div}^-_S \vec{\Bg} \,, \abs{\frac{g[\Bk]}{\mu}} = \sqrt{\sum_{s=0}^{S-1} \frac{g_s^2[\Bk]}{\mu^2}} \leq 1 \,, \forall \Bk \in \Omega
 \right\} \,.
\end{align*}
By changing variable $\vec{\Bg}'$ to $\frac{\vec{\Bg}}{\mu}$, we have
\begin{align}  
 \min_{\Bv \in X} \left\{ \norm{\Bv}_{\ell_1} + \frac{\beta}{2} \norm{\Bf - \Bv}^2_{\ell_2} 
 \,,~ \Bv = \mu \text{div}^-_S \vec{\Bg} \,,~ \sum_{s=0}^{S-1} g_s^2[\Bk] - 1 \leq 0 \,, \forall \Bk \in \Omega
 \right\} \,.
\end{align}
We then apply ALM for the equality constraint and the KKT condition (generalized Lagrange multiplier) for the inequality constraint as
\begin{align*}
 \min_{ ( \Bv, \vec{\Bg} ) \in X^{S+1} } \cL(\Bv, \vec{\Bg} ~;~ \boldsymbol{\lambda}_{\boldsymbol 1}, \boldsymbol{\lambda}_{\boldsymbol 2})
\end{align*}
with Lagrange function
\begin{align*}
 \cL(\cdot; \cdot) = \norm{\Bv}_{\ell_1} + \frac{\beta}{2} \norm{\Bf - \Bv}^2_{\ell_2}  
 + \Big\langle \boldsymbol{\lambda_1} \,, \Bv - \mu \text{div}^-_S \vec{\Bg} \Big\rangle_{\ell_2}
 + \frac{\alpha}{2} \norm{\Bv - \mu \text{div}^-_S \vec{\Bg} }^2_{\ell_2} 
 + \Big\langle \boldsymbol{\lambda_2} \,, \sum_{s=0}^{S-1} \Bg_s^{\cdot 2} - 1 \Big\rangle_{\ell_2} \,.
\end{align*}
Note that $\lambda_2[\Bk]>0 \,, \forall \Bk \in \Omega$. 
In a sequential fashion, we consider the primal, dual problems and then update the Lagrange multiplier $\boldsymbol {\lambda_1}$ as:

\noindent {\bfseries The primal $\Bv$-problem:} fix $\vec{\Bg}$ and solve
\begin{align*}
 \min_{\Bv \in X} \left\{ \cL(\Bv) =
 \norm{\Bv}_{\ell_1} + \frac{\beta}{2} \norm{\Bf - \Bv}^2_{\ell_2}  
 + \Big\langle \boldsymbol{\lambda_1} \,, \Bv - \mu \text{div}^-_S \vec{\Bg} \Big\rangle_{\ell_2}
 + \frac{\alpha}{2} \norm{\Bv - \mu \text{div}^-_S \vec{\Bg} }^2_{\ell_2} 
 \right\}
\end{align*}
The Euler-Lagrange equation is given by
\begin{align*}
 0 &= \frac{\partial \cL(\Bv)}{\partial \Bv} = 
 \frac{\Bv}{\abs{\Bv}}  + \beta(\Bv - \Bf) + \boldsymbol{\lambda_1}
 + \alpha \Big[ \Bv - \mu \text{div}^-_S \vec{\Bg} \Big]
 \\ \Leftrightarrow~
 \Bv^* &= \frac{\beta}{\alpha + \beta} \Bf + \frac{\alpha \mu}{\alpha + \beta} \text{div}^-_S \vec{\Bg} 
 - \frac{1}{\alpha + \beta} \boldsymbol{\lambda_1}
 - \frac{1}{\alpha + \beta} \frac{\Bv}{\abs{\Bv}}
 \\&:=
 \Shrink \Big( \frac{\beta}{\alpha + \beta} \Bf + \frac{\alpha \mu}{\alpha + \beta} \text{div}^-_S \vec{\Bg}
               -\frac{1}{\alpha + \beta} \boldsymbol{\lambda_1}
               \,,~ \frac{1}{\alpha + \beta}
         \Big) \,.
\end{align*}

\noindent {\bfseries The dual $\vec{\Bg} = \big[ \Bg_s \big]_{s=0}^{S-1}$-problem:} Fix $\Bv$ and solve
\begin{align*}
 \min_{\vec{\Bg} \in X^S} \left\{ \cL(\vec{\Bg}) =
 \Big\langle \boldsymbol{\lambda_1} \,, \Bv - \mu \text{div}^-_S \vec{\Bg} \Big\rangle_{\ell_2}
 + \frac{\alpha}{2} \norm{\Bv - \mu \text{div}^-_S \vec{\Bg} }^2_{\ell_2} 
 + \Big\langle \boldsymbol{\lambda_2} \,, \sum_{s=0}^{S-1} \Bg_s^{\cdot 2} - 1 \Big\rangle_{\ell_2}
 \right\}
\end{align*}
Due to its separability, we evaluate the dual $\vec{\Bg}$ problem at $s = s' \,, s' = 0, \ldots, S-1$ 
and its Euler-Lagrange equation is given by
\begin{align*}
 &0 = \frac{\partial \cL(\vec{\Bg})}{\partial \Bg_{s'}} = 
 - \mu \underbrace{ \Big\langle \boldsymbol{\lambda_1} \,, \overbrace{ \frac{\partial \big\{ \text{div}^-_S \vec{\Bg} \big\} }{\partial \Bg_{s'}} }^{ = -\partial_{s'}^+ \delta } \Big\rangle_{\ell_2} }
                 _{ = -\partial_{s'}^+ \boldsymbol{\lambda_1} }
 - \alpha \mu \underbrace{ \frac{\partial \big\{ \text{div}^-_S \vec{\Bg} \big\} }{\partial \Bg_{s'}} }_{= -\partial^+_{s'} \delta}
 \Big[ \Bv - \mu \text{div}^-_S \vec{\Bg} \Big]
 + 2 \boldsymbol{\lambda_2} \cdot^\times \Bg_{s'}
 \\ \Leftrightarrow~ &
 - \mu \partial_{s'}^+ \Big[ \alpha \mu \text{div}^-_S \vec{\Bg} - \boldsymbol{\lambda_1} - \alpha \Bv \Big]
 + 2 \boldsymbol{\lambda_2} \cdot^\times \Bg_{s'} = 0
 \,,~ s' = 0, \ldots, S-1
 \\ \Leftrightarrow~ & 
 - \mu \nabla_S^+ \Big[ \alpha \mu \text{div}^-_S \vec{\Bg} - \boldsymbol{\lambda_1} - \alpha \Bv \Big]
 + 2 \boldsymbol{\lambda_2} \cdot^\times \vec{\Bg} = 0 \,.
\end{align*}
As in (\ref{eq:KKT:1}), with $\lambda_2[\Bk] > 0\,, \Bk \in \Omega$, its element form is 
\begin{align} \label{eq:KKT:3}
 &- \mu \nabla_S^+ \Big[ \alpha \mu \text{div}^-_S \vec{g}[\Bk] - \lambda_1[\Bk] - \alpha v[\Bk] \Big]
 + 2 \lambda_2[\Bk] \vec{g}[\Bk] = 0
 \\ \Leftrightarrow~ &
 \lambda_2[\Bk] = \frac{\mu}{2} \abs{\nabla^+_S \Big[ \alpha \mu \text{div}^-_S \vec{g}[\Bk] - \lambda_1[\Bk] - \alpha v[\Bk] \Big]} \,.   
 \notag
\end{align}
Thus, we rewrite (\ref{eq:KKT:3}) with $\Bk \in \Omega$ as
\begin{align*}
 &- \nabla_S^+ \Big[ \alpha \mu \text{div}^-_S \vec{g}[\Bk] - \lambda_1[\Bk] - \alpha v[\Bk] \Big]
 + \abs{\nabla^+_S \Big[ \alpha \mu \text{div}^-_S \vec{g}[\Bk] - \lambda_1[\Bk] - \alpha v[\Bk] \Big]} \vec{g}[\Bk] = 0 \,.
\end{align*}
Applying gradient descent and the fixed point algorithm, we have
\begin{align*}
 &- \nabla_S^+ \Big[ \alpha \mu \text{div}^-_S \vec{g}^{(t-1)}[\Bk] - \lambda_1[\Bk] - \alpha v[\Bk] \Big]
 + \abs{\nabla^+_S \Big[ \alpha \mu \text{div}^-_S \vec{g}^{(t-1)}[\Bk] - \lambda_1[\Bk] - \alpha v[\Bk] \Big]} \vec{g}^{(t)}[\Bk] 
 \\
 &= - \frac{ \vec{g}^{(t)}[\Bk] - \vec{g}^{(t-1)}[\Bk] }{ \tau } \,,
\end{align*}
the solution of the $\vec{\Bg}$-problem in matrix form is
\begin{align*}
 \vec{\Bg}^{(t)}
 = \frac{ \vec{\Bg}^{(t-1)} + \tau \nabla_S^+ \Big[ \alpha \mu \text{div}^-_S \vec{\Bg}^{(t-1)} - \boldsymbol{\lambda_1} - \alpha \Bv \Big] }
        { 1 + \tau \abs{\nabla^+_S \Big[ \alpha \mu \text{div}^-_S \vec{\Bg}^{(t-1)} - \boldsymbol{\lambda_1} - \alpha \Bv \Big]} } 
 \,,~ t = 1, \ldots .
\end{align*}

\noindent {\bfseries Update the Lagrange multiplier $\boldsymbol{\lambda_1} \in X$:}
\begin{align*}
 \boldsymbol{\lambda}_{\boldsymbol 1}^{(t)} = \boldsymbol{\lambda}_{\boldsymbol 1}^{(t-1)}
 + \alpha \Big[ \Bv - \mu \text{div}^-_S \vec{\Bg} \Big] \,.
\end{align*}

\end{proof}


\begin{prop} \label{prop:combinedmodel:pproblem}
 The $\vec{\Bp}$-problem in the SHT model is
\begin{align*}
 \min_{\vec{\Bp} \in X^N} \left\{
 \mu_3 \sum_{n=1}^N \norm{\nabla^+_M \Bp_n}_{\ell_1}    
 + \frac{\mu_4}{2} \norm{ \Bu - \sum_{n=1}^N c_n \Bp_n }^2_{\ell_2}
 \text{  s.t.  } 
 \sum_{n=1}^N \Bp_n = 1,
 p_n [\Bk] > 0, n = 1, \ldots, N, \Bk \in \Omega   
 \right\}.
\end{align*}
At iteration $t$, we have

\noindent {\bfseries The primal variable:}
\begin{align*}
 \Bp_n^{(t)} &= \frac{\displaystyle \exp\left\{ -\frac{1}{\xi} \Big[ \text{div}^-_M \vec{\Bq}_n^{(t-1)} 
                + \frac{\mu_4}{2 \mu_3} \big( \Bu - c_n \big)^{\cdot 2}
                \Big] \right\} }
              {\displaystyle \sum_{i=1}^N \exp \left\{ - \frac{1}{\xi} \Big[ \text{div}^-_M \vec{\Bq}_i^{(t-1)} 
                + \frac{\mu_4}{2 \mu_3} \big( \Bu - c_i \big)^{\cdot 2} \Big] \right\} }
 \\
 &= \frac{\displaystyle \exp\left\{ -\frac{1}{\xi} \Big[ -\sum_{m=0}^{M-1} \Big[ \cos\left(\frac{\pi m}{M}\right) \Bq_{nm}^{(t-1)} \BDn + \sin\left(\frac{\pi m}{M}\right) \BDmT \Bq_{nm}^{(t-1)} \Big] 
          + \frac{\mu_4}{2 \mu_3} \big( \Bu - c_n \big)^{\cdot 2}
          \Big] \right\} }
         {\displaystyle \sum_{i=1}^N \exp \left\{ - \frac{1}{\xi} \Big[ -\sum_{m=0}^{M-1} \Big[ \cos\left(\frac{\pi m}{M}\right) \Bq_{im}^{(t-1)} \BDn + \sin\left(\frac{\pi m}{M}\right) \BDmT \Bq_{im}^{(t-1)} \Big] 
          + \frac{\mu_4}{2 \mu_3} \big( \Bu - c_i \big)^{\cdot 2} \Big] \right\} }
 \,,~ n = 1, \ldots, N     
\end{align*}

\noindent {\bfseries The dual variable:} 
\begin{align*}  
 \vec{\Bq}_n^{(t)} &= \frac{ \vec{\Bq}_n^{(t-1)} + \tau \nabla^+_M \Bp_n^{(t)} }{ 1 + \tau \abs{\nabla^+_M \Bp_n^{(t)}} }
 \,,~~ a = 1, \ldots, N \,,
 \\
 \text{or}
 \\
 \Bq_{nm}^{(t)} &= \frac{\displaystyle \Bq_{nm}^{(t-1)} + \tau \left[ \cos\left(\frac{\pi m}{M}\right) \Bp_n^{(t)} \BDnT + \sin\left(\frac{\pi m}{M}\right) \BDm \Bp_n^{(t)} \right] }
                        {\displaystyle 1 + \tau \left[ \sum_{m=0}^{M-1} \left[ \cos\left(\frac{\pi m}{M}\right) \Bp_n^{(t)} \BDnT + \sin\left(\frac{\pi m}{M}\right) \BDm \Bp_n^{(t)} \right]^{.2} \right]^{.\frac{1}{2}} }
 \,,~~ n = 1, \ldots, N \,,~ m = 0, \ldots, M-1 
\end{align*} 
\end{prop}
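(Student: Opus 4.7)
The plan is to adapt the derivation already carried out for Level 2 (SHTMS) in Section \ref{sec:BiLevelMinimizationScheme} to the current $\vec{\Bp}$-subproblem. First I would remove the coupling in the quadratic term. Under the binary simplex constraint $p_n[\Bk] \in \{0,1\}$, $\sum_n p_n[\Bk] = 1$, the identity $p_n^2[\Bk] = p_n[\Bk]$ and exclusivity give the pointwise equality
\[
 \Big(u[\Bk] - \sum_{n=1}^N c_n p_n[\Bk]\Big)^2 \;=\; \sum_{n=1}^N (u[\Bk]-c_n)^2 \, p_n[\Bk],
\]
so the quadratic fidelity rewrites as $\tfrac{\mu_4}{2}\sum_n \langle (\Bu-c_n)^{\cdot 2}, \Bp_n\rangle_{\ell_2}$. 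I then relax the binary set to the convex simplex $\mathcal{Q}_+$ (as in (\ref{eq:BiLevelMinimization:Level2:pproblem:PrimalModel})) and factor out $\mu_3$, obtaining the primal model
\[
 \min_{\vec{\Bp}\in\mathcal{Q}_+}\bigg\{ \sum_{n=1}^N \norm{\nabla^+_M \Bp_n}_{\ell_1} + \frac{\mu_4}{2\mu_3}\sum_{n=1}^N \Big\langle (\Bu-c_n)^{\cdot 2}, \Bp_n\Big\rangle_{\ell_2}\bigg\}.
\]

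Next I would dualize the DTV-norm via Lemma \ref{lem:DTVDGnorm:dualpair} with $\mu=1$, replacing each $\norm{\nabla^+_M \Bp_n}_{\ell_1}$ by $\max_{\vec{\Bq}_n \in K_M(1)}\langle \Bp_n, \text{div}^-_M \vec{\Bq}_n\rangle_{\ell_2}$. This yields a primal-dual saddle problem in complete analogy with (\ref{eq:BiLevelMinimization:2:step2:primaldual}). To make the inner primal problem strictly convex and admit a closed-form minimizer, I would add the entropic smoothing term $\xi \sum_n \langle \Bp_n, \log\Bp_n\rangle_{\ell_2}$ (as in (\ref{eq:BiLevelMinimization:2:step2:SmoothedPrimalDual1})) and swap min-max by the minimax theorem \cite{EkelandTeman1999}.

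For the primal $\vec{\Bp}$-step (fix $\vec{\Bq}$), I would enforce the simplex constraint $\sum_n p_n[\Bk]=1$ with a pointwise Lagrange multiplier $\Lambda[\Bk]$. The Euler-Lagrange equation separates pointwise in $\Bk$ and in $n$, giving $\xi(1+\log p_n[\Bk]) + \text{div}^-_M\vec{q}_n[\Bk] + \tfrac{\mu_4}{2\mu_3}(u[\Bk]-c_n)^2 + \Lambda[\Bk] = 0$; solving for $p_n[\Bk]$ and eliminating $\Lambda[\Bk]$ via the normalization $\sum_n p_n[\Bk]=1$ produces the Boltzmann/softmax formula stated in the proposition. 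Expanding $\text{div}^-_M\vec{\Bq}_n$ using the matrix form from Section \ref{sec:prelim} yields the second equality.

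For the dual $\vec{\Bq}$-step (fix $\vec{\Bp}$) I would replay the Chambolle projection argument used in Proposition \ref{prop:DirectionalTVL2}: writing the indicator of $K_M(1)$ via a KKT multiplier $\boldsymbol{\mu}_n[\Bk]\ge 0$ on the constraint $|\vec{q}_n[\Bk]|^2 \le 1$, taking the Euler-Lagrange equation with respect to each $\Bq_{nm}$, exploiting complementary slackness to identify $\boldsymbol{\mu}_n[\Bk]= |\nabla^+_M \Bp_n[\Bk]|$, and finally applying a semi-implicit fixed-point gradient ascent with step $\tau$ produces exactly the Chambolle update for $\vec{\Bq}_n^{(t)}$, which expands coordinatewise into the stated formula for $\Bq_{nm}^{(t)}$. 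The main obstacle is the replacement of the quadratic in step 1: the identity is exact only under the binary setting, so I would emphasize that this substitution is adopted at the model level (as in \cite{GuXiongWangChengHuangZhou2014}) and justified a posteriori because the entropic smoothing pushes the soft assignments toward binary as $\xi\to 0$.
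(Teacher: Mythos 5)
Your proposal is correct and follows essentially the same route as the paper's proof: the binary identity $\big(u[\Bk]-\sum_n c_n p_n[\Bk]\big)^2=\sum_n (u[\Bk]-c_n)^2 p_n[\Bk]$ to decouple the fidelity term, relaxation to $\mathcal Q_+$, dualization of the DTV-norm via Lemma \ref{lem:DTVDGnorm:dualpair}, a pointwise Lagrange multiplier on the simplex constraint yielding the softmax formula, and the KKT/semi-implicit fixed-point argument for the Chambolle update of $\vec{\Bq}_n$. The only (cosmetic) divergence is that you introduce the entropic term directly as a smoothing regularizer of the primal-dual saddle problem, whereas the paper reaches the same functional by passing through the dual model and the Legendre conjugate of the log-sum-exp approximation of $\min_n(\cdot)$; both yield the identical smoothed primal-dual model and updates, and your explicit flagging of the binary-only validity of the quadratic substitution is a point the paper leaves implicit.
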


\begin{proof}

This subproblem is solved by a smoothed primal-dual model \cite{BaeYuanTai2010} and Chambolle's projection \cite{Chambolle2004} for 
the dual variable (gradient descent of the Euler Lagrange equation and fixed point algorithm).
In order to make the paper self-contained, the extension of \cite{BaeYuanTai2010} to the multi-directional case is provided here by
introducing a primal model, a primal-dual model, a dual model and then a smoothed dual model
and a smoothed primal-dual model.

\noindent {\bfseries The primal model:}

Given the relaxed version on the binary set, the ``$\vec{\Bp}$-problem'' can be rewritten as the convex minimization
\begin{align}  \label{eq:pproblem:primal}
 \min_{\vec{\Bp} \in \mathcal Q_+} \bigg\{ \cL^\text{P}(\vec{\Bp}) = 
 \sum_{n=1}^N \norm{\nabla^+_M \Bp_n}_{\ell_1}    
 + \frac{\mu_4}{2 \mu_3} \underbrace{ \sum_{n=1}^N \Big\langle \big( \Bu - c_n \big)^{\cdot 2} \,, \Bp_i \Big\rangle_{\ell_2} }_{ = \norm{ \Bu - \sum_{n=1}^N c_n \Bp_n }^2_{\ell_2} }
 \bigg\}
\end{align}
with the convex set
\begin{align*}
 \mathcal Q_+ = \left\{ \vec{\Bp} = \big[ \Bp_n \big]_{n=1}^N \in X^N ~:~ 
 \sum_{n=1}^N \Bp_n = 1 \,, p_n[\Bk] > 0 \,, n = 1, \ldots, N \,, \Bk \in \Omega
 \right\} \,.
\end{align*}

\noindent {\bfseries The primal-dual model:}

From Lemma \ref{lem:DTVDGnorm:dualpair}, given a convex set of the DTV-norm
\begin{equation*}
  K_M(1) = \left\{ \vec{\Bq}_n = \big[ \Bq_{nm} \big]_{m=0}^{M-1} \in X^M ~:~ 
 \abs{\vec{\Bq}_n} = \sqrt{ \sum_{m=0}^{M-1} \Bq_{nm}^{\cdot 2} } \leq 1
 \right\}
\end{equation*}
with the notation $\vec{\Bq} = \big[ \vec{\Bq}_n \big]^N_{n=1} = \big[ \Bq_{nm} \big]_{n=[1,N]}^{m =[0,M-1]} \in X^{N M}$,
the primal-dual model of (\ref{eq:pproblem:primal}) (a separable problem) can be written as
\begin{align*}  
 \cL^\text{P}(\vec{\Bp}) &= \sum_{n=1}^N \max_{\vec{\Bq}_n \in K_M(1)} \Big\langle  \Bp_n \,, \text{div}^-_M \vec{\Bq}_n \Big\rangle_{\ell_2}
 + \frac{\mu_4}{2 \mu_3} \sum_{n=1}^N \Big\langle \Bp_n \,, \big( \Bu - c_n \big)^{.2} \Big\rangle_{\ell_2}
 \\
 &= \max_{\vec{\Bq} \in \big[ K_M(1) \big]^N}
 \bigg\{ \sum_{n=1}^N \Big\langle \Bp_n \,, \text{div}^-_M \vec{\Bq}_n
 + \frac{\mu_4}{2 \mu_3} \big( \Bu - c_n \big)^{\cdot 2}
 \Big\rangle_{\ell_2} 
 \bigg\}
 \,.
\end{align*}
From (\ref{eq:pproblem:primal}) and the Min-Max theorem, the primal $(\vec{\Bp})$ - dual $(\vec{\Bq})$ model is 
\begin{align} \label{eq:pproblem:primaldual}
 &\max_{\vec{\Bq} \in \big[ K_M(1) \big]^N}
 \min_{\vec{\Bp} \in \mathcal Q_+} \bigg\{
 \cL^\text{PD}(\vec{\Bq}, \vec{\Bp}) = \sum_{n=1}^N \Big\langle \Bp_n \,, \text{div}^-_M \vec{\Bq}_n
 + \frac{\mu_4}{2 \mu_3} \big( \Bu - c_n \big)^{\cdot 2}
 \Big\rangle_{\ell_2}
 \bigg\} \,.
\end{align}

\noindent {\bfseries The dual model:}
We first recall a result from \cite{BaeYuanTai2010}: 
\begin{Lem}
 Given a convex set 
 \begin{align*}
  &\Delta_+ = \Big\{ \big( v_1(x), \ldots, v_N(x) \big) \Big\} ~:~ 
  \sum_{n=1}^N v_n(x) = 1 \,, v_n(x) \geq 0 \,, n = 1, \ldots, N \,, \Bk \in \Omega \Big\} \,,
 \end{align*} 
 we have 
 \begin{align*}
  \min_{(v_1, \ldots, v_N) \in \Delta_+} \sum_{n=1}^N v_n q_n = \min \big( q_1, \ldots, q_N \big) \,.
 \end{align*}
 
\end{Lem}
Since the pixels are assumed to be independent, the primal-dual model (\ref{eq:pproblem:primaldual}) can be rewritten as
\begin{align*} 
 &\max_{\vec{\Bq} \in \big[ K_M(1) \big]^N}
 \min_{\vec{\Bp} \in \mathcal Q_+} \bigg\{
 \cL^\text{PD}(\vec{\Bq}, \vec{\Bp}) = \sum_{\Bk \in \Omega} \sum_{n=1}^N p_n[\Bk] 
 \Big[ \text{div}^-_M \vec{q}_n[\Bk] + \frac{\mu_4}{2 \mu_3} \big( u[\Bk] - c_n \big)^2 \Big]
 \bigg\}
 \\
 &= \max_{\vec{\Bq} \in \big[ K_M(1) \big]^N}
 \sum_{\Bk \in \Omega}
 \underbrace{
 \min_{\vec{p}[\Bk] \in \mathcal Q_{+\Bk}} \bigg\{
 \cL^\text{PD}(\vec{q}[\Bk], \vec{p}[\Bk]) = \sum_{n=1}^N p_n[\Bk] 
 \Big[ \text{div}^-_M \vec{q}_n[\Bk] + \frac{\mu_4}{2 \mu_3} \big( u[\Bk] - c_n \big)^2 \Big]
 \bigg\}
 }_{ \displaystyle =
 \min_{1\leq n \leq N} \bigg\{  
 \text{div}^-_M \vec{q}_1[\Bk] + \frac{\mu_4}{2 \mu_3} \big( u[\Bk] - c_1 \big)^{2} \,,
 \ldots \,, 
 \text{div}^-_M \vec{q}_N[\Bk] + \frac{\mu_4}{2 \mu_3} \big( u[\Bk] - c_N \big)^{2}
 \bigg\}
 }
\end{align*}
with an element version at a pixel $\Bk$ of a convex set $Q_+$
\begin{align} \label{eq:convexset:Q+k}
 Q_{+\Bk} = \Big\{ \vec{p}[\Bk] = \big[ p_n[\Bk] \big]_{n=1}^N \in \mathbb R^N ~:~
 \sum_{n=1}^N p_n[\Bk] = 1 \,, p_n[\Bk] > 0 \,, n = 1, \ldots, N \,, \Bk \in \Omega
 \Big\} \,.
\end{align}
Thus, the dual problem is
\begin{align} \label{eq:pproblem:dual}
 \max_{\vec{\Bq} \in \big[ K_M(1) \big]^N} \bigg\{
 \cL^\text{D} (\vec{\Bq}) = 
 \sum_{\Bk \in \Omega} \min_{1 \leq n \leq N}
 \Big\{
 \text{div}^-_M \vec{q}_1[\Bk] 
 + \frac{\mu_4}{2 \mu_3} \big( u[\Bk] - c_1 \big)^2 \,, \ldots \,,  
 \text{div}^-_M \vec{q}_N[\Bk] + \frac{\mu_4}{2 \mu_3} \big( u[\Bk] - c_N \big)^2 
 \Big\}  
 \bigg\} \,.
\end{align}

\noindent {\bfseries The smoothed dual model:}
\begin{Lem} \label{lem:smoothfunction}
Given $\vec{u} = \big[ u_n \big]_{n=1}^N$,
the non-smooth function $\max_{1 \leq n \leq N} u_i$ is defined as 
the asympototic function of a proper convex function 
$g(\vec{u}) = \log \Big[ \sum_{n=1}^N e^{u_n} \Big]$ as
\begin{align*}
 \max_{1 \leq n \leq N} u_n &= \lim_{\xi \rightarrow 0^+} \xi g \big(\frac{\vec{u}}{\xi}\big) 
 = \lim_{\xi \rightarrow 0^+} \xi \log \Big[ \sum_{n=1}^N e^{\frac{u_n}{\xi}} \Big]
 \\
 &\approx~ \xi \log \Big[ \sum_{n=1}^N e^{\frac{u_n}{\xi}} \Big]
\end{align*}
for some small constant $\xi >0$.
\end{Lem}
From (\ref{eq:pproblem:dual}), we have
\begin{align*}
 \cL^\text{D}(\vec{\Bq}) &= - \sum_{\Bk \in \Omega} \max_{1 \leq n \leq N} \bigg\{
 - \text{div}^-_M \vec{q}_1[\Bk] 
 - \frac{\mu_4}{2 \mu_3} \big( u[\Bk] - c_1 \big)^2 \,, \ldots \,,  
 - \text{div}^-_M \vec{q}_N[\Bk] 
 - \frac{\mu_4}{2 \mu_3} \big( u[\Bk] - c_N \big)^2  
 \bigg\}
 \\
 &\approx - \xi \sum_{\Bk \in \Omega} \log \bigg[
 \sum_{n=1}^N \exp \Big\{ \frac{1}{\xi} \Big[ - \text{div}^-_M \vec{q}_n[\Bk] 
 - \frac{\mu_4}{2 \mu_3} \big( u[\Bk] - c_n \big)^2
 \Big\} \bigg]
 ~:=~ \cL^\text{D}_{\xi > 0} (\vec{\Bq}) \,.
\end{align*}
Thus, a smoothed dual model is
\begin{align} \label{eq:pproblem:smootheddual}
 &\max_{\vec{\Bq} \in \big[ K_M(1) \big]^N} \bigg\{ \cL^\text{D}_{\xi > 0} (\vec \Bq) = 
 -\xi \sum_{\Bk \in \Omega}
 \log \Big[ \sum_{n=1}^N \exp \Big\{ -\frac{1}{\xi} \Big[ \text{div}^-_M \vec{q}_n[\Bk]  
 + \frac{\mu_4}{2 \mu_3} \big( u[\Bk] - c_n \big)^2
 \Big] \Big\}  \Big]
 \bigg\} \,.
\end{align}

\noindent {\bfseries The smoothed primal-dual model:}
\begin{Lem}
For any given $\Bu \in \mathcal Q_+$ and $h \in \mathbb R^N$
 \begin{equation*}
  \log \sum_{n=1}^N \mu_n e^{h_n} = \max_{u \in \mathcal Q_+} 
  \left\{ \langle u, h \rangle_{\ell_2} - \sum_{n=1}^N u_n \log \frac{u_n}{\mu_n}
  \right\} \,.
 \end{equation*}
\end{Lem}
Given the convex set $Q_{+\Bk}$ in (\ref{eq:convexset:Q+k}) and because the pixels are assumed to be independent, 
the energy function of a smoothed dual model (\ref{eq:pproblem:smootheddual}) can be rewritten as
\begin{align*}
 \cL^\text{D}_{\xi > 0} (\vec{\Bq}) &= -\xi \sum_{\Bk \in \Omega} \max_{\vec{p}_N[\Bk] \in \mathcal Q_{+\Bk}}
 \left\{ -\frac{1}{\xi}  \Big\langle \vec{p}_N[\Bk] \,, \text{div}^-_M \vec{q}[\Bk] 
 + \frac{\mu_4}{2 \mu_3} \big( u[\Bk] - \vec{c} \big)^2 \Big\rangle_{\ell_2}
 - \sum_{n=1}^N p_n[\Bk] \log p_n[\Bk]
 \right\}
 \\
 &= \min_{\vec{\Bp} \in \mathcal Q_+} \bigg\{
 \sum_{n=1}^N 
 \underbrace{
 \sum_{\Bk \in \Omega} p_n[\Bk] \Big[ \text{div}^-_M \vec{q}_n[\Bk] 
 + \frac{\mu_4}{2 \mu_3} \big( u[\Bk] - c_n \big)^2 \Big]
 }_{ = \big\langle \Bp_n \,, \text{div}^-_M \vec{\Bq}_n + \frac{\mu_4}{2 \mu_3} ( \Bu - c_n )^{\cdot 2} \big\rangle_{\ell_2} }
 + \xi \sum_{n=1}^N \underbrace{ \sum_{\Bk \in \Omega} p_n[\Bk] \log p_n[\Bk] }_{ = \big\langle \Bp_n \,, \log \Bp_n \big\rangle_{\ell_2} }
 \bigg\} \,.
\end{align*}
From (\ref{eq:pproblem:smootheddual}), we have the smoothed primal-dual model 
\begin{align}  \label{eq:pproblem:smoothedprimaldual}
 \max_{\vec{\Bq} \in \big[ K_M(1) \big]^N} 
 \underbrace{
 \min_{\vec{\Bp} \in \mathcal Q_+} \bigg\{
 \cL^\text{PD}_{\xi>0} (\vec{\Bq}, \vec{\Bp}) = \sum_{n=1}^N \Big\langle \Bp_n \,, \text{div}^-_M \vec{\Bq}_n 
 + \frac{\mu_4}{2 \mu_3} \big( \Bu - c_n \big)^{\cdot2} \Big\rangle_{\ell_2}
 + \xi \sum_{n=1}^N \big\langle \Bp_n \,, \log \Bp_n \big\rangle_{\ell_2}
 \bigg\}
 }_{\displaystyle = \cL^\text{D}_{\xi>0} (\vec{\Bq}) := -\xi \sum_{\Bk \in \Omega} \log \left[ 
 \sum_{n=1}^N \exp \left\{ -\frac{1}{\xi} \Big[ \text{div}^-_M \vec{q}_n[\Bk] 
 + \frac{\mu_4}{2 \mu_3} \big( u[\Bk] - c_n \big)^2
 \Big] \right\} \right]
 }
 \,.
\end{align}

\noindent {\bfseries a) Primal $(\vec{\Bp})$-problem:}
The primal problem of (\ref{eq:pproblem:smoothedprimaldual}) over a convex set 
$\mathcal Q_+ = \Big\{ \vec{\Bp} = \big[ \Bp_n \big]_{n=1}^N \in X^N ~:~
 \sum_{n=1}^N \Bp_n = 1 \,, p_n[\Bk] \geq 0 \,, n = 1, \ldots, N \,, \Bk \in \Omega
 \Big\} 
$
is
\begin{align*}
 \min_{\vec{\Bp} \in \mathcal Q_+} \cL^\text{PD}_{\xi>0} (\vec{\Bp}) \,.
\end{align*}
Applying ALM for the equality constraint, the primal $(\vec{\Bp})$-problem can be rewritten as
\begin{align*}
 \min_{0<\vec{\Bp} \in X^N} \max_{\boldsymbol \lambda \in X} \Bigg\{ 
 &\cL^\text{PD}_{\xi>0} (\vec{\Bp} \,; \boldsymbol{\lambda}) =
 \sum_{n=1}^N \big\langle \Bp_n \,, \text{div}^-_M \vec{\Bq}_n 
 + \frac{\mu_4}{2 \mu_3} \big( \Bu - c_n \big)^{\cdot 2} \big\rangle_{\ell_2}
 + \xi \sum_{n=1}^N \big\langle \Bp_n \,, \log \Bp_n \big\rangle_{\ell_2} 
 \\& \qquad \qquad \qquad \qquad \quad
 + \big\langle \boldsymbol{\lambda} \,, \sum_{n=1}^N \Bp_n - 1 \big\rangle_{\ell_2}
 + \frac{\gamma}{2} \norm{\sum_{n=1}^N \Bp_n - 1}^2_{\ell_2}
 \Bigg\} \,.
\end{align*}
Due to its separability, we consider the problem at $n=a \,, a = 1, \ldots, N$.
The Euler-Lagrange equation is
\begin{align*}  
\begin{cases}
 \displaystyle
 0 = \frac{ \partial \cL^\text{PD}_{\xi>0} (\vec{\Bp} \,, \boldsymbol{\lambda}) }{ \partial \boldsymbol{\lambda} }
   = \sum_{n=1}^N \Bp_n - 1    &  \qquad \text{(a)}
 \\ 
 \displaystyle
 0 = \frac{ \partial \cL^\text{PD}_{\xi>0} (\vec{\Bp} \,, \boldsymbol{\lambda}) }{ \partial \Bp_a }
 = \text{div}^-_M \vec{\Bq}_a + \frac{\mu_4}{2 \mu_3} \big( \Bu - c_a \big)^{\cdot 2}
 + \xi (1 + \log \Bp_a) + \boldsymbol{\lambda} + \gamma \underbrace{ \Big[ \sum_{n=1}^N \Bp_n - 1 \Big] }_{ = 0 }
 & \qquad \text{(b)}
\end{cases}
\end{align*}
\begin{align*}
 \text{(b)} &~\Leftrightarrow~ \Bp_a = \frac{1}{e^{\frac{\boldsymbol \lambda}{\xi}} + 1}
 \exp \left\{ -\frac{1}{\xi} \Big[ \text{div}^-_M \vec{\Bq}_a 
 + \frac{\mu_4}{2 \mu_3} \big( \Bu - c_a \big)^{\cdot2}
 \Big] \right\} \,,~ a = 1, \ldots, N
 \\
 \text{(a)} &~\Leftrightarrow~ \sum_{n=1}^N \exp \left\{
 - \frac{1}{\xi} \Big[ \text{div}^-_M \vec{\Bq}_n 
 + \frac{\mu_4}{2 \mu_3} \big( \Bu - c_n \big)^{\cdot 2}
 \Big] \right\} = e^{\frac{\boldsymbol{\lambda}}{\xi} + 1} \,.
\end{align*}
Thus, the primal variable is
\begin{align*}
 \Bp_a &= \frac{\displaystyle \exp\Big\{ -\frac{1}{\xi} \Big[ \text{div}^-_M \vec{\Bq}_a 
                + \frac{\mu_4}{2 \mu_3} \big( \Bu - c_a \big)^{\cdot 2}
                \Big] \Big\} }
              {\displaystyle \sum_{n=1}^N \exp \Big\{ - \frac{1}{\xi} \Big[ \text{div}^-_M \vec{\Bq}_n 
                + \frac{\mu_4}{2 \mu_3} \big( \Bu - c_n \big)^{\cdot 2} \Big] \Big\} }
 \,,~ a = 1, \ldots, N \,.    
\end{align*}

\noindent {\bfseries b) The Dual $(\vec{\Bq})$-problem:}
Given the convex set
$
K_M(1) = \Big\{ \vec{\Bq}_n = \big[ \Bq_{nm} \big]_{m=0}^{M-1} \in X^M ~:~ \abs{\vec{\Bq}_n} = \sqrt{\sum_{m=0}^{M-1} \Bq_{nm}^{\cdot 2}} \leq 1\Big\} 
$, 
the dual problem of (\ref{eq:pproblem:smoothedprimaldual}) is
\begin{align*}
 \max_{\vec{\Bq} \in \big[ K_M(1) \big]^N} \bigg\{ 
 \cL^\text{D}_{\xi>0} (\vec{\Bq}) = - \xi \sum_{\Bk \in \Omega} \log \Big[ \sum_{n=1}^N \exp \Big\{ 
 -\frac{1}{\xi} \Big[ \text{div}^-_M \vec{q}_n[\Bk] 
 + \frac{\mu_4}{2\mu_3} \big( u[\Bk] - c_n \big)^2
 \Big]
 \Big\}
 \bigg\} \,.
\end{align*}
Due to the pixel independence, we consider the problem at $\Bk = \Bx \in \Omega \,, n=a, a = 1, \ldots, N$
and with the KKT condition (with the Lagrange multiplier $\lambda[\Bk] > 0$) for the inequality constraint, we have
\begin{align} \label{eq:KKTCondition:dualqproblem}
 \frac{ \partial \cL^\text{D}_{\xi>0} (\vec{\Bq}) }{\partial \vec{q}_a[\Bx]}
 + \lambda[\Bx] \underbrace{ \frac{\partial}{\partial \vec{q}_a[\Bx]} \Big\{ \abs{\vec{q}_a[\Bx]}^2 - 1 \Big\} }
                          _{ = 2 \vec{q}_a[\Bx] }
 = 0 \,,~ a = 1, \ldots, N \,, \Bx \in \Omega
\end{align}
with
\begin{align*}
 \frac{ \partial \cL^\text{D}_{\xi>0} (\vec{\Bq}) }{\partial \vec{q}_a[\Bx]} &= - \xi \frac{\partial}{\partial \vec{q}_a[\Bx]}
 \bigg\{ \log \Big[ \sum_{n=1}^N   \exp \Big\{ -\frac{1}{\xi}  \Big[ \text{div}^-_M \vec{q}_n[\Bx]
 + \frac{\mu_4}{2\mu_3} \big( u[\Bx] - c_n \big)^2
 \Big] \Big\} \Big] 
 \bigg\}
 \\
 &= -\xi (-\frac{1}{\xi}) \underbrace{ \frac{\partial}{\partial \vec{q}_a[\Bx]} \Big\{ \text{div}^-_M \vec{q}_a[\Bx] \Big\} }
                                    _{ = (\text{div}^-_M)^* \frac{\partial \vec{q}_a[\Bx]}{\partial \vec{q}_a[\Bx]} = (-\nabla^+_M) \delta[\Bx]} 
    \underbrace{
    \frac{ \exp \Big\{ -\frac{1}{\xi}
           \Big[ \text{div}^-_M \vec{q}_a[\Bx] 
           + \frac{\mu_4}{2 \mu_3} \big( u[\Bx] - c_a \big)^2 \Big]
           \Big\} }
         { \displaystyle \sum_{n=1}^N \exp \Big\{ -\frac{1}{\xi} \Big[ \text{div}^-_M \vec{q}_n[\Bx]
           + \frac{\mu_4}{2 \mu_3} \big( u[\Bx] - c_n \big)^2
           \Big] \Big\} }
    }_{ = p_a[\Bx] }
 \\
 &= -\nabla^+_M p_a[\Bx] \,.
\end{align*}
As in (\ref{eq:KKT:1}), given $a = 1, \ldots, N \,, \Bx \in \Omega$, 
we rewrite the KKT condition (\ref{eq:KKTCondition:dualqproblem}) in vector form as
\begin{align*}  
 -\nabla^+_M p_a[\Bx] + 2 \lambda[\Bx] \vec{q}_a[\Bx] = 0
 ~\Leftrightarrow~
 \lambda[\Bx] = \frac{1}{2} \abs{\nabla_M^+ p_a[\Bx]} 
\end{align*}
and thus
\begin{align*}
 -\nabla_M^+ p_a[\Bx] ~+~ \abs{\nabla_M^+ p_a[\Bx]} \vec{q}_a[\Bx] ~=~ 0 \,.
\end{align*}
Applying gradient descent and the fixed point algorithm with $t = 1, \ldots$, we have
\begin{align*}  
 &-\nabla_M^+ p_a[\Bx] ~+~ \abs{\nabla_M^+ p_a[\Bx]} \vec{q}_a^{(t)}[\Bx] ~=~ - \frac{ \vec{q}_a^{(t)}[\Bx] - \vec{q}_a^{(t-1)}[\Bx] }{ \tau } \,,
\end{align*}
and the solution in matrix form with 
$\displaystyle  \vec{\Bq}_a = \big[ \Bq_{am} \big]_{m=0}^{M-1}$ and
$\nabla^+_M \Bp_a = \big[ \partial^+_m \Bp_a \big]_{m=0}^{M-1}$ is given by
\begin{align}  \label{eq:SmoothedDualModel:qsolution:proof}
 \vec{\Bq}_a^{(t)} = \frac{ \vec{\Bq}_a^{(t-1)} + \tau \nabla^+_M \Bp_a }{ 1 + \tau \abs{\nabla^+_M \Bp_a} }
 \,,~~ a = 1, \ldots, N \,.
\end{align}

\end{proof}


\begin{prop} \label{prop:bilevel:smoothedprimaldualproblem}
 Given a primal-dual model (\ref{eq:BiLevelMinimization:2:step2:primaldual}) in the bilevel-SHT, 
 a smoothed primal-dual model is defined as
 \begin{equation} \label{eq:BiLevelMinimization:2:step2:SmoothedPrimalDual1:proof}
  \max_{ \vec{\Bq} \in \big[K_M(1)\big]^N }
  \underbrace{
  \min_{ \vec{\Bp} \in \mathcal Q_+} \bigg\{
  \mathcal L^\text{PD}_{\xi>0}(\vec{\Bp}; \vec{\Bq}) = 
  \sum_{n=1}^N \Big\langle \Bp_n \,, \frac{\mu_3}{2} \big(\Bu - c_n \big)^{.2} + \text{div}^-_M \vec{\Bq}_n \Big\rangle_{\ell_2}
  + \xi \sum_{n=1}^N \Big\langle \Bp_n \,, \log \Bp_n \Big\rangle_{\ell_2} 
  \bigg\}
  }_{\displaystyle = \cL^\text{D}_{\xi>0} (\vec{\Bq}) := -\xi \sum_{\Bk \in \Omega} \log \Big[ \sum_{n=1}^N
     \exp \Big\{ -\frac{1}{\xi} \Big[ \frac{\mu_3}{2} (u[\Bk] - c_n)^2 + \text{div}^-_M \vec{q}_n[\Bk] \Big] \Big\}
     \Big] } \,.
 \end{equation} 
At iteration $t$, the primal and dual solutions are given by
 \begin{align*}
  \Bp_n^{(t)} &= \frac{\displaystyle \exp\left[ - \frac{1}{\xi} \left[ \frac{\mu_3}{2} \big(\Bu - c_n \big)^{.2} + \text{div}^-_M \vec{\Bq}_n^{(t-1)} \right]\right] }
                  {\displaystyle \sum_{i=1}^N \exp\bigg[ - \frac{1}{\xi} \Big[ \frac{\mu_3}{2} \big(\Bu - c_i \big)^{.2} + \text{div}^-_M \vec{\Bq}_i^{(t-1)} \Big]\bigg]  }
  \\& 
  = \frac{\displaystyle \exp\left[ - \frac{1}{\xi} \left[ \frac{\mu_3}{2} \big(\Bu - c_n \big)^{.2} - \sum_{m=0}^{M-1} \left[ \cos\left(\frac{\pi m}{M}\right) \Bq_{nm}^{(t-1)} \BDn + \sin\left(\frac{\pi m}{M}\right) \BDmT \Bq_{nm}^{(t-1)} \right] \right]\right] }
         {\displaystyle \sum_{i=1}^N \exp\left[ - \frac{1}{\xi} \left[ \frac{\mu_3}{2} \big(\Bu - c_i \big)^{.2} - \sum_{m=0}^{M-1} \left[ \cos\left(\frac{\pi m}{M}\right) \Bq_{im}^{(t-1)} \BDn + \sin\left(\frac{\pi m}{M}\right) \BDmT \Bq_{im}^{(t-1)} \right] \right]\right]  }
  \,,~~ n = 1, \ldots, N \,,
 \end{align*}
 and
 \begin{align*}
  \vec{\Bq}_n^{(t)} = \frac{ \vec{\Bq}_n^{(t-1)} + \tau \nabla^+_M \Bp_n^{(t)} }{ 1 + \tau \abs{\nabla^+_M \Bp_n^{(t)}} }
  \,,~~ n = 1, \ldots, N \,,
 \end{align*}
 or 
 \begin{align*}
  \Bq_{nm}^{(t)} = \frac{\displaystyle \Bq_{nm}^{(t-1)} + \tau \left[ \cos\left(\frac{\pi m}{M}\right) \Bp_n^{(t)} \BDnT + \sin\left(\frac{\pi m}{M}\right) \BDm \Bp_n^{(t)} \right] }
                        {\displaystyle 1 + \tau \left[ \sum_{m=0}^{M-1} \left[ \cos\left(\frac{\pi m}{M}\right) \Bp_n^{(t)} \BDnT + \sin\left(\frac{\pi m}{M}\right) \BDm \Bp_n^{(t)} \right]^{.2} \right]^{.\frac{1}{2}} }
  \,,~~ n = 1, \ldots, N \,,~ m = 0, \ldots, M-1.
 \end{align*}
  
\end{prop}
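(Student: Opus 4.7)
The plan is to mirror the proof of Proposition \ref{prop:combinedmodel:pproblem} almost verbatim, since the smoothed primal-dual problem (\ref{eq:BiLevelMinimization:2:step2:SmoothedPrimalDual1:proof}) has exactly the same structure: a linear term in $\vec{\Bp}$ coupling to the dual divergence $\text{div}^-_M \vec{\Bq}_n$, an entropic regularization $\xi \langle \Bp_n, \log \Bp_n \rangle_{\ell_2}$, a pixel-wise simplex constraint $\vec{\Bp} \in \mathcal Q_+$, and a pixel-wise $\ell_\infty$-ball constraint on the dual. The only bookkeeping difference is that the weight $\mu_4/(2\mu_3)$ appearing in Proposition \ref{prop:combinedmodel:pproblem} is replaced here by $\mu_3/2$. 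So the strategy is: (i) solve the inner primal $\vec{\Bp}$-problem in closed form via ALM on $\sum_n \Bp_n = 1$, (ii) solve the outer dual $\vec{\Bq}$-problem via KKT on $\abs{\vec{\Bq}_n}\leq 1$ combined with a semi-implicit gradient-descent fixed point à la Chambolle.

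For the primal step, I fix $\vec{\Bq}$ and append a multiplier term $\langle \boldsymbol{\lambda}, \sum_n \Bp_n - 1\rangle_{\ell_2}$ (plus an optional quadratic penalty, which vanishes at optimality). Differentiating $\cL^\text{PD}_{\xi>0}(\vec{\Bp};\vec{\Bq})$ with respect to $\Bp_n$ gives the Euler–Lagrange equation $\frac{\mu_3}{2}(\Bu-c_n)^{.2} + \text{div}^-_M\vec{\Bq}_n + \xi(1+\log \Bp_n) + \boldsymbol{\lambda} = 0$, so $\Bp_n = \exp\{-\frac{1}{\xi}[\frac{\mu_3}{2}(\Bu-c_n)^{.2}+\text{div}^-_M\vec{\Bq}_n]\}/e^{1+\boldsymbol{\lambda}/\xi}$. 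Summing over $n$ and using $\sum_n \Bp_n = 1$ eliminates $\boldsymbol{\lambda}$ and yields the softmax formula stated in the proposition. Substituting the expanded form $\text{div}^-_M \vec{\Bq}_n = -\sum_{m=0}^{M-1}\big[\sin(\pi m/M)\BDmT \Bq_{nm} + \cos(\pi m/M)\Bq_{nm}\BDn\big]$ gives the second displayed expression.

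For the dual step, I maximize $\cL^\text{D}_{\xi>0}(\vec{\Bq})$ as given in the underbrace of (\ref{eq:BiLevelMinimization:2:step2:SmoothedPrimalDual1:proof}). Writing the KKT condition pointwise with multiplier $\lambda[\Bx]>0$ for the constraint $\abs{\vec{q}_n[\Bx]}^2 - 1 \leq 0$, I get $\partial \cL^\text{D}_{\xi>0}/\partial \vec{q}_n[\Bx] + 2\lambda[\Bx]\vec{q}_n[\Bx] = 0$. The key computation, which is the analogue of the chain-rule step in Proposition \ref{prop:combinedmodel:pproblem}, is that differentiating the log-sum-exp through $\text{div}^-_M$ (whose derivative with respect to $\vec{q}_n$ is $-\nabla^+_M \delta$) produces exactly $-\nabla^+_M \Bp_n[\Bx]$, where $\Bp_n$ is the softmax computed in the primal step. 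Solving for $\lambda[\Bx] = \tfrac{1}{2}\abs{\nabla^+_M \Bp_n[\Bx]}$ as in (\ref{eq:KKT:1}) and substituting back gives the fixed-point equation $-\nabla^+_M \Bp_n + \abs{\nabla^+_M \Bp_n}\,\vec{\Bq}_n = 0$.

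Finally I apply Chambolle's semi-implicit gradient-descent scheme with step $\tau$, treating the denominator $\abs{\nabla^+_M \Bp_n}$ explicitly and $\vec{\Bq}_n$ implicitly, i.e. $-(\vec{\Bq}_n^{(t)}-\vec{\Bq}_n^{(t-1)})/\tau = -\nabla^+_M \Bp_n^{(t)} + \abs{\nabla^+_M \Bp_n^{(t)}}\,\vec{\Bq}_n^{(t)}$, and solve algebraically for $\vec{\Bq}_n^{(t)}$ to obtain the vector-form update stated. Expanding $\nabla^+_M$ component-wise via $\partial_m^+ = \sin(\pi m/M)\BDm\,\cdot + \cos(\pi m/M)\,\cdot\,\BDnT$ and $\abs{\nabla^+_M \Bp_n} = [\sum_m (\partial_m^+ \Bp_n)^{.2}]^{.1/2}$ yields the element-form update. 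The only real obstacle is keeping the adjoint bookkeeping consistent between $\nabla^+_M$ and $\text{div}^-_M = -(\nabla^+_M)^*$ with the signed sine/cosine coefficients, which is identical to what was already carried out in Proposition \ref{prop:combinedmodel:pproblem}; everything else is an immediate transcription.
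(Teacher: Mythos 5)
Your proposal is correct and follows essentially the same route as the paper: ALM on the simplex constraint plus elimination of the multiplier to get the softmax for $\vec{\Bp}$, then the pointwise KKT condition on $\abs{\vec{\Bq}_n}\leq 1$ combined with Chambolle's semi-implicit fixed-point scheme for $\vec{\Bq}$, exactly mirroring Proposition \ref{prop:combinedmodel:pproblem} with $\mu_4/(2\mu_3)$ replaced by $\mu_3/2$. The only piece you gloss over is the paper's explicit verification (via the log-sum-exp/entropy conjugacy) that the inner minimum equals the smoothed dual $\cL^\text{D}_{\xi>0}(\vec{\Bq})$ in the underbrace, but this follows from substituting your softmax minimizer back into $\cL^\text{PD}_{\xi>0}$ and is not a substantive gap.
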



\begin{proof}
As in Section \ref{sec:CombinedModel}, with $\vec{\Bp} \in \mathcal Q_+$,
 a dual-primal model (\ref{eq:BiLevelMinimization:2:step2:primaldual}) is recast as the dual model
 \begin{equation} \label{eq:BiLevelMinimization:2:step2:dual:proof}
  \max_{ \vec{\Bq} \in \big[K_M(1)\big]^N } \bigg\{ \mathcal L^\text{D}(\vec{\Bq}) =
  \sum_{\Bk \in \Omega} \min \Big\{ 
  \frac{\mu_3}{2} \big(u[\Bk] - c_1 \big)^2 + \text{div}^-_M \vec{q}_1[\Bk], \ldots,
  \frac{\mu_3}{2} \big(u[\Bk] - c_N \big)^2 + \text{div}^-_M \vec{q}_N[\Bk] \Big\}  
  \bigg\} \,.
 \end{equation}
 Due to the min operator in the energy function $\mathcal L_\text{D}(\cdot)$,
 the minimization in (\ref{eq:BiLevelMinimization:2:step2:dual:proof}) is a non-smooth 
 dual model whose smoothed version (called a smoothed dual model) is defined as 
 \begin{align}  \label{eq:BiLevelMinimization:2:step2:SmoothedDual:proof}
  \max_{ \vec{\Bq} \in \big[K_M(1)\big]^N } \left\{
  \mathcal L^\text{D}_{\xi>0}(\vec{\Bq}) =
  - \xi \sum_{\Bk \in \Omega} \log \left[ \sum_{n=1}^N \exp 
  \left\{ \frac{ -\frac{\mu_3}{2} \big(u[\Bk] - c_n \big)^2 - \text{div}^-_M \vec{q}_n[\Bk] }{\xi} \right\} \right]
  \right\} \,.
 \end{align} 
 Note that $\mathcal L^\text{D}(\vec{\Bq}) \approx \mathcal L^\text{D}_{\xi>0}(\vec{\Bq})$.
 By the smooth log-sum function in convex analysis \cite{Rockafellar1970} and the pixel independence,
 a smoothed dual energy function in (\ref{eq:BiLevelMinimization:2:step2:SmoothedDual:proof}) can be rewritten as
 \begin{align*} 
  \mathcal L^\text{D}_{\xi>0}(\vec{\Bq}) 
  &= - \xi \sum_{\Bk \in \Omega} \max_{\vec{p}[\Bk] \in \mathcal Q_{+\Bk}} \Big\{
  - \frac{1}{\xi} \Big\langle \vec{p}[\Bk] \,, \frac{\mu_3}{2} \big(u[\Bk] - \vec{c} \big)^2 + \text{div}^-_M \vec{q}[\Bk] \Big\rangle_{\ell_2}
  - \sum_{n=1}^N p_n[\Bk] \log p_n[\Bk]
  \Big\} 
  \\
  &= \min_{\vec{\Bp} \in \mathcal Q_+} \bigg\{
  \sum_{n=1}^N \underbrace{ \sum_{\Bk \in \Omega} p_n[\Bk] \Big[ \frac{\mu_3}{2} \big(u[\Bk] - c_n \big)^2 + \text{div}^-_M \vec{q}_n[\Bk] \Big] }
                         _{ = \big\langle \Bp_n \,, \frac{\mu_3}{2} (\Bu - c_n)^{.2} + \text{div}^-_M \vec{\Bq}_n \big\rangle_{\ell_2} }
  + \xi \sum_{n=1}^N \underbrace{ \sum_{\Bk \in \Omega} p_n[\Bk] \log p_n[\Bk] }
                             _{ = \big\langle \Bp_n \,, \log \Bp_n \big\rangle_{\ell_2} }
  \bigg\}   \,.
 \end{align*}
 Thus, we rewrite (\ref{eq:BiLevelMinimization:2:step2:SmoothedDual:proof}) as a smoothed primal-dual model
 \begin{equation} \label{eq:BiLevelMinimization:2:step2:SmoothedPrimalDual:proof}
  \max_{ \vec{\Bq} \in \big[K_M(1)\big]^N }
  \bigg\{
  \mathcal L^\text{D}_{\xi>0}(\vec{\Bq}) =
  \min_{ \vec{\Bp} \in \mathcal Q_+} 
  \Big\{ \mathcal L^\text{PD}_{\xi>0}(\vec{\Bp}; \vec{\Bq}) = 
  \sum_{n=1}^N \Big\langle \Bp_n \,, \frac{\mu_3}{2} \big(\Bu - c_n \big)^{.2} + \text{div}^-_M \vec{\Bq}_n \Big\rangle_{\ell_2}
  + \xi \sum_{n=1}^N \Big\langle \Bp_n \,, \log \Bp_n \Big\rangle_{\ell_2}  
  \Big\}
  \bigg\}
  \,.
 \end{equation}

\noindent {\bfseries The primal $\vec{\Bp}$-problem:}
 We apply ALM for the equality constraint $\sum_{n=1}^N \Bp_n = 1$ in a set $\mathcal Q_+$ to
 the primal $\vec{\Bp}$ problem of (\ref{eq:BiLevelMinimization:2:step2:SmoothedPrimalDual:proof}) as
 \begin{align} \label{eq:BiLevelMinimization:2:step2:SmoothedPrimalDual:pproblem}
  \min_{0 < \vec{\Bp} \in X^N} \max_{\boldsymbol{\lambda} \in X} \Bigg\{ \mathcal L^\text{PD}_{\xi>0} (\vec{\Bp}; \boldsymbol{\lambda}) = 
  &\sum_{n=1}^N \Big\langle \Bp_n \,, \frac{\mu_3}{2} \big(\Bu - c_n \big)^{.2} + \text{div}^-_M \vec{\Bq}_n \Big\rangle_{\ell_2}
  + \xi \sum_{n=1}^N \Big\langle \Bp_n \,, \log \Bp_n \Big\rangle_{\ell_2}    \notag
  \\&
  + \big\langle \boldsymbol{\lambda} \,, \sum_{n=1}^N \Bp_n - 1 \big\rangle_{\ell_2} + \frac{\gamma}{2} \norm{ \sum_{n=1}^N \Bp_n - 1 }^2_{\ell_2}
  \Bigg\}
 \end{align} 
 Due to the separability in $\vec{\Bp}$ and $\boldsymbol{\lambda}$, the Euler Lagrange equations of the minimization problem 
 (\ref{eq:BiLevelMinimization:2:step2:SmoothedPrimalDual:pproblem})
 w.r.t. a primal variable $\Bp_n \mid_{n=a} \,, a = 1, \ldots, N$  and the Lagrange multiplier $\boldsymbol \lambda$ are
 \begin{align}  \label{eq:BiLevelMinimization:2:step2:SmoothedPrimalDual:pproblem:solution}
  &\begin{cases}
   \displaystyle
   0 = \frac{\partial \mathcal L^\text{PD}_{\xi>0}( \vec{\Bp}; \boldsymbol{\lambda} )}{\partial \boldsymbol{\lambda}} = 
   \sum_{n=1}^N \Bp_n - 1   \notag
   \\ \displaystyle
   0 = \frac{\partial \mathcal L^\text{PD}_{\xi>0}( \vec{\Bp}; \boldsymbol{\lambda} )}{\partial \Bp_a}
   = \frac{\mu_3}{2} \big(\Bu - c_a \big)^{.2} + \text{div}^-_L \vec{\Bq}_a + \xi(1 + \log\Bp_a)
   + \boldsymbol{\lambda} + \gamma \Big[\underbrace{ \sum_{n=1}^N \Bp_n - 1 }_{= 0} \Big]  
  \end{cases}   \notag
  \\
  \Leftrightarrow~ &
  \Bp^*_a = \frac{\displaystyle \exp\bigg[ - \frac{1}{\xi} \Big[ \frac{\mu_3}{2} \big(\Bu - c_a \big)^{.2} + \text{div}^-_M \vec{\Bq}_a \Big]\bigg] }
                 {\displaystyle \sum_{n=1}^N \exp\bigg[ - \frac{1}{\xi} \Big[ \frac{\mu_3}{2} \big(\Bu - c_n \big)^{.2} + \text{div}^-_M \vec{\Bq}_n \Big]\bigg]  }
  \,,~ a = 1, \ldots, N.  
 \end{align}
 We observe that $\vec{\Bp} = \big[ \Bp_a \big]_{a=1}^N > 0$, i.e. $p_a[\Bk] > 0, \forall \Bk \in \Omega \,,~ a = 1, \ldots, N$.
 
\noindent {\bfseries The dual $\vec{\Bq}$-problem:}
 Due to the inequality constraint in a convex set $K_M(1)$,
 the KKT condition of the minimization in (\ref{eq:BiLevelMinimization:2:step2:SmoothedDual:proof})
 (or in a smoothed primal-dual model (\ref{eq:BiLevelMinimization:2:step2:SmoothedPrimalDual:proof})), 
 with given a Lagrange multiplier $\lambda[\Bx] > 0 \,, \forall \Bx \in \Omega$
 yields
 \begin{align}  \label{eq:KKTCondition}
  \frac{\partial \cL^D_{\xi>0} (\vec{\Bq})}{\partial \vec{q}_a[\Bx]} ~+~
  \lambda[\Bx] \underbrace{ \frac{\partial}{\partial \vec{q}_a[\Bx]} \bigg\{ \abs{\vec{q}_a[\Bx]}^2 - 1 \bigg\} }
                         _{\displaystyle = 2 \vec{q}_a[\Bx] }
  ~=~ 0 \,,~~ a = 1, \ldots, N \,, \Bx \in \Omega \,.
 \end{align}
Because of the pixel independence $\Bk \in \Omega$, in order to calculate $\frac{\partial \cL^D_{s>0} (\vec{\Bq})}{\partial \vec{\Bq}_a}$ (in matrix form),
 we consider this derivative in element form at $\Bk = \Bx \in \Omega$ and $n = a \,, a = 1, \ldots, N$:
 \begin{align*}
  \frac{\partial \cL^D_{\xi>0} (\vec{\Bq})}{\partial \vec{q}_a[\Bx]}  
  &= -\xi (-\frac{1}{\xi}) \underbrace{ \frac{\partial}{\partial \vec{q}_a[\Bx]} \Big\{ \text{div}^-_M \vec{q}_a[\Bx] \Big\} }
                                     _{ = \big( -\nabla_M^+ \big) \delta[\Bx] }
     \frac{\exp \Big\{ \frac{ - \frac{\mu_3}{2} \big(u[\Bx] - c_a \big)^2 - \text{div}^-_M \vec{q}_a[\Bx] }{\xi} \Big\} }
          {\displaystyle \sum_{n=1}^N \exp \Big\{ \frac{ -\frac{\mu_3}{2} \big(u[\Bx] - c_n \big)^2 - \text{div}^-_M \vec{q}_n[\Bx] }{\xi} \Big\} }
  \\&
  = -\nabla_M^+ \left\{
                \frac{\displaystyle \exp \bigg[ -\frac{1}{\xi} \Big[ \frac{\mu_3}{2} \big(u[\Bx] - c_a \big)^2 + \text{div}^-_M \vec{q}_a[\Bx] \Big] \bigg] 
                }
                {\displaystyle \sum_{n=1}^N \exp \bigg[ -\frac{1}{\xi} \Big[ \frac{\mu_3}{2} \big(u[\Bx] - c_n \big)^2 + \text{div}^-_M \vec{q}_n[\Bx] \Big] \bigg] }
                \right\}  
  \\&
  \stackrel{(\ref{eq:BiLevelMinimization:2:step2:SmoothedPrimalDual:pproblem:solution})}{=}
  -\nabla_M^+ p_a[\Bx] \,.
 \end{align*}
As in Section \ref{sec:CombinedModel} and \cite{Chambolle2004},
 the KKT condition (\ref{eq:KKTCondition}) (in vector form with $\vec{\Bq}_a = \big[ \Bq_{am} \big]_{m=1}^M$) 
can be rewritten as
 \begin{align*}
  -\nabla_M^+ p_a[\Bx] ~+~ \abs{\nabla_M^+ p_a[\Bx]} \vec{q}_a[\Bx] ~=~ 0 \,,~~ a = 1, \ldots, N 
 \end{align*}
for $\Bx \in \Omega$.  Applying gradient descent and the fixed point algorithm, we have
 \begin{align*}  
  -\nabla_M^+ p_a[\Bx] ~+~ \abs{\nabla_M^+ p_a[\Bx]} \vec{q}_a^{(t)}[\Bx] ~=~ - \frac{ \vec{q}_a^{(t)}[\Bx] - \vec{q}_a^{(t-1)}[\Bx] }{ \tau }   \,,
 \end{align*}
and the solution in matrix form with
 $\displaystyle  \vec{\Bq}_a = \big[ \Bq_{am} \big]_{m=0}^{M-1}$ and
 $\nabla^+_M \Bp_a = \big[ \partial^+_m \Bp_a \big]_{m=0}^{M-1}$ is given by
 \begin{align*}  
  \vec{\Bq}_a^{(t)} = \frac{ \vec{\Bq}_a^{(t-1)} + \tau \nabla^+_M \Bp_a }{ 1 + \tau \abs{\nabla^+_M \Bp_a} }
  \,,~~ a = 1, \ldots, N \,.
 \end{align*}

\end{proof}




\begin{algorithm}
\label{alg:2phasePiecewiseConst:1}
\caption{A two-phase piecewise constant \& texture segmentation}
\begin{algorithmic}
\small
\STATE
{
  {\bfseries I. Compute}
   $\Big( \big[ \Br_b^{(t)} \big]_{b=0}^{L-1} \,, \big[ \mathbf{w}_a^{(t)} \big]_{a=0}^{S-1} \,,
    \big[ \mathbf{g}_a^{(t)} \big]_{a=0}^{S-1} \,, \Bv^{(t)} \,, \Bu^{(t)} \,, \Beps^{(t)} \Big) \in X^{L+2S+3}
   $:
   \begin{align*}
    &\text{\bfseries 1.}~~ 
    c_1^{(t)} = \frac{ \sum_{\Bk \in \Omega} \big[ f[\Bk] - v^{(t-1)}[\Bk] - \epsilon^{(t-1)}[\Bk] + \frac{\lambda_4^{(t-1)}[\Bk]}{\beta_4} \big] p^{(t-1)}[\Bk] }{ \sum_{\Bk \in \Omega} p^{(t-1)}[\Bk] }   
    \\
    &\text{\bfseries 2.}~~ 
    c_2^{(t)} = \frac{ \sum_{\Bk \in \Omega} \big[ f[\Bk] - v^{(t-1)}[\Bk] - \epsilon^{(t-1)}[\Bk] + \frac{\lambda_4^{(t-1)}[\Bk]}{\beta_4} \big] \big[ 1 - p^{(t-1)}[\Bk] \big] }{ \sum_{\Bk \in \Omega} (1 - p^{(t-1)}[\Bk]) }    
    \\
    &\text{\bfseries 3.}~~ 
    \Br_a^{(t)} = \Shrink \Big( \sin\left(\frac{\pi a}{L}\right) \BDm \Bp^{(t-1)} + \cos\left(\frac{\pi a}{L}\right) \Bp^{(t-1)} \BDnT - \frac{\boldsymbol{\lambda}_{\boldsymbol 1 a}^{(t-1)}}{\beta_1} \,, \frac{1}{\beta_1} \Big)
    \,,~~ a = 0, \ldots, L-1
    \\
    &\text{\bfseries 4.}~~ 
    \Bw_a^{(t)} = \Shrink \Big( \Bg_a^{(t-1)} - \frac{\boldsymbol{\lambda}_{\boldsymbol 2 a}^{(t-1)}}{\beta_2} \,, \frac{\mu_1}{\beta_2} \Big)
    \,, a = 0, \ldots, S-1
    \\
    &\text{\bfseries 5.}~~ 
    \Bg_a^{(t)} = \RE \left[ \cF^{-1} \left\{ \frac{\mathcal B_a^{(t)}(\Bz)}{\mathcal A_a^{(t)}(\Bz)} \right\} \right] [\Bk] \Bigg |_{\Bk \in \Omega}
    \,, a = 0, \ldots, S-1
    \\& \qquad
    \mathcal A_a^{(t)}(\Bz) = \beta_2 + \beta_3 \abs{ \sin\left(\frac{\pi a}{S}\right)(z_1 - 1) + \cos\left(\frac{\pi a}{S}\right)(z_2 - 1) }^2 
    \\& \qquad 
    \mathcal B_a^{(t)}(\Bz) = \beta_2 \Big[ W_a^{(t)}(\Bz) + \frac{\Lambda_{2a}^{(t-1)}(\Bz)}{\beta_2} \Big] 
    + \beta_3 \Big[ \sin\left(\frac{\pi a}{S}\right)(z_1^{-1} - 1) + \cos\left(\frac{\pi a}{S}\right) (z_2^{-1} - 1) \Big] \times
    \\& \qquad \qquad \qquad
    \Big[ V^{(t-1)}(\Bz) - \sum_{s = [0, S-1] \backslash \{a\}} \big[ \sin\left(\frac{\pi s}{S}\right)(z_1 - 1) + \cos\left(\frac{\pi s}{S}\right)(z_2 - 1) \big] G_s^{(t-1)}(\Bz) + \frac{\Lambda_3^{(t-1)}(\Bz)}{\beta_3} \Big]
    \\
    &\text{\bfseries 6.}~~ 
    \Bv^{(t)} = \Shrink \Big( \mathcal J^{(t)} \,, \frac{\mu_2}{\beta_3 + \beta_4} \Big)
    \\& \qquad 
    \mathcal J^{(t)} = \frac{\beta_3}{\beta_3 + \beta_4} \Big[ \sum_{s=0}^{S-1} \big[ \sin\left(\frac{\pi s}{S}\right) \BDm \Bg_s^{(t)} + \cos\left(\frac{\pi s}{S}\right)\Bg_s^{(t)} \BDnT \big] - \frac{\boldsymbol{\lambda}_{\boldsymbol 3}^{(t-1)}}{\beta_3} \Big]
    \\& \qquad \qquad
    + \frac{\beta_4}{\beta_3 + \beta_4} \big[ \Bf - c_1^{(t)} - \Beps^{(t-1)} + \frac{\boldsymbol{\lambda}_{\boldsymbol 4}^{(t-1)}}{\beta_4} \big] \cdot^\times \Bp^{(t-1)} 
    + \frac{\beta_4}{\beta_3 + \beta_4} \big[ \Bf - c_2^{(t)} - \Beps^{(t-1)} + \frac{\boldsymbol{\lambda}_{\boldsymbol 4}^{(t-1)}}{\beta_4} \big] \cdot^\times \big[ 1 - \Bp^{(t-1)} \big]
   \end{align*} 
 }
\end{algorithmic}
\end{algorithm}


\begin{algorithm}
\label{alg:2phasePiecewiseConst:2}
\begin{algorithmic}
\small
\STATE
{ {\bfseries (continued Algorithm 5)}
  \begin{align*}
    &\text{\bfseries 7.}~~ 
    p^{(t)}[\Bk] = \begin{cases}
               0 \,, & \tilde p^{(t)}[\Bk] < 0 \\
               1 \,, & \tilde p^{(t)}[\Bk] > 1 \\
               \tilde p^{(t)}[\Bk] \,, & \tilde p^{(t)}[\Bk] \in [0 \,, 1]
              \end{cases}
    \,,~~ \tilde p^{(t)}[\Bk] = \RE \Big[ \cF^{-1} \Big\{ \frac{\mathcal Y^{(t)}(\Bz)}{\mathcal X^{(t)} (\Bz)} \Big\} \Big] [\Bk] \,,~ \Bk \in \Omega 
    \\& \qquad
    \mathcal X^{(t)}(\Bz) = \beta_1 \sum_{l=0}^{L-1} \abs{ \sin\left(\frac{\pi l}{L}\right) (z_1 - 1) + \cos\left(\frac{\pi l}{L}\right) (z_2 - 1) }^2 + \beta_4
    \\& \qquad
    \mathcal Y^{(t)}(\Bz) = \beta_1 \sum_{l=0}^{L-1} 
    \Big[ \sin\left(\frac{\pi l}{L}\right) (z_1^{-1} - 1) + \cos\left(\frac{\pi l}{L}\right) (z_2^{-1} - 1) \Big]
    \Big[ R_l^{(t)}(\Bz) + \frac{\Lambda_{1 l}^{(t-1)}(\Bz)}{\beta_1} \Big] 
    + \beta_4 \mathcal H^{(t)}(\Bz)
    \\& \qquad
    \mathcal H^{(t)}(\Bz) = 
    \cF \left\{ -\frac{1}{2}\left[ \Bf - c_1^{(t)} - \Bv^{(t)} - \Beps^{(t-1)} + \frac{\boldsymbol{\lambda_4}^{(t-1)}}{\beta_4} \right]^{\cdot 2} 
    + \frac{1}{2} \left[ \Bf - c_2^{(t)} - \Bv^{(t)} - \Beps^{(t-1)} + \frac{\boldsymbol{\lambda}_{\boldsymbol 4}^{(t-1)}}{\beta_4} \right]^{\cdot 2}
    \right\} (\Bz)
   \\
   &\text{\bfseries 8.}~~ 
   \Beps^{(t)} = \tilde{\Beps}^{(t)} - \CST \big( \tilde{\Beps}^{(t)} \,, \nu \big) 
   \\& \qquad
   \tilde{\Beps}^{(t)} = \left[ \Bf - c_1^{(t)} - \Bv^{(t)} + \frac{\boldsymbol{\lambda}_{\boldsymbol 4}^{(t-1)}}{\beta_4} \right] \cdot^\times \Bp^{(t)} 
   + \left[ \Bf - c_2^{(t)} - \Bv^{(t)} + \frac{\boldsymbol{\lambda}_{\boldsymbol 4}^{(t-1)}}{\beta_4} \right] \cdot^\times \left[ 1 - \Bp^{(t)} \right]
  \end{align*}
  {\bfseries II. Update}
  $\Big( \big[\boldsymbol{\lambda}_{\mathbf{1}b}^{(t)}\big]_{b=0}^{L-1} \,, \big[\boldsymbol{\lambda}_{\mathbf{2}a}^{(t)}\big]_{a=0}^{S-1}
  \,, \boldsymbol{\lambda}_{\boldsymbol 3}^{(t)} \,, \boldsymbol{\lambda}_{\boldsymbol 4}^{(t)} \Big) \in X^{L+S+2}$:
  \begin{align*}
   \boldsymbol{\lambda}_{\boldsymbol 1 a}^{(t)} &= \boldsymbol{\lambda}_{\boldsymbol 1 a}^{(t-1)}
   + \beta_1 \Big[ \Br_a^{(t)} - \sin\left(\frac{\pi a}{L}\right) \BDm \Bp^{(t)} - \cos\left(\frac{\pi a}{L}\right) \Bp^{(t)} \BDnT \Big] \,,~ a = 0, \ldots, L-1
   \\
   \boldsymbol{\lambda}_{\boldsymbol 2 a}^{(t)} &= \boldsymbol{\lambda}_{\boldsymbol 2 a}^{(t-1)} 
   + \beta_2 \Big[ \Bw_a^{(t)} - \Bg_a^{(t)} \Big] \,,~ a = 0, \ldots, S-1
   \\
   \boldsymbol{\lambda}_{\boldsymbol 3}^{(t)} &= \boldsymbol{\lambda}_{\boldsymbol 3}^{(t-1)} 
   + \beta_3 \left[ \Bv^{(t)} - \sum_{s=0}^{S-1} \left[ \sin\left(\frac{\pi s}{S}\right) \BDm \Bg_s^{(t)} + \cos\left(\frac{\pi s}{S}\right) \Bg_s^{(t)} \BDnT \right] \right]
   \\
   \boldsymbol{\lambda}_{\boldsymbol 4}^{(t)} &= \boldsymbol{\lambda}_{\boldsymbol 4}^{(t-1)} 
   + \beta_4 \Big[ \big( \Bf - c_1^{(t)} - \Bv^{(t)} - \Beps^{(t)} \big) \cdot^\times \Bp^{(t)} 
   + \big( \Bf - c_2^{(t)} - \Bv^{(t)} - \Beps^{(t)} \big) \cdot^\times (1 - \Bp^{(t)}) \Big]
  \end{align*}
 }
\end{algorithmic}
\end{algorithm}


\end{document}